\pdfoutput=1
\documentclass[11pt]{article}
\PassOptionsToPackage{numbers}{natbib}
\PassOptionsToPackage{colorlinks=true,citecolor=blue,urlcolor=blue,linkcolor=blue}{hyperref}
\usepackage[preprint]{styles/neurips_2026}
\usepackage{xurl}
\usepackage{amsthm}
\usepackage{mathtools}
\usepackage{amssymb}
\usepackage{enumitem}
\usepackage{xcolor}
\usepackage{tikz}
\usepackage{pgfplots}
\usepackage{thmtools}
\usepackage{float}
\usepackage{booktabs}
\usepackage{tabularx}
\usepackage{graphicx}
\usepackage{orcidlink}
\usepackage{algorithm}
\usepackage{algpseudocode}
\usetikzlibrary{shapes,arrows,positioning,calc,decorations.pathmorphing}
\pgfplotsset{compat=1.18}

\theoremstyle{plain}
\newtheorem{theorem}{Theorem}[section]
\newtheorem{lemma}[theorem]{Lemma}
\newtheorem{proposition}[theorem]{Proposition}
\newtheorem{corollary}[theorem]{Corollary}
\theoremstyle{definition}
\newtheorem{definition}[theorem]{Definition}

\theoremstyle{remark}
\newtheorem{remark}[theorem]{Remark}
\newtheorem{conjecture}[theorem]{Conjecture}

\newenvironment{proofsketch}{\noindent\emph{Proof sketch.}}{\hfill$\square$\medskip}

\DeclareMathOperator{\TV}{TV}

\DeclareMathOperator*{\argmin}{arg\,min}
\DeclareMathOperator*{\argmax}{arg\,max}
\newcommand{\cW}{\mathcal{W}}

\newcommand{\cF}{\mathcal{F}}

\newcommand{\cH}{\mathcal{H}}
\newcommand{\bR}{\mathbb{R}}
\newcommand{\bE}{\mathbb{E}}
\newcommand{\bP}{\mathbb{P}}
\newcommand{\Piop}{\Pi^{\mathrm{op}}_{m,T}}
\newcommand{\Piclk}{\Pi^{\mathrm{clk}}_{m,T}}
\newcommand{\Qop}{Q^{\mathrm{op}}_{m,T}}
\newcommand{\Qclk}{Q^{\mathrm{clk}}_{m,T}}
\newcommand{\Gobs}{\mathcal{G}_{\mathrm{obs}}}

\title{The Myhill--Nerode Theorem for Bounded Interaction:\\
Canonical Abstractions via Agent-Bounded Indistinguishability}

\author{
Anthony T Nixon\thanks{
DefSig. Correspondence: \texttt{anthony@defsig.com}. \orcidlink{0000-0002-0289-0954}
}}

\date{}

\begin{document}

\maketitle

\begin{abstract}
Any capacity-limited observer induces a canonical quotient on its environment: two situations that no bounded agent can distinguish are, for that agent, the same.
We formalise this principle for finite POMDPs.
A fixed probe family of finite-state controllers induces a closed-loop Wasserstein pseudometric on observation histories and a probe-exact quotient that merges histories no controller in the family can distinguish.
The resulting quotient is canonical, minimal, and unique---a bounded-interaction analogue of the Myhill--Nerode theorem.
For clock-aware probes, the quotient is exactly decision-sufficient for agent-accessible objectives measurable with respect to the joint observation-action trajectory; for latent-state rewards, we instead rely on the observation-Lipschitz approximation bound.
The framework also suggests a reusable discrete template for a few neighboring finite or discretised settings; we record those illustrative analogies in the appendix rather than treating them as central contributions of the paper.
The exact theorem object is the clock-aware quotient; the scalable deterministic-stationary experiments study a tractable coarsening whose gap is quantified on small exact cases and treated empirically at larger scale.
We validate the theorem-level claims on Tiger and GridWorld and report operational case studies of the tractable coarsening on Tiger, GridWorld, and RockSample; these uncertified case studies are included as exploratory diagnostics of approximation behaviour and runtime, not as part of the paper's theorem-facing claim set once the exact cross-family certificate is unavailable. Heavier operational stress tests are archived separately in the appendix and artifact package.
\end{abstract}

\section{Introduction}
\label{sec:intro}

When are two environments ``the same''?  The answer depends on who is asking.
Two situations that no bounded observer can distinguish are, \emph{for that observer}, the same; the right abstraction is therefore matched to the observer's capacity.
This paper makes this principle precise for the finite discrete case and shows that the resulting quotient is canonical, minimal, and unique---a bounded-interaction analogue of the Myhill--Nerode theorem from formal language theory. We use the word \emph{analogue} deliberately: the classical suffix-test theorem is the motivating template, but our object lives on histories under closed-loop bounded control rather than as a literal restatement of the DFA setting. The analogy is structural---canonical minimal quotient under a finite equivalence relation---rather than algorithmic: unlike the classical DFA result, we do not claim a polynomial-time exact construction algorithm, and exact computation and scalable approximation are handled separately through the operational toolkit developed later in the paper.

Planning under partial observability is hard because both the history tree and the belief process explode with horizon. A natural response is to ask a bounded question: which distinctions in the environment matter to a bounded agent? If two histories induce the same closed-loop future behavior for every controller in a bounded probe class, then no controller in that class can exploit the distinction. The right abstraction target is therefore not the full belief space, but the quotient induced by bounded-agent indistinguishability.
In classical Myhill--Nerode terms, two prefixes are merged when no admissible future continuation can separate them; here the continuation is closed-loop and restricted to what a bounded controller can actually do.

This point of view yields a canonical-form result. For a fixed probe family, the \emph{probe-exact} quotient (exact for the stated probe family) is the unique minimal abstraction preserving all future observation laws visible to that family. We emphasize the use case that makes this object compelling: for objectives measurable with respect to what the bounded agent actually sees and does, the clock-aware exact quotient is decision-sufficient, not merely compressive. For latent-state rewards, the theory falls back to the approximate value-loss bound rather than exact sufficiency, and we make that limitation explicit in the abstract, experiments, and discussion.

This paper also makes a clean separation between theory and computation. The theorem-level results are stated for the clock-aware controller class $\Piclk$, where stage indexing prevents parameter reuse across time and therefore restores deterministic witness sufficiency. The experiments, by contrast, use the tractable operational family $\Piop = \Pi^{\mathrm{det,stat}}_{m,T}$ unless stated otherwise. The operational quotient $\Qop$ is a coarsening of the theorem object $\Qclk$; Section~\ref{sec:experiments} first quantifies that gap on small clock-aware exact cases and then reports operational case studies of $\Qop$ at larger scales. When that cross-family gap is measured on a tractable exact case, Theorem~\ref{thm:cross-family-transfer} turns it into an additive value-transfer certificate. When no such certificate is available, the larger tables are included only as empirical evidence about the tractable surrogate $\Qop$, not as direct empirical validation of the full clock-aware theorem object.

This paper makes the following contributions:
\begin{enumerate}[label=(\roman*),leftmargin=2em,itemsep=1pt]
\item A probe-family-parametrized closed-loop pseudometric on finite POMDPs, together with an observation-Lipschitz value-loss bound for partition-based approximate quotients (\S\ref{sec:pseudometric}; Theorem~\ref{thm:value-bound}).
\item A bounded-interaction Myhill--Nerode theorem: for a fixed probe family, the probe-exact quotient is canonical, minimal, and unique up to isomorphism (\S\ref{sec:quotient}; Theorem~\ref{thm:myhill-nerode}).
\item A clock-aware exact sufficiency theorem: for every objective in $\Gobs$ measurable with respect to the joint observation-action trajectory, the clock-aware quotient $\Qclk$ preserves expected return exactly and therefore preserves optimal value over $\Piclk$.
\item A witness boundary: deterministic sufficiency holds for clock-aware bounded controllers, while deterministic stationary controllers are shown explicitly to be insufficient in general.
\item A formal bridge from theorem to computation on \emph{measured exact cases}: if one probe family is contained in another, its quotient is a coarsening, and a measured cross-family probe gap yields an additive value-transfer bound on that same bounded regime. In particular, $\Qop$ is a tractable coarsening of $\Qclk$, with value loss controlled by $\varepsilon + \delta_{\mathrm{clk}}$ only when the clock-aware/operational gap $\delta_{\mathrm{clk}}$ is explicitly measured for the benchmark and horizon under discussion.
\item A companion operational toolkit built from controller-subset certificates, sampling-based probe estimation, layered horizon decomposition, and observation coarsening, together with exact small-case alignment for $\Qclk$ (Section~\ref{sec:decision-sufficiency}; Table~\ref{tab:observation-planning}) and exact-for-family deterministic-stationary certificates on small benchmarks (Table~\ref{tab:partition-agreement}). These certified strata participate in the paper's claim-evidence chain. Additional medium-scale case studies on GridWorld, random POMDPs, and RockSample are reported later as exploratory diagnostics of the tractable surrogate $\Qop$, not as further theorem-facing contribution claims. Heavier scale-up and long-horizon stress tests are reported separately as appendix-only archival evidence.
\end{enumerate}

Accordingly, whenever we use the phrase \emph{exact decision sufficiency} below, it refers only to objectives in $\Gobs$ measurable on the joint observation-action trajectory; latent-state rewards are covered only by the approximate observation-Lipschitz bound.
Likewise, whenever we invoke the cross-family transfer guarantee below, it refers only to the tractable exact cases where the corresponding $\delta_{\mathrm{clk}}$ is explicitly reported; no larger operational scaling row is claimed to inherit that certificate unless the matching gap measurement is shown for that same bounded regime.
The paper's theorem-facing evidence set is therefore exhausted by the theorem statements themselves, the clock-aware exact tables, and the deterministic-stationary rows carrying an explicit subset or cross-family certificate; larger uncertified operational tables are included only as exploratory diagnostics of $\Qop$.

At the witness boundary, deterministic sufficiency holds for the clock-aware FSC class of Theorem~\ref{thm:witness}, but Proposition~\ref{prop:stationary-counterexample} shows that this cannot be extended to stationary looping FSCs in general. The framework's parametrisation by $(m, T, \delta_O)$ makes the observer-capacity mismatch explicit without requiring a commitment to any broader operational reduction outside the finite POMDP setting.

\paragraph{Proof roadmap.}
The main proof structure is short. The pseudometric theorem is the Wasserstein triangle inequality plus maximisation over bounded closed-loop controllers. The value bound is a per-stage observation-Lipschitz transfer summed over horizon. The quotient theorem then combines right-invariance of history equivalence, induction on preserved observation laws, and a universal-object argument for minimality and uniqueness. The same ingredients are reused in the approximate and layered variants later in the paper.

\paragraph{Related work.}
\textit{State abstraction and bisimulation.}
State abstraction in MDPs has a rich theory: Li et al.~\citep{li2006towards} provide a taxonomy, Abel et al.~\citep{abel2016state} introduce agent-aware abstraction, and Abel~\citep{abel2022thesis} offers a comprehensive account grounded in category theory.
Bisimulation metrics~\citep{ferns2004metrics,ferns2011bisimulation} provide a continuous alternative to exact equivalence; Calo et al.~\citep{calo2024bisimulation} recently showed these are optimal-transport distances, and Kemertas and Aumentado-Armstrong~\citep{kemertas2022towards} extended metric learning to robust settings.
Deep bisimulation methods~\citep{gelada2019deepmdp,zhang2021learning,castro2020scalable} scale these ideas to high dimensions via learned encoders; our model-based framework provides a complementary theoretical target (Appendix~\ref{app:related}).

\textit{POMDP equivalences and predictive representations.}
For POMDPs, Castro~\citep{castro2009equivalence} formalized exact bisimulation, and Dean and Givan~\citep{dean1997model} introduced homogeneous partitions.
Two nearby traditions are worth separating. Restricting probes to open-loop action strings or constant-action FSCs recovers a notion approaching Castro-style conditional observation-sequence equivalence; probabilistic testing equivalence in the Larsen--Skou lineage \citep{larsen1991bisimulation} is close in spirit, but the tests there are external experiments on labelled stochastic processes, whereas ours are embodied as finite-memory controllers interacting with a controlled partially observable environment. The new ingredient is a closed-loop bounded-observer equivalence that is explicitly parameterized by controller capacity and materialized as a quotient POMDP.

Predictive state representations, observable operator models, and their controlled or spectral variants \citep{littman2001predictive,jaeger2000observable,boots2011closing,balle2014methods} define state through predictions of future observations, but through open-loop tests or Hankel-style predictive structure rather than closed-loop, bounded-capacity FSC probes. Our quotient is a closed-loop bounded-observer analogue of that predictive-state philosophy, with controller capacity $(m,T,\delta_O)$ replacing predictive rank as the organizing parameter. The comparison with controlled PSRs---which already incorporate actions---is structural rather than order-theoretic: PSR-style low-rank structure captures linear predictive redundancy, while our anchor-rank condition captures the stronger max-preserving redundancy needed for quotient construction (Appendix~\ref{app:structural-subset}).

\textit{Positioning.}
FSCs as policy representations are due to Poupart and Boutilier, Hansen, and Amato et al.~\citep{poupart2003bounded,hansen1998solving,amato2010optimizing}; we repurpose them as \emph{probes} determining abstraction granularity.
Our indistinguishability criterion is related in spirit to comparison-of-experiments ideas \citep{blackwell1953equivalent,torgersen1991comparison}, but the signal structure is closed-loop and policy-dependent.
Relative to exact POMDP bisimulation, the object here is history-based and explicitly parametrised by controller capacity $(m,T,\delta_O)$; relative to agent-aware abstraction, it yields a canonical minimal quotient preserving the full closed-loop observation law rather than a single value function.
Conceptually, the pseudometric is a history-space bounded-agent analogue of Wasserstein bisimulation metrics: the lineage is direct, but $D_T^\Pi$ compares finite-horizon observation-sequence laws and restricts the supremum to a prescribed bounded controller family. The novelty claim is the bounded-family quotient and its exact/approximate sufficiency consequences, not the mere use of Wasserstein distance.
Extended discussion and additional references are in Appendix~\ref{app:related}.

\begin{figure}[htbp]
\centering
\begin{tikzpicture}[
  box/.style={draw, rounded corners=3pt, minimum height=0.9cm, minimum width=2.0cm,
              font=\small, align=center, fill=blue!6},
  ubox/.style={draw, rounded corners=3pt, minimum height=0.9cm, minimum width=2.2cm,
              font=\small, align=center, fill=orange!8},
  arr/.style={->, thick, >=stealth},
  lbl/.style={font=\scriptsize, midway, above}
]
\node[font=\small\bfseries, anchor=west] at (-0.5, 1.7) {(a) Universal schema};
\node[ubox] (hidden) at (0,0.9) {Hidden\\State};
\node[ubox] (channel) at (3.2,0.9) {Observation\\Channel};
\node[ubox] (observer) at (6.8,0.9) {Bounded Observer\\$(m, T, \delta_O)$};
\node[ubox] (canonical) at (10.4,0.9) {Canonical\\Quotient};
\draw[arr] (hidden) -- (channel);
\draw[arr] (channel) -- (observer);
\draw[arr] (observer) -- (canonical);
\node[font=\small\bfseries, anchor=west] at (-0.5, -0.6) {(b) POMDP instantiation};
\node[box] (hist) at (0,-1.5) {Histories\\$O^{\le T}$};
\node[box] (fsc) at (2.8,-1.5) {FSC Probes\\$\Pi_{m,T}$};
\node[box] (dist) at (5.6,-1.5) {$\cW_1$ Distance\\Matrix};
\node[box] (clust) at (8.4,-1.5) {$\varepsilon$-Clustering};
\node[box] (quot) at (11.2,-1.5) {Quotient\\POMDP};
\draw[arr] (hist) -- (fsc);
\draw[arr] (fsc) -- (dist);
\draw[arr] (dist) -- (clust);
\draw[arr] (clust) -- (quot);
\node[font=\scriptsize, below=0.15cm of dist, text=gray] {max over $\pi$};
\node[font=\scriptsize, below=0.15cm of clust, text=gray] {complete linkage};
\node[font=\scriptsize, below=0.15cm of quot, text=gray] {Thm~\ref{thm:value-bound}: $|V^\pi - \tilde{V}^\pi| \le L_R T \varepsilon$};
\end{tikzpicture}
\caption{\textbf{(a)} Any system with hidden state, an observation channel, and a bounded observer $(m, T, \delta_O)$ admits a canonical quotient---a minimal equivalence over situations the observer cannot distinguish. \textbf{(b)} The POMDP instantiation: observation histories are probed by all bounded FSCs; pairwise $\cW_1$ distances (maximised over policies) are clustered; the quotient POMDP preserves all observation laws with certified value loss.}
\label{fig:pipeline}
\end{figure}
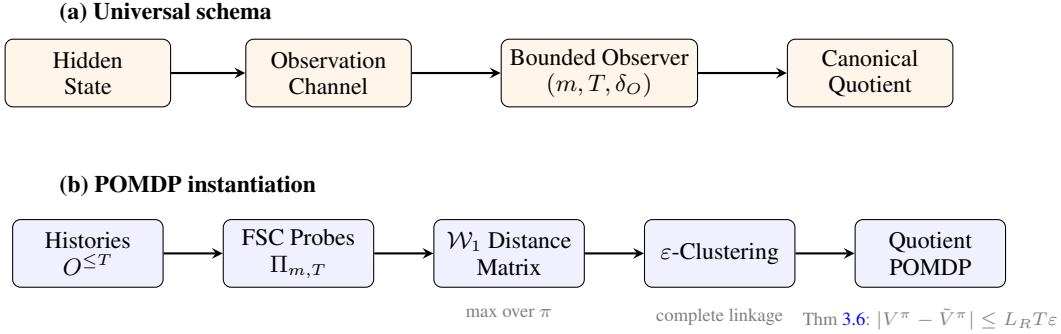

\section{Preliminaries}
\label{sec:prelim}

\begin{definition}[Finite POMDP]
\label{def:pomdp}
A finite POMDP is a tuple $M = \langle S, A, O, P, Z, R, b_0 \rangle$ where $S$ is a finite state set, $A$ a finite action set, $O$ a finite observation set, $P \colon S \times A \to \Delta(S)$ a transition kernel, $Z \colon S \times A \to \Delta(O)$ an observation kernel, $R \colon S \times A \to \bR$ a reward function, and $b_0 \in \Delta(S)$ an initial belief.
\end{definition}

\begin{definition}[Stochastic Finite-State Controller]
\label{def:fsc}
A stochastic FSC is a tuple $\pi = \langle N, \alpha, \beta, n_0 \rangle$ where $N$ is a finite set of internal nodes with $|N| \leq m$, $\alpha \colon N \to \Delta(A)$ is an action-selection function, $\beta \colon N \times O \to \Delta(N)$ is an internal-transition function, and $n_0 \in N$ is the initial node.
\end{definition}

Fix a memory bound $m$ and horizon $T$. We distinguish two probe families.
\begin{itemize}[leftmargin=1.5em,itemsep=1pt]
\item \textbf{Clock-aware probes} $\Piclk$: FSCs with stage-indexed maps $(\alpha_\tau,\beta_\tau)_{\tau=0}^{T-1}$. Equivalently, these are FSCs on node space $N \times \{0,\ldots,T-1\}$, where the external horizon clock does not count against the internal memory budget.
\item \textbf{Operational probes} $\Piop$: deterministic stationary FSCs with at most $m$ internal nodes. The fully enumerated large-scale experiments use this deterministic-stationary family as a tractable operational restriction.
\end{itemize}
Practically, a clock-aware FSC may change its action and node-update rule with the time index even if it revisits the same internal node, whereas a stationary FSC must reuse the same rule whenever that node is revisited. For example, a one-node clock-aware controller may ``listen'' at stage~1 and ``open'' at stage~2 while remaining in the same internal node; a one-node stationary controller cannot express that time-indexed switch without enlarging its state. This is why clock-awareness restores deterministic witness sufficiency in the theorem, while the stationary family is a stricter operational restriction.
We therefore use $\Piclk$ as a gold-standard bounded-observer family: it is partly proof-motivated, but it also models bounded agents whose internal logic can legitimately depend on a known stage or deadline (for example countdown-style or phase-based controllers). The stationary family $\Piop$ is the stricter operational deployment model when such external clocking is unavailable or intentionally disallowed.

For any fixed probe family $\Pi$, a policy $\pi \in \Pi$ interacting with POMDP $M$ induces a distribution $P_M^\pi$ over observation sequences $(o_1, \ldots, o_T) \in O^T$ and conditional suffix laws $P_M^\pi(O_{t+1:T}\mid h)$.

\section{Bounded Indistinguishability and the Wasserstein Pseudometric}
\label{sec:pseudometric}

Total variation is topologically brittle for model comparison \citep{ferns2004metrics}: it treats all non-identical observations as equally distant. When observations carry ordinal or spatial structure, TV forces overly fine partitions. The 1-Wasserstein distance $\cW_1$ exploits a ground metric $d_O$ on $O$ and degrades gracefully under perturbations. Let $d_O \colon O \times O \to \bR_{\geq 0}$ be a ground metric on observations, extended to sequences via $d_{O^T}((o_1,\ldots,o_T),(o_1',\ldots,o_T')) = \sum_{t=1}^T d_O(o_t, o_t')$.

\begin{definition}[Closed-loop Wasserstein pseudometric]
\label{def:pseudometric}
For two POMDPs $M$ and $N$ sharing $(A, O)$ and a fixed probe family $\Pi$:
\begin{equation}
\label{eq:pseudometric}
D_T^{\Pi}(M,N) := \sup_{\pi \in \Pi} \cW_1\bigl(P_M^\pi, P_N^\pi\bigr).
\end{equation}
When $\Pi = \Piclk$ or $\Pi = \Piop$, we write $D_{m,T}^{\mathrm{clk}}$ or $D_{m,T}^{\mathrm{op}}$ respectively.
\end{definition}

\begin{proposition}
\label{prop:pseudometric}
$D_T^\Pi$ is a pseudometric on POMDPs sharing $(A,O)$, with $D_T^\Pi(M,N) = 0$ iff $P_M^\pi = P_N^\pi$ for all $\pi \in \Pi$.
\end{proposition}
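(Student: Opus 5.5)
The plan is to reduce every pseudometric axiom to the corresponding property of $\cW_1$ on the finite space $O^T$, applied controller by controller, and then pass to the supremum over $\Pi$. Two preliminary points come first.

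\emph{Ground metric on sequences.} The metric $d_{O^T}$ defined as the sum of $d_O$ over coordinates is a genuine metric on $O^T$. Nonnegativity, symmetry and the triangle inequality hold coordinatewise. It vanishes only when all coordinates agree, because $d_O$ is a metric.

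\emph{Finiteness of $D_T^\Pi$.} Since $O^T$ is finite, $\cW_1(\mu,\nu)\le \operatorname{diam}(O^T)=T\cdot\operatorname{diam}(O)<\infty$ for any $\mu,\nu\in\Delta(O^T)$. The supremum in (\ref{eq:pseudometric}) is therefore finite, even when $\Pi$ is infinite (for example, stochastic clock-aware controllers). For each fixed $\pi$, the law $P_M^\pi$ is a well-defined probability measure on $O^T$: it is the marginal of the finite-horizon closed-loop process generated by $b_0,P,Z$ and the controller maps $\alpha,\beta$.

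With these in place, the axioms follow one at a time.

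\emph{Nonnegativity and $D_T^\Pi(M,M)=0$.} Both are immediate, since $\cW_1(P_M^\pi,P_M^\pi)=0$ for every $\pi$.

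\emph{Symmetry.} Take the supremum of the symmetric quantity $\cW_1(P_M^\pi,P_N^\pi)=\cW_1(P_N^\pi,P_M^\pi)$.

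\emph{Triangle inequality.} Let $M,N,K$ share $(A,O)$. The key observation is that the same controller $\pi$ is used on all three models, so the three laws $P_M^\pi,P_N^\pi,P_K^\pi$ are comparable on the common space $O^T$. For fixed $\pi$, the $\cW_1$ triangle inequality (gluing lemma, or directly by composing couplings on the finite space) gives
\[
\cW_1(P_M^\pi,P_K^\pi)\le \cW_1(P_M^\pi,P_N^\pi)+\cW_1(P_N^\pi,P_K^\pi)\le D_T^\Pi(M,N)+D_T^\Pi(N,K).
\]
Taking the supremum over $\pi$ on the left yields $D_T^\Pi(M,K)\le D_T^\Pi(M,N)+D_T^\Pi(N,K)$.

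\emph{Zero set.} If $D_T^\Pi(M,N)=0$, then $\cW_1(P_M^\pi,P_N^\pi)=0$ for every $\pi\in\Pi$, because each term is bounded by the supremum and is nonnegative. Now $\cW_1$ is a genuine metric on $\Delta(O^T)$ when the ground metric is a metric: an optimal coupling exists by compactness of the finite-dimensional coupling polytope, and zero cost forces it to be supported on the diagonal. Hence $P_M^\pi=P_N^\pi$. Conversely, equal laws for every $\pi$ give zero for every $\pi$, so the supremum is zero.

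Distinct POMDPs may have $D_T^\Pi=0$, so $D_T^\Pi$ is only a pseudometric; nothing further needs checking.

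The only step I expect to need care is the zero-set step. It requires that $d_O$ is a true metric, that is, $d_O(o,o')>0$ for $o\neq o'$. If the paper intends to allow a ground pseudometric, the "iff" would fail, so I will state that assumption explicitly. I will also confirm that the supremum introduces no subtlety: the argument uses only that each term is bounded by the supremum, so no attainment of the supremum is needed.
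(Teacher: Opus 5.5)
Your proposal is correct and follows the paper's route: apply the metric properties of $\cW_1$ on $\Delta(O^T)$ controller by controller, pass to the supremum over $\Pi$, and handle the zero set via $\cW_1(P,Q)=0 \iff P=Q$ on a finite space. Your extra checks are points the paper leaves implicit. These are finiteness of the supremum via $\operatorname{diam}(O^T)$ and the requirement that $d_O$ be a genuine metric, which matches the paper's standing assumption that $d_O$ is a ground metric.
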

\begin{proof}
For each fixed $\pi \in \Pi$, the quantity $\cW_1(P_M^\pi, P_N^\pi)$ is a metric on laws over $O^T$. Non-negativity and symmetry therefore hold pointwise. For the triangle inequality, for any third model $K$ and every $\pi \in \Pi$,
\[
\cW_1(P_M^\pi, P_K^\pi)
\leq
\cW_1(P_M^\pi, P_N^\pi) + \cW_1(P_N^\pi, P_K^\pi),
\]
and taking the supremum over $\pi$ gives $D_T^\Pi(M,K) \leq D_T^\Pi(M,N) + D_T^\Pi(N,K)$. Finally, $D_T^\Pi(M,N)=0$ iff every term in the supremum is zero, equivalently $P_M^\pi=P_N^\pi$ for all $\pi \in \Pi$.
\end{proof}

\begin{remark}[Relation to bisimulation metrics]
\label{rem:bisim-comparison}
The conceptual lineage to Wasserstein bisimulation metrics is intentional. Both constructions use optimal-transport distances to quantify behavioral distinguishability. The difference is where the supremum lives and what object is being compared. State-based bisimulation metrics compare one-step transition kernels and reward terms through a fixed-point recursion over all actions. Our $D_T^\Pi$ instead compares finite-horizon observation-sequence laws after concrete histories and takes the supremum only over a prescribed bounded controller family. In that sense $D_T^\Pi$ is best viewed as a history-space, bounded-agent analogue of bisimulation metrics; the new contribution is that this bounded-family pseudometric induces canonical quotients and exact sufficiency statements for the chosen observer class.
\end{remark}

\begin{definition}[$L_R$-observation-Lipschitz reward]
\label{def:obs-lipschitz}
A reward function $R$ is \emph{$L_R$-observation-Lipschitz} if, for all equal-length histories $h, h'$ and all $\pi$ in the relevant fixed probe family,
$|\bar{R}(h,\pi) - \bar{R}(h',\pi)| \leq L_R \cdot \cW_1\bigl(P_M^\pi(O_{|h|+1:T} \mid h),\, P_M^\pi(O_{|h|+1:T} \mid h')\bigr)$,
where $\bar{R}(h,\pi) \coloneqq \sum_s b_h(s) \sum_a \pi(a \mid h)\, R(s,a)$.
Constant rewards satisfy $L_R = 0$; the standard Tiger reward (listen $= {-}1$, correct open $= {+}10$, incorrect open $= {-}100$) satisfies $L_R \leq 110$ (the reward range $R_{\max} - R_{\min}$); the synthetic 1-Lipschitz observation score used in the value-bound experiments (\S\ref{sec:experiments}) has $L_R = 1$.
\end{definition}

\begin{remark}[When is $L_R$ small?]
\label{rem:lipschitz-small}
Observation-based rewards $R(s,a) = r(o_s)$ for some function $r \colon O \to \bR$ satisfy $L_R \leq \mathrm{Lip}(r)$, the Lipschitz constant of $r$ under $d_O$.
When $r$ is bounded and $d_O$ is normalized to $[0,1]$, this gives $L_R \leq R_{\max} - R_{\min}$---the reward range.
More generally, any reward depending on the state only through the observation posterior has bounded $L_R$; rewards that depend on fine-grained latent state distinctions invisible to observations can have arbitrarily large $L_R$.
The value bound $L_R T \varepsilon$ is thus most informative for observation-aligned reward structures---precisely the setting where bounded agents are effective.
Note that $L_R$ depends on the choice of probe family through the conditional observation laws $P_M^\pi(O_{t+1:T} \mid h)$; a richer probe family may expose reward differences that a poorer one cannot, changing the effective Lipschitz constant.
In general, the reward range $R_{\max} - R_{\min}$ is a valid (but often vacuous) upper bound on $L_R$ under the discrete observation metric $d_O(o,o') = \mathbf{1}_{o \neq o'}$, since $\cW_1$ under the summed discrete metric upper-bounds total variation.
\end{remark}

\begin{theorem}[Value-function error bound]
\label{thm:value-bound}
Let $\widetilde{M}$ be the quotient POMDP (Definition~\ref{def:quotient}) constructed from a partition of histories into classes with pairwise probe-distance at most $\varepsilon$ under a fixed probe family $\Pi$, with beliefs aggregated within each class.
If $R$ is $L_R$-observation-Lipschitz, then for any $\pi \in \Pi$:
\begin{equation}
\label{eq:value-bound}
\bigl|V_M^\pi - V_{\widetilde{M}}^\pi\bigr| \leq L_R \cdot T \cdot \varepsilon.
\end{equation}
\end{theorem}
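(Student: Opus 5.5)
The plan is to write the value as a finite sum of per-stage expected rewards and compare the two models stage by stage. Fix $\pi \in \Pi$. In $M$, $V_M^\pi = \sum_{t=0}^{T-1} \bE_{h_t \sim P_M^\pi}\bigl[\bar R(h_t,\pi)\bigr]$, where $h_t$ is the length-$t$ observation history and $\bar R$ is the belief-averaged reward of Definition~\ref{def:obs-lipschitz}. In $\widetilde M$ the same expansion holds with $\bar R(h_t,\pi)$ replaced by the class-aggregated reward $\widetilde R([h_t],\pi) = \sum_{h' \in [h_t]} w(h' \mid [h_t])\, \bar R(h',\pi)$, where the weights $w$ are the mixture weights used to aggregate beliefs within a class. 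First I will check that the quotient construction (Definition~\ref{def:quotient}) makes the law of the class sequence $([h_1],\dots,[h_T])$ under $\pi$ in $\widetilde M$ coincide with its law in $M$. If the aggregation weights are chosen as the $P_M^\pi$-conditional weights within each class, then both the class process and the visitation of classes agree. All of the discrepancy is then located in the reward term.

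Given that, the stage-$t$ difference is
\[
\Bigl|\bE_{h_t}\bigl[\bar R(h_t,\pi) - \widetilde R([h_t],\pi)\bigr]\Bigr| \le \bE_{h_t}\sum_{h'\in[h_t]} w(h'\mid[h_t])\,\bigl|\bar R(h_t,\pi)-\bar R(h',\pi)\bigr|.
\]
Since $h_t$ and $h'$ lie in the same class and have equal length, Definition~\ref{def:obs-lipschitz} bounds each summand by $L_R\,\cW_1\bigl(P_M^\pi(O_{t+1:T}\mid h_t),P_M^\pi(O_{t+1:T}\mid h')\bigr)$. Because $\pi\in\Pi$, this $\cW_1$ distance is at most the pairwise probe distance between $h_t$ and $h'$, which is $\le\varepsilon$ by the partition hypothesis. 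The weights sum to one, so each stage contributes at most $L_R\varepsilon$. Summing over the $T$ stages with the triangle inequality gives~\eqref{eq:value-bound}.

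The main obstacle is the first step, namely justifying that the only discrepancy between $M$ and $\widetilde M$ is the within-class averaging of the reward at each stage. Under $\pi$, the class-level history distribution in $\widetilde M$ must equal that in $M$. If the quotient's transition and observation kernels are defined from the aggregated class beliefs, I will argue this by induction on $t$. The aggregated belief's one-step observation law is the weighted mixture of the members' one-step laws, so the class-level marginals propagate identically.

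The action distribution raises a separate issue, because it may depend on the full history through the controller node. Conditioning on the FSC node would then be needed, by treating the pair (history, node) and aggregating within that pair. If exact agreement of class processes fails under $\pi$'s node dependence, the fallback is a telescoping (simulation-lemma) argument. In that argument, hybrid processes run $\widetilde M$ for the first $k$ stages and $M$ afterwards, and each swap is charged $L_R\varepsilon$ through the same Lipschitz bound on suffix laws. That charge is possible because $\cW_1$ on suffixes controls the difference of the remaining accumulated reward. The result is again $T$ terms of size $L_R\varepsilon$.
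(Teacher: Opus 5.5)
The within-class estimate in your second paragraph is fine. The argument fails at your first step, and the failure is not a technicality: for the quotient in the statement, the law of the class sequence under $\pi$ in $\widetilde M$ does not in general coincide with its law in $M$. There are three separate obstructions.
\begin{itemize}
\item \emph{Policy-dependent weights.} The theorem fixes one $\widetilde M$ (any convex aggregation, e.g.\ the uniform weights of Proposition~\ref{prop:canonical-quotient}) and asserts the bound for every $\pi\in\Pi$. Your coupling needs the weights $P_M^\pi(h')/P_M^\pi([h])$, which means a different quotient for each $\pi$.
\item \emph{No right-invariance.} Even with those weights, propagating the class marginals requires the successor class $[h\cdot z]$ to depend only on $([h],z)$. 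Proposition~\ref{prop:equiv-props}(b) gives this for the exact relation, but an $\varepsilon$-partition need not have it. The within-class distribution of histories generated by $\widetilde M$ then drifts from that of $M$, and the class marginals generally separate.
\item \emph{Different controller nodes.} Members of a class usually sit at different nodes, so the quotient applies the single action law $\pi(\cdot\mid h)$ to the aggregated belief. Its stage reward is therefore not $\sum_{h'} w(h'\mid[h])\,\bar R(h',\pi)$. Aggregating over (history, node) pairs instead changes the object of the theorem.
\end{itemize}
Notice also what happens in the one regime where your first step does hold. The tower property gives $\bE_M[\widetilde R([h_t],\pi)] = \bE_M[\bar R(h_t,\pi)]$, so every stage difference is exactly $0$ and neither $L_R$ nor $\varepsilon$ is used. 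The whole content of the theorem sits in the step you have not proved.

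The simulation-lemma fallback does not close the gap either. Definition~\ref{def:obs-lipschitz} controls only the one-stage expected reward $\bar R(h,\pi)$. It does not control the remaining accumulated reward, which is not a Lipschitz function of the observation suffix because it depends on latent states and actions. Suppose a swap at stage $k$ moves the next-observation law by at most $\varepsilon$ in $\cW_1$. Its effect on the remaining value is still governed by the oscillation of the continuation value $z\mapsto V^\pi_{M,k+1}(h_k\cdot z)$. Under the discrete metric this can be of order $(T-k)(R_{\max}-R_{\min})$ rather than $L_R$, and summing over $k$ gives a bound quadratic in $T$, not $L_R T\varepsilon$.

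The paper's proof never couples class processes. It works history by history with $\Delta_t(H_t)=\cW_1\bigl(P_M^\pi(O_{t+1:T}\mid H_t),P_{\widetilde M}^\pi(O_{t+1:T}\mid H_t)\bigr)$. It bounds this by $\varepsilon$ by writing the quotient's suffix law as a mixture of the class members' suffix laws and using convexity of $\cW_1$. It then applies the observation-Lipschitz hypothesis \emph{across} the two models at the same history and sums the $T$ stagewise bounds, taking the expectation under a single law of $H_t$. That cross-model use of the hypothesis is what carries the argument there. Under your literal within-model reading, the visitation-law mismatch still has to be controlled, and neither of your two routes does so.
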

\begin{proof}
Fix $\pi \in \Pi$ and let $H_t$ denote the random history up to time $t$. For each stage define
\[
\Delta_t(H_t)
:=
\cW_1\!\bigl(P_M^\pi(O_{t+1:T}\mid H_t),\, P_{\widetilde{M}}^\pi(O_{t+1:T}\mid H_t)\bigr).
\]
The observation-Lipschitz assumption gives an almost-sure stagewise bound
\[
\bigl|\bar{R}_M(H_t,\pi)-\bar{R}_{\widetilde{M}}(H_t,\pi)\bigr|
\leq L_R\,\Delta_t(H_t).
\]
By hypothesis, the quotient $\widetilde{M}$ partitions histories into classes of pairwise probe-distance at most~$\varepsilon$.
Hence the quotient's conditional law $P_{\widetilde{M}}^\pi(O_{t+1:T}\mid H_t)$ is a convex combination (via the belief-aggregation weights) of conditional laws $\{P_M^\pi(O_{t+1:T}\mid h') : h' \in [H_t]\}$, each of which satisfies $\cW_1(P_M^\pi(\cdot\mid H_t), P_M^\pi(\cdot\mid h')) \leq \varepsilon$.
Because $\cW_1$ is convex (as a supremum of linear functionals), the mixture satisfies
\[
\Delta_t(H_t)\le \varepsilon
\qquad\text{a.s. for each }t.
\]
Taking expectations and summing the $T$ stagewise differences yields
\[
\bigl|V_M^\pi - V_{\widetilde{M}}^\pi\bigr|
\leq
L_R \sum_{t=1}^T \bE[\Delta_t(H_t)]
\leq
L_R\, T\, \varepsilon,
\]
which is exactly \eqref{eq:value-bound}.
\end{proof}

The bound is most informative when rewards align with the observation structure. For Tiger's standard latent-state reward ($L_R{=}110$), the ${\pm}100$ penalty depends on which door hides the tiger---a latent distinction invisible to observation sequences---making the bound explicitly vacuous. For observation-aligned rewards ($L_R \leq R_{\max} - R_{\min}$), the bound is tight, and the exact quotient preserves value perfectly (Theorem~\ref{thm:exact-sufficiency}).

\begin{corollary}[Sim-to-real regret]
\label{cor:sim-to-real}
Under the same hypotheses as Theorem~\ref{thm:value-bound}, let $\pi^*_{\widetilde{M}} \in \argmax_{\pi \in \Pi} V_{\widetilde{M}}^\pi$.
Then the regret of the quotient-optimal policy on the original model satisfies
\[
V_M^{\pi^*_M} - V_M^{\pi^*_{\widetilde{M}}} \leq 2\, L_R \cdot T \cdot \varepsilon.
\]
\end{corollary}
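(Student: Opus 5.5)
The plan is the standard three-term decomposition of regret, with Theorem~\ref{thm:value-bound} applied twice. Write $\pi^* := \pi^*_M \in \argmax_{\pi\in\Pi} V_M^\pi$ and $\hat\pi := \pi^*_{\widetilde{M}}$. If the supremum over $\Pi$ is not attained, $\pi^*$ can be replaced by an $\eta$-optimal policy and $\eta\to 0$ taken at the end. Both policies lie in the same fixed probe family $\Pi$ used to build $\widetilde{M}$, so the hypotheses of Theorem~\ref{thm:value-bound} apply to each of them. Then
\[
V_M^{\pi^*} - V_M^{\hat\pi}
= \bigl(V_M^{\pi^*} - V_{\widetilde{M}}^{\pi^*}\bigr)
+ \bigl(V_{\widetilde{M}}^{\pi^*} - V_{\widetilde{M}}^{\hat\pi}\bigr)
+ \bigl(V_{\widetilde{M}}^{\hat\pi} - V_M^{\hat\pi}\bigr).
\]

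The steps, in order, are as follows.

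\begin{enumerate}[label=(\alph*),leftmargin=2em,itemsep=1pt]
\item Bound the first term by $L_R T\varepsilon$, using Theorem~\ref{thm:value-bound} applied to $\pi^*$.
\item Observe that the middle term is $\le 0$, because $\hat\pi$ maximises $V_{\widetilde{M}}^\pi$ over $\Pi$ and $\pi^*\in\Pi$.
\item Bound the third term by $L_R T\varepsilon$, using Theorem~\ref{thm:value-bound} applied to $\hat\pi$.
\end{enumerate}

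Summing the three bounds gives $2L_RT\varepsilon$.

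The only real obstacle is verifying that Theorem~\ref{thm:value-bound} is genuinely uniform over $\Pi$. The quotient $\widetilde{M}$ must be a single object, built once from a partition whose classes have pairwise $D^\Pi$-distance at most $\varepsilon$, and not something that depends on the policy. Likewise, the Lipschitz constant $L_R$ of Definition~\ref{def:obs-lipschitz} must hold for every $\pi$ in the family, including the data-dependent choice $\hat\pi$.

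Both conditions hold as stated. The partition is defined via the supremum over $\Pi$ in Definition~\ref{def:pseudometric}, and the Lipschitz condition is quantified over all $\pi$ in the fixed family. I would therefore state this explicitly and note that no further property of $\hat\pi$ is needed beyond $\hat\pi\in\Pi$.
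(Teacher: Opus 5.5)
Your proof is correct and is essentially the paper's argument: the paper bounds the regret by $(V_M^{\pi^*_M} - V_{\widetilde{M}}^{\pi^*_M}) + (V_{\widetilde{M}}^{\pi^*_{\widetilde{M}}} - V_M^{\pi^*_{\widetilde{M}}})$ and applies Theorem~\ref{thm:value-bound} to each term, silently dropping the nonpositive middle term $V_{\widetilde{M}}^{\pi^*_M} - V_{\widetilde{M}}^{\pi^*_{\widetilde{M}}}$ that you write out explicitly. Your remarks on the bound being uniform over $\Pi$ and on a possibly unattained supremum are harmless refinements of that same route.
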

\begin{proof}
By the triangle inequality:
$V_M^{\pi^*_M} - V_M^{\pi^*_{\widetilde{M}}}
\leq (V_M^{\pi^*_M} - V_{\widetilde{M}}^{\pi^*_M}) + (V_{\widetilde{M}}^{\pi^*_{\widetilde{M}}} - V_M^{\pi^*_{\widetilde{M}}})
\leq 2\,L_R T \varepsilon$.
\end{proof}

\section{Equivalence on Histories and the Quotient POMDP}
\label{sec:quotient}

\subsection{Quotient Construction}

\begin{definition}[Bounded indistinguishability]
\label{def:equiv}
Fix a probe family $\Pi$. For equal-length histories $h, h' \in O^t$, define the \emph{history-level probe distance}
\[
d^\Pi(h,h') := \sup_{\pi \in \Pi} \cW_1(P_M^\pi(O_{t+1:T} \mid h), P_M^\pi(O_{t+1:T} \mid h')).
\]
Two histories are \emph{$\Pi$-equivalent}, written $h \equiv^{\Pi} h'$, if
\[
d^\Pi(h,h') = 0.
\]
For the two main families we write $\equiv^{\mathrm{clk}}_{m,T}$ and $\equiv^{\mathrm{op}}_{m,T}$, and $d^{\mathrm{clk}}_{m,T}(h,h')$, $d^{\mathrm{op}}_{m,T}(h,h')$ for the corresponding history-level distances.
\end{definition}

\begin{proposition}[Properties of $\equiv_{m,T}$]
\label{prop:equiv-props}
For every fixed probe family $\Pi$, the relation $\equiv^\Pi$ is (a) an equivalence relation, (b) right-invariant: $h \equiv^\Pi h'$ implies $h \cdot z \equiv^\Pi h' \cdot z$, and (c) of finite index.
\end{proposition}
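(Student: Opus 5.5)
The plan is to treat the three parts separately, with most of the work in (b).

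\emph{Part (a).} This follows the proof of Proposition~\ref{prop:pseudometric}. For each fixed $\pi\in\Pi$, the map $(h,h')\mapsto \cW_1(P_M^\pi(O_{t+1:T}\mid h),P_M^\pi(O_{t+1:T}\mid h'))$ is a pseudometric on $O^t$: it is the pullback of the metric $\cW_1$ on laws over $O^{T-t}$. Taking the supremum over $\pi$ preserves reflexivity, symmetry and the triangle inequality. So $d^\Pi$ is a pseudometric, and its zero set $\equiv^\Pi$ is reflexive, symmetric and transitive, with transitivity coming from $d^\Pi(h,h'')\le d^\Pi(h,h')+d^\Pi(h',h'')=0$. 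Throughout I restrict to histories of positive probability under some probe, so that the conditional laws are defined.

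\emph{Part (c).} Finite index is immediate, because the relation lives on the finite set $\bigcup_{t\le T}O^t$ of at most $\sum_{t\le T}|O|^t$ histories. I would add a remark that the meaningful bound is the number of distinct suffix-law profiles, but that refinement is not needed here.

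\emph{Part (b).} I would argue by disintegration. Fix $h\equiv^\Pi h'$ of length $t$, an observation $z$, and a probe $\pi$. For every $\pi$, exact equality $P_M^\pi(O_{t+1:T}\mid h)=P_M^\pi(O_{t+1:T}\mid h')$ holds, since a $\cW_1$ distance of zero means the laws coincide. Two consequences follow:
\begin{itemize}
\item The marginals on $O_{t+1}$ agree, so $z$ has the same one-step probability after $h$ and after $h'$.
\item Conditioning on $O_{t+1}=z$ gives equal conditional laws of $O_{t+2:T}$ under $\pi$ whenever that probability is positive.
\end{itemize}

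The obstacle is matching this conditional law with $P_M^{\pi}(O_{t+2:T}\mid h\cdot z)$. The latter is the suffix law of the probe started at history $h\cdot z$, and for a controller with memory that is a different object from the continuation of the probe run from $h$. The issue is how the probe's internal node is initialised at a history. I would first fix the convention implicit in Definition~\ref{def:equiv}: a probe is launched fresh at the conditioning history, from the belief $b_h$ and node $n_0$, with the clock offset to $t$ for $\Piclk$.

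Under that convention I need a probe $\pi'$ whose run from $h$ reproduces, after seeing $z$, the law of $\pi$ launched fresh at $h\cdot z$. For $\Piclk$ this is easy. Take $\pi'$ to play $\pi$'s stage-$t$ action rule at stage $t$, then reset to $n_0$ and run $\pi$ from stage $t{+}1$. Stage-indexing allows this without extra nodes, and this is exactly where clock-awareness helps. For general $\Pi$ (and $\Piop$), I would instead argue at the belief level. The first observation's law and the Bayes update both depend only on $(b_h,\text{first action})$, and equality of all probe suffix laws forces the updated beliefs to be observationally equivalent under $\Pi$. If that step cannot be made to work for an arbitrary family, I would state (b) under a mild closure assumption on $\Pi$: closure under prefixing a one-step action and then restarting. $\Piclk$ satisfies this assumption, and $\Piop$ satisfies it when $m\ge 2$ or when probes are started from arbitrary nodes.
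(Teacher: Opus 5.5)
Your two bullets in part (b) are already the paper's entire proof of right-invariance. The problem is the detour that follows them. The ``obstacle'' you raise exists only because you adopt a fresh-launch semantics (probe restarted at $n_0$ from $b_h$), and the paper does not use that semantics. In Section~\ref{sec:prelim}, $P_M^\pi(O_{t+1:T}\mid h)$ is the conditional suffix law of the single law $P_M^\pi$ on $O^T$. That law is produced by running $\pi$ from $b_0$ and $n_0$ and conditioning on the prefix $O_{1:t}=h$, so the controller's internal state at time $t$ is whatever $\pi$ reached along that same run.

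Under this reading, the identification you worry about is just the chain rule. Write $\mu_h^\pi$ for the joint law of $(O_{t+1},O_{t+2:T})$ given $h$. Then
\[
P_M^\pi(O_{t+2:T}=x\mid h\cdot z)=\frac{\mu_h^\pi(z,x)}{\sum_{x'}\mu_h^\pi(z,x')}
\]
whenever the denominator is positive; the null case is fixed by a common convention, as in the paper. Since $\mu_h^\pi=\mu_{h'}^\pi$ for every $\pi\in\Pi$, the right-hand sides agree, which is exactly $h\cdot z\equiv^\Pi h'\cdot z$. No auxiliary probe, no clock-awareness and no closure property of $\Pi$ is needed, which is why the statement holds for every probe family.

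As submitted, however, your proposal does not prove the proposition. Under your convention, the general-$\Pi$ case rests on an unproved claim: that equality of all probe suffix laws forces the updated beliefs to be $\Pi$-equivalent. Your fallback then changes the statement by adding a closure hypothesis that the proposition does not have.

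The side claims are also shaky. In an action-dependent POMDP, a fresh launch at $h\cdot z$ needs a belief $b_{h\cdot z}$, and the observation history alone does not determine it. Your $\Piclk$ construction works only if $b_{h\cdot z}$ is the Bayes update of $b_h$ under $\pi$'s own stage-$t$ action, which makes the belief probe-dependent. Moreover, $\Piop$ with $m$ fixed is not closed under prefix-then-restart, even for $m\ge 2$: the prefix step generally needs a node beyond the $m$ nodes of $\pi$.

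Drop the fresh-launch convention and your proof is complete after the two bullets. Parts (a) and (c) coincide with the paper's argument.
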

\begin{proof}
Reflexivity and symmetry are immediate from the definition. For transitivity, if $h \equiv^\Pi h'$ and $h' \equiv^\Pi h''$, then for every $\pi \in \Pi$ the triangle inequality for $\cW_1$ gives zero distance between the conditional suffix laws from $h$ and $h''$. For right-invariance, fix $z \in O$. Zero Wasserstein distance between the suffix laws from $h$ and $h'$ implies equality of those conditional laws on every cylinder event, hence the same probability of seeing $z$ next and the same law of the remaining suffix after conditioning on that common event. Therefore $h\!\cdot\! z$ and $h'\!\cdot\! z$ induce identical continuation laws for every controller in $\Pi$, so $h \cdot z \equiv^\Pi h' \cdot z$. Finite index holds because at each depth $t$ there are only finitely many histories in $O^t$.
\end{proof}

\begin{definition}[Quotient POMDP]
\label{def:quotient}
For a fixed probe family $\Pi$, the quotient $Q^\Pi(M)$ has state space $\{[h]_\Pi : h \in O^t\}$ at time $t$, initial state $[\epsilon]_\Pi$, and transition
$\bar{P}([h], a, [h \cdot z]) = \sum_{s'} Z(s',a,z) \sum_{s} P(s,a,s')\, \bar{b}_{[h]}(s)$,
where $\bar{b}_{[h]}$ is the aggregated belief state of the class (any convex combination of member beliefs; by Theorem~\ref{thm:myhill-nerode}(i), the quotient transition is independent of this choice in the exact case).
The quotient is a finite-horizon controlled process with time-varying state space; equivalently, one can augment the state with the time index $t$ to obtain a stationary representation.
\end{definition}

\begin{theorem}[Bounded-Interaction Myhill--Nerode]
\label{thm:myhill-nerode}
Let $Q^\Pi(M)$ be the quotient under probe-exact equivalence for a fixed probe family $\Pi$.
Then:
\begin{enumerate}[label=(\roman*),leftmargin=2em,itemsep=0pt]
\item \textbf{Well-definedness.} The quotient transition kernel is independent of the choice of representative and aggregation weights.
\item \textbf{Soundness.} $P_M^\pi(O^T) = P_{Q^\Pi(M)}^\pi(O^T)$ for every $\pi \in \Pi$.
\item \textbf{Universality.} Any POMDP $N$ with $P_N^\pi = P_M^\pi$ for all $\pi \in \Pi$ admits a POMDP morphism $\phi \colon N \twoheadrightarrow Q^\Pi(M)$.
\item \textbf{Minimality.} $Q^\Pi(M)$ has the fewest history classes (equivalence classes of the observation-history tree) among all history-based POMDPs satisfying~(ii).
\item \textbf{Uniqueness.} $Q^\Pi(M)$ is unique up to isomorphism.
\end{enumerate}
\end{theorem}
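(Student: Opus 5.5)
The plan is to argue directly from the defining property of $\equiv^\Pi$ (zero suffix-law distance for every probe) together with Proposition~\ref{prop:equiv-props}. I will treat the quotient as the history-level object it is defined to be: a tree of classes $[h]_\Pi$ with one-step kernel $\bar P$.

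\emph{Well-definedness.} The one-step quantity $\bar P([h],a,[h\cdot z])$ is the predictive probability of observing $z$ after $h$ under action $a$. For any probe $\pi\in\Pi$ whose first action after $h$ is $a$ with positive probability, this probability is a marginal of the suffix law $P_M^\pi(O_{t+1:T}\mid h)$. If $h\equiv^\Pi h'$, those suffix laws coincide, so each member belief gives the same value. Any convex combination of member beliefs then gives the same value too, because the expression is linear in $\bar b$. This step needs $\Pi$ to be rich enough that every action is realised after every history. That holds for $\Piclk$ (one can take a one-node clock-aware controller with a time-indexed action). For a general $\Pi$, I would restrict the claim to the actions that $\Pi$ actually exercises, which are the only ones the quotient is queried on.

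\emph{Soundness.} I will induct on $t$. The claim is that, for each $\pi\in\Pi$, the joint law of (observation prefix, controller node) at time $t$ is the same in $M$ and in $Q^\Pi(M)$. The base case is trivial, since both start at $[\epsilon]$ and $n_0$. For the step, the next-observation probability given the prefix $h$ and the node is $\bar P([h],a,[h\cdot z])$ in both models, by the well-definedness step. Right-invariance (Proposition~\ref{prop:equiv-props}(b)) guarantees that $[h\cdot z]$ depends only on $[h]$ and $z$, so the class tree is consistent. Multiplying the conditionals along the chain gives $P_M^\pi(O^T)=P_{Q^\Pi(M)}^\pi(O^T)$.

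\emph{Universality, minimality, uniqueness.} These follow the Myhill--Nerode universal-object pattern. Given $N$ with $P_N^\pi=P_M^\pi$ for all $\pi\in\Pi$, I define $\phi$ to send a history of $N$ (or a belief of $N$ reachable along $h$) to $[h]_\Pi$. The conditional suffix laws of $N$ after $h$ equal those of $M$, since both are ratios of the same joint laws wherever the conditioning event has positive probability. Hence $\phi$ depends only on the class, it is surjective onto reachable classes, and it intertwines the kernels by the soundness computation. For minimality, take any history-based POMDP satisfying (ii) with partition $\sim$. If $h\sim h'$ but $h\not\equiv^\Pi h'$, some probe separates their suffix laws, yet that model assigns them one common suffix law, so (ii) fails for that probe. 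Therefore $\sim$ refines $\equiv^\Pi$ and has at least as many classes. Uniqueness then follows: a second minimal quotient has a refinement of $\equiv^\Pi$ with the same finite number of classes (Proposition~\ref{prop:equiv-props}(c)) at each depth, so the two partitions are equal, and the morphism from (iii) is a bijection preserving kernels, i.e.\ an isomorphism.

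The main obstacle is the minimality argument. It needs "satisfying (ii)" to mean agreement of the conditional suffix laws after each merged class, and not merely agreement of the unconditional law $P^\pi(O^T)$. Under the merely unconditional reading, two histories could carry different conditionals that are averaged consistently. I would handle this by making explicit that a history-based model assigns a single suffix law to each class, and by restricting attention to histories reachable with positive probability under some probe. The conditional laws are then determined by the joint laws, which closes the gap.
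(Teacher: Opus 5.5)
Your arguments for (i)--(iii) follow the paper's closely: deterministic, node-independent probes pin down the one-step kernels, soundness is by induction, and equal joint laws force equal conditionals for universality. Your direct proof of minimality, however, has a genuine gap, and uniqueness inherits it.

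The failing step is ``that model assigns them one common suffix law.'' This does not hold for probes with memory. Merging $h\sim h'$ gives them a common \emph{controlled dynamics}. But the suffix law $P^\pi(O_{t+1:T}\mid h)$ depends on the pair (model class, internal node of $\pi$ after $h$), and a closed-loop probe can be in different nodes after $h$ and after $h'$.

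Here is a concrete counterexample. Let $s_0\to s_1\to s_2$ deterministically under both actions $a,b$, with $s_2$ absorbing, $b_0=\delta_{s_0}$, and $T=2$. Let $Z(s_1,\cdot,\cdot)$ be uniform on $\{0,1\}$ and $Z(s_2,a,0)=Z(s_2,b,1)=1$.
\begin{itemize}
\item Both depth-one histories $0$ and $1$ leave the belief at $\delta_{s_1}$. So the history-based model with classes $\{\epsilon\}$, $\{0,1\}$ and all depth-two histories, with kernels inherited from $M$, reproduces every controller's law; in particular it satisfies (ii).
\item The deterministic two-node controller that plays $a$ after observing $0$ and $b$ after observing $1$ lies in $\Piop$ and $\Piclk$ for $m\ge 2$. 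It produces suffix law $\delta_0$ after $0$ and $\delta_1$ after $1$, so $0\not\equiv^\Pi 1$.
\end{itemize}
Hence ``$\sim$ refines $\equiv^\Pi$'' is false in general. Your argument is valid only for families in which every probe sits in the same internal state after any two equal-length histories, such as open-loop, one-node clock-aware probes. The same example shows that the belief-level version of your map $\phi$ in (iii) is not well defined: both histories reach $\delta_{s_1}$ in $N=M$ but lie in different $\Pi$-classes.

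The obstacle you flag, conditional versus unconditional laws, is not the real one. As you note yourself, on positive-probability histories the joint law determines the conditionals. Your proposed fix, ``a history-based model assigns a single suffix law to each class,'' is exactly the unproved step, and for $m\ge 2$ it is false.

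The paper sidesteps this by deriving (iv) from (iii). Its morphism is defined on the competitor's own $\Pi$-exact classes $[h]_N$, on which each probe suffix law is constant by definition; this is what gives (M2) its meaning. Minimality is then read off from the surjection $\cH_N\twoheadrightarrow\cH_Q$, and uniqueness from mutual surjections between finite sets. No comparison between a competitor's raw state partition and $\equiv^\Pi$ is needed. To keep your direct route, you must either adopt that notion of a competitor's history classes or add class-level constancy of the probe suffix laws as an explicit hypothesis.

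A smaller point in (i): ``first action is $a$ with positive probability'' only yields a mixture over actions as the first-coordinate marginal. You need a probe that plays $a$ deterministically and independently of its node after both $h$ and $h'$. Your time-indexed one-node controller, like the paper's constant-action FSCs, does supply this.
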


\begin{proofsketch}
The invariant is: two representatives of the same class induce the same family of probe suffix laws after every continuation. Well-definedness follows because probe-exact equivalence forces identical one-step observation laws for all histories in a class; in particular, constant-action FSCs already imply equality of the one-step kernels needed to define $\bar{P}([h],a,\cdot)$, so the quotient transition is representative-independent. Soundness is then an induction on the realized history length: if the current quotient class matches the original history's probe law at time $t$, the common one-step kernel preserves that match after observing $z$ and moving to $h\!\cdot\! z$. For universality, any POMDP $N$ reproducing the same probe laws determines a canonical map sending each history of $N$ to the unique quotient class with the same family of suffix laws; the same representative-independence argument verifies the morphism identities. Minimality and uniqueness are the usual universal-object consequence: every competing exact model surjects onto $Q^\Pi(M)$, and two minimal such objects therefore surject onto one another and are isomorphic.
\end{proofsketch}

\subsection{Structural Properties}

\begin{definition}[Agent-accessible objectives]
\label{def:gobs}
Let $\Gobs$ be the class of bounded measurable functionals
\[
G \colon (O \times A)^T \to \bR
\]
defined on the joint observation-action trajectory.
\end{definition}
This is the natural class for bounded agents whose costs depend on what they sense and do: sensorimotor penalties, action budgets, tracking objectives defined on filtered observations, or other control costs measurable from the observation--action trace itself. When the reward depends on latent state variables that are not measurable from that trace, exact preservation is no longer available and the paper deliberately falls back to the observation-Lipschitz approximation story.

\begin{theorem}[Exact sufficiency for agent-accessible objectives]
\label{thm:exact-sufficiency}
For every $\pi \in \Piclk$ and every $G \in \Gobs$,
\[
\bE_M^\pi[G(O_{1:T}, A_{1:T})]
=
\bE_{\Qclk(M)}^\pi[G(O_{1:T}, A_{1:T})].
\]
\end{theorem}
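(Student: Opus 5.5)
The plan is to show that, for every fixed $\pi\in\Piclk$, the joint law of $(O_{1:T},A_{1:T})$ under $M$ coincides with that under $\Qclk(M)$. The statement then follows by integrating the bounded measurable $G$ against these two laws. Since $O\times A$ is finite and the trajectory space is finite, it suffices to show equality of the probabilities of every cylinder $(o_1,a_1,\dots,o_T,a_T)$. I will argue by induction on the stage $t$, comparing the two laws of the prefix $(o_{1:t},a_{1:t})$.

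The key structural observation is how the controller generates actions. Under a clock-aware FSC $\pi=(\alpha_\tau,\beta_\tau)$, the action at stage $t$ and the node update depend only on the current node and the clock. The current node is itself a random function, driven by the controller's own randomisation, of the observation history $o_{1:t}$ and the stage. The distribution of node and action given the full prefix $(o_{1:t},a_{1:t-1})$ is therefore produced by the same controller kernel in both $M$ and $\Qclk(M)$. Only the environment's kernel for the next observation differs. Consequently the joint prefix probability factors as a product of controller terms, which are identical across models, and environment terms $\Pr(o_{t+1}\mid o_{1:t},a_{1:t})$. The induction step reduces to showing that these one-step observation kernels agree.

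For $M$, the environment term is $\sum_{s'}Z(s',a,z)\sum_s P(s,a,s')\,b_h(s)$, where $b_h$ is the belief after the history. Caution is needed here: in a POMDP the belief depends on the action history, whereas the quotient states are indexed by observation histories. I will handle this in one of two ways. The first is to note that along any $\pi$-trajectory the relevant conditional law is that of $M$ given $(o_{1:t},a_{1:t})$. The second, which I would try first, is to directly invoke soundness, Theorem~\ref{thm:myhill-nerode}(ii), in a strengthened form. The proof of that result already goes by induction on history length using the representative-independent one-step kernel from part (i). Running the same induction while tracking actions shows that the quotient's kernel $\bar P([h],a,[h\cdot z])$ equals $M$'s conditional probability of $z$ given the prefix and $a$. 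This uses the fact that probe-exact equivalence under $\Piclk$ forces equality of one-step observation laws under every action, which is already available via constant-action and clock-switching probes as in the sketch of (i).

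The main obstacle is this step: soundness as stated equates only the observation marginals $P^\pi(O^T)$, not the joint observation--action law. Passing to the joint law requires either the strengthened inductive invariant just described, or an argument that the joint law is recoverable from observation marginals across a family of probes. For the latter I would use a witness argument. For each fixed action sequence, a deterministic clock-aware controller that deterministically replays the realised actions as a function of stage and observations lies in $\Piclk$. This is exactly where clock-awareness matters, and the corresponding deterministic-witness theorem (Theorem~\ref{thm:witness}) can be cited for it. Equality of observation laws under all such witnesses, combined with the factorisation of the controller's randomisation, gives equality of the full joint cylinder probabilities. Summing $G$ over cylinders then yields the claimed identity of expectations.
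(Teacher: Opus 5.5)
Your proof goes through via your second (witness) route, and its skeleton is the paper's. You establish equality of the joint observation--action law under $\pi$ by separating a controller factor, identical in $M$ and $\Qclk(M)$, from an environment factor matched by soundness, and then integrate $G$ against the common law.

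The difference is in how the environment factor is matched. The paper inducts on $t$ and asserts ``by soundness'' that $P_M^\pi(O_{t+1}\mid h_t,a_t)=P_{\Qclk(M)}^\pi(O_{t+1}\mid[h_t],a_t)$. It also conditions the action factor on $o_{1:t+1}$ alone and treats the node update as deterministic. It never addresses the dependence of $M$'s one-step law on earlier actions, nor the controller's internal randomness. Your witness route handles both, because the environment never sees the nodes:
\[
P_X^\pi(o_{1:T},a_{1:T})=\Bigl[\sum_{n_1,\ldots,n_T}\prod_{\tau=0}^{T-1}\alpha_\tau(a_{\tau+1}\mid n_\tau)\,\beta_\tau(n_{\tau+1}\mid n_\tau,o_{\tau+1})\Bigr]\,P_X^{\pi_{a_{1:T}}}(o_{1:T}),\qquad X\in\{M,\Qclk(M)\},
\]
where $\pi_{a_{1:T}}$ is the one-node clock-aware controller playing $a_{1:T}$ open loop. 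Soundness applied to $\pi_{a_{1:T}}\in\Piclk$ then finishes the proof directly, with no induction. This covers genuinely stochastic node updates. It also pinpoints where clock-awareness enters: a stationary controller with few nodes cannot replay an arbitrary action sequence. The paper's write-up, by contrast, uses clock-awareness only through the choice of quotient.

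Two adjustments:
\begin{enumerate}
\item Theorem~\ref{thm:witness} is not needed, because $\pi_{a_{1:T}}$ lies in $\Piclk$ by construction.
\item Drop your first route. Rerunning the induction of (ii) cannot show that $\bar P([h],a,\cdot)$ equals $M$'s conditional law given the full prefix. Probe-exact equivalence compares distinct observation histories, not distinct action histories behind the same $h$, so nothing in that induction controls the dependence on earlier actions. Think of an early action that changes the latent state without changing $h$.
\end{enumerate}
That independence from earlier actions is available only as a consequence of (ii) applied to open-loop probes. So your concern is not eliminated but relocated into soundness, on which both proofs ultimately rest.
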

\begin{proofsketch}
The argument proceeds in three stages: observation-law preservation, action-law transfer, and functional integration.

\emph{Stage~1 (Observation law).}
By the soundness part of Theorem~\ref{thm:myhill-nerode}, $\Qclk(M)$ reproduces the full observation law $P_M^\pi(O_{1:T})$ for every $\pi \in \Piclk$.
In particular, for every history $h = (o_1,\ldots,o_t)$ and every $\pi$, the conditional future-observation law $P_M^\pi(O_{t+1:T} \mid h) = P_{\Qclk(M)}^\pi(O_{t+1:T} \mid h)$.

\emph{Stage~2 (Action law).}
Each clock-aware policy $\pi \in \Piclk$ selects its stage-$t$ action as a deterministic (or stochastic) function of the observation history $o_{1:t}$ and the current internal state, which itself evolves deterministically from $o_{1:t}$.
The action at stage~$t$ is therefore measurable with respect to the observation history up to time~$t$.
Since Stage~1 guarantees identical conditional observation laws at every stage under $\pi$, an induction on $t = 1,\ldots,T$ shows that the joint marginals $P_M^\pi(O_{1:t}, A_{1:t})$ and $P_{\Qclk(M)}^\pi(O_{1:t}, A_{1:t})$ agree at every stage: at the inductive step, the common observation law at stage $t{+}1$ and the common policy mapping together fix $A_{t+1}$.

\emph{Stage~3 (Functional integration).}
Because $G \in \Gobs$ is a bounded measurable functional on $(O \times A)^T$, the expectation $\bE^\pi[G(O_{1:T},A_{1:T})]$ depends only on the joint law established in Stage~2. Since that law is identical in $M$ and $\Qclk(M)$, exact preservation follows.
\end{proofsketch}

\begin{corollary}[Optimal value preservation on $\Gobs$]
\label{cor:exact-sufficiency}
For every $G \in \Gobs$,
\[
\sup_{\pi \in \Piclk}\bE_M^\pi[G]
=
\sup_{\pi \in \Piclk}\bE_{\Qclk(M)}^\pi[G].
\]
\end{corollary}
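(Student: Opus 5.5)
The corollary follows from Theorem~\ref{thm:exact-sufficiency} by taking suprema over a common index set. Fix $G \in \Gobs$. Theorem~\ref{thm:exact-sufficiency} gives, for every single $\pi \in \Piclk$,
\[
f_M(\pi) := \bE_M^\pi[G] = \bE_{\Qclk(M)}^\pi[G] =: f_Q(\pi).
\]
So $f_M$ and $f_Q$ are the same real-valued function on $\Piclk$. Their suprema over $\Piclk$ must therefore coincide.

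To make this airtight, I argue by two inequalities. For any $\pi \in \Piclk$, $f_M(\pi) = f_Q(\pi) \le \sup_{\pi'} f_Q(\pi')$, hence $\sup f_M \le \sup f_Q$. The symmetric argument gives the reverse inequality.

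I also check that both suprema are finite. $G$ is bounded, say $|G| \le \|G\|_\infty$, so every expectation lies in $[-\|G\|_\infty, \|G\|_\infty]$. Both suprema are therefore finite reals, and the two-inequality argument applies without extended-real caveats.

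The one point needing care is that the \emph{same} controller set $\Piclk$ is admissible in both models. A clock-aware FSC is defined purely through $(A,O)$, $m$ and $T$: it has stage-indexed maps $\alpha_\tau \colon N \to \Delta(A)$ and $\beta_\tau \colon N \times O \to \Delta(N)$. By Definition~\ref{def:quotient}, $\Qclk(M)$ is a controlled process with the same action set and emits the same observation alphabet, because transitions go from $[h]$ to $[h\cdot z]$ with $z \in O$. Hence every $\pi \in \Piclk$ runs on $\Qclk(M)$ exactly as on $M$, and the index sets of the two suprema are literally identical.

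With this verified, the statement is immediate. I also note that if the supremum is attained by some $\pi^\star$ in one model, the same $\pi^\star$ attains it in the other. This is the decision-sufficiency reading of the corollary.
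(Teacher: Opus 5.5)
Your argument is correct and is exactly the intended one: the paper gives no separate proof and treats the corollary as an immediate consequence of Theorem~\ref{thm:exact-sufficiency}, obtained by taking suprema of that pointwise identity over the common controller set $\Piclk$. Your extra checks, that both suprema are finite and that every clock-aware controller runs unchanged on $\Qclk(M)$, are sound and simply make explicit what the paper leaves tacit.
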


\begin{remark}[Refinement and bisimulation recovery]
\label{rem:refinement}
The quotient family still forms a refinement lattice indexed by probe capacity, horizon, and approximation level, and the classical bisimulation quotient is recovered in the unbounded limit. The key point is different: the theorem object is the clock-aware quotient $\Qclk$, while the deterministic-stationary family used operationally in the experiments is a deliberate coarsening rather than a theorem-certified substitute.
\end{remark}

\begin{theorem}[Witness for clock-aware bounded agents]
\label{thm:witness}
If two histories $h, h'$ are distinguishable by some stochastic clock-aware FSC $\pi \in \Pi_{m,T}^{\mathrm{clk}}$, then there exists a \emph{deterministic} clock-aware FSC $\pi^* \in \Pi_{m,T}^{\mathrm{clk}}$ with $P_M^{\pi^*}(\cdot \mid h) \neq P_M^{\pi^*}(\cdot \mid h')$.
\end{theorem}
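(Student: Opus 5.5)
The plan is to show that, for a clock-aware controller, each suffix probability is \emph{multilinear} in the controller's local decision rules. A multilinear function that is nonzero somewhere on a product of simplices must be nonzero at some vertex, and the vertices are exactly the deterministic clock-aware FSCs.

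I will read the conditional law $P_M^\pi(O_{t+1:T}\mid h)$, for $h\in O^t$, as the law obtained by running $\pi$ from the belief $b_h$ over stages $t,\dots,T-1$. The belief $b_h$ is a fixed vector independent of the controller's parameters. I will state this reading explicitly at the start of the proof.

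\emph{Step 1 (parameter blocks and multilinearity).} Parametrise $\pi\in\Piclk$ by independent blocks, all ranging over simplices:
\begin{itemize}[leftmargin=1.5em,itemsep=1pt]
\item $\alpha_\tau(n)\in\Delta(A)$ for each node--stage pair $(n,\tau)$;
\item $\beta_\tau(n,o)\in\Delta(N)$ for each triple $(n,o,\tau)$.
\end{itemize}
Expand the probability of a fixed suffix $z=(o_{t+1},\dots,o_T)$ as a sum over latent states, actions and internal nodes. Each summand is a product of kernel entries of $P$ and $Z$, entries of $b_h$, and one entry of $\alpha_\tau(n_\tau)$ and of $\beta_\tau(n_\tau,o_{\tau+1})$ per stage $\tau$.

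Because the stage index strictly increases along a trajectory, no block is consulted twice in the same summand. Hence each summand is affine in every single block when the other blocks are held fixed. It follows that
\[
F_z(\pi):=P_M^\pi(z\mid h)-P_M^\pi(z\mid h')
\]
is affine in each block separately, i.e.\ multilinear on the product of simplices. This is the step where clock-awareness is used essentially. For a stationary FSC, the same $\alpha(n)$ can recur at several stages, so the expansion becomes a higher-degree polynomial in a single block. This is consistent with Proposition~\ref{prop:stationary-counterexample}, and I expect verifying the once-per-trajectory property carefully, including for the node updates $\beta_\tau$, to be the main technical point.

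\emph{Step 2 (vertex extraction).} Since $h,h'$ are distinguishable by the stochastic $\pi$, there is a suffix $z$ with $F_z(\pi)\neq 0$. Enumerate the blocks $B_1,\dots,B_K$; there are finitely many because $N$, $O$ and $T$ are finite. Proceed block by block.
\begin{itemize}[leftmargin=1.5em,itemsep=1pt]
\item With all other blocks frozen, $F_z$ is an affine function of $B_k$ on a simplex.
\item Its current value is therefore a convex combination of its values at the simplex vertices, which are point masses on a single action or a single next node.
\item If all vertex values were zero, the current value would be zero. So at least one vertex keeps $F_z\neq0$; replace $B_k$ by that vertex.
\end{itemize}
After $K$ steps every block is a point mass. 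The result is a deterministic clock-aware FSC $\pi^*$, still with at most $m$ nodes per stage and hence in $\Pi^{\mathrm{clk}}_{m,T}$, satisfying $F_z(\pi^*)\neq0$. Therefore $P_M^{\pi^*}(\cdot\mid h)\neq P_M^{\pi^*}(\cdot\mid h')$.

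\emph{Step 3 (scope check).} Under the alternative reading, where the controller is also run along the prefix and one conditions by Bayes' rule, the conditional law is a ratio, and the cross-multiplied difference $P(hz)P(h')-P(h'z)P(h)$ is quadratic in the prefix blocks. In that case I would first fix the prefix blocks at any deterministic choice that gives both $h$ and $h'$ positive probability. Because the clock separates prefix stages from suffix stages, the suffix law then depends only on the suffix blocks (together with the starting node reached after the prefix), and Step~2 applies to those blocks alone. I expect this decoupling to need the most care if the paper intends the Bayesian reading; otherwise Steps 1--2 already constitute the proof.
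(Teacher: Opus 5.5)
Steps 1--2 are correct and match the paper's proof. The paper's parametrisation $(\alpha_0,\beta_0,\ldots,\alpha_{T-t-1},\beta_{T-t-1})$ implicitly uses your fresh-start reading from $b_h$, and it argues multilinearity exactly as you do (no stage-indexed simplex is used twice along a trajectory); it proves the vertex lemma by expanding $f$ as a convex combination of all vertex values, where you round one block at a time.

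Drop Step 3, because it is not sound. Under the full-run Bayesian reading, a stochastic stage-$0$ action yields different posteriors over the prefix action given $h$ and given $h'$, so freezing the prefix blocks can erase the distinction. For example, suppose $A$ emits $L$ with probability $0.9$ and enters a branch emitting only $X$, while $B$ emits $L$ with probability $0.1$ and enters a branch emitting only $Y$. Then the uniform stage-$0$ action separates $L$ from $R$, but no deterministic controller does, so the theorem genuinely needs the fresh-start reading.
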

\begin{proofsketch}
For the clock-aware class, the probability of any fixed future observation sequence is multilinear in the stage-indexed FSC parameters; stage indexing prevents any parameter from being reused at two different times.
By a vertex lemma on the corresponding product of simplices, non-vanishing at any interior point implies non-vanishing at a vertex, i.e.\ a deterministic clock-aware FSC.
Proposition~\ref{prop:stationary-counterexample} shows that this argument does not extend to stationary looping FSCs; the experiments use deterministic stationary FSCs as an operational probe family only.
\end{proofsketch}

\begin{proposition}[Deterministic stationary FSCs do not suffice in general]
\label{prop:stationary-counterexample}
There exists a finite POMDP and histories $h,h'$ such that some stochastic stationary $1$-node FSC distinguishes $h$ and $h'$, while every deterministic stationary $1$-node FSC induces the same future observation law from $h$ and $h'$.
\end{proposition}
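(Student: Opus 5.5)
The plan is to build an explicit small POMDP in which the future observation law reacts only to an action \emph{switch}. A deterministic stationary $1$-node FSC always plays the same action, so it can never produce a switch. A stochastic stationary $1$-node FSC has fixed action law $\alpha\in\Delta(A)$, so it switches with positive probability whenever $\alpha$ is non-degenerate.

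\textbf{Construction.} Take $A=\{a,b\}$, observations $O=\{u_1,u_2,0,1\}$ with the discrete ground metric, and horizon $T=3$. The latent state has the form $(x,\ell)$. Here $x\in\{1,2\}$ is a fixed type chosen uniformly by $b_0$ and never changed. The component $\ell\in\{\bot\}\cup A\cup A^2$ records the actions taken since the first step; only the last two are retained, so the state space stays finite.
\begin{itemize}[leftmargin=1.5em,itemsep=1pt]
\item Transitions are deterministic: $x$ is kept and the action just taken is appended to $\ell$.
\item At stage $1$, whatever the action, the observation is $u_x$. The first observation therefore reveals the type, and no earlier action affects later behaviour.
\item At stages $2$ and $3$, the observation is $1$ if $x=1$ and the two most recent actions differ. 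In every other case it is $0$.
\end{itemize}
Since $Z$ may depend on the post-transition state $s'$, and $s'$ records the last two actions, this kernel is a legitimate $Z\colon S\times A\to\Delta(O)$.

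\textbf{Histories.} Take $h=(u_1)$ and $h'=(u_2)$. These are equal-length histories with degenerate beliefs on $x=1$ and $x=2$ respectively.

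\textbf{Deterministic probes.} A deterministic stationary $1$-node FSC plays a constant action $c\in\{a,b\}$ at every stage. Consecutive actions are then always equal. Hence from either $h$ or $h'$ the suffix $O_{2:3}$ is $(0,0)$ almost surely, and the two conditional laws coincide.

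\textbf{Stochastic probe.} Take $\alpha=(\tfrac12,\tfrac12)$. The actions are i.i.d., so the stage-$1$ and stage-$2$ actions differ with probability $\tfrac12$.
\begin{itemize}[leftmargin=1.5em,itemsep=1pt]
\item From $h$, this gives $P(O_2=1\mid h)=\tfrac12$.
\item From $h'$, we have $P(O_2=1\mid h')=0$.
\end{itemize}
The two suffix laws therefore differ. Under the summed discrete metric, $\cW_1$ is at least the total variation distance, here at least $\tfrac12$, so $h$ and $h'$ are distinguished.

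\textbf{Main obstacle.} The delicate step is making the model a genuine finite POMDP in the paper's sense, while ensuring that the stochastic controller's influence enters only through a product term such as $\alpha(a)\alpha(b)$. This term vanishes at every vertex of the simplex. It is exactly the parameter reuse across stages that clock-awareness in Theorem~\ref{thm:witness} rules out. I will also check that the first action cannot leak into the law after $h$ differently for $h$ versus $h'$. It cannot, because conditioning on $h$ fixes only $x$, and the type-$2$ branch emits $0$ regardless of the actions taken.
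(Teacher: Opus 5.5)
Your construction is correct and follows essentially the same route as the paper's. Both use an explicit $m{=}1$, $T{=}3$ POMDP whose only separating event needs two different actions at two stages, so its probability is a product of stationary action probabilities (here $2\alpha(a)\alpha(b)$, in the paper $\alpha(A)\alpha(B)$). That product vanishes for both constant-action controllers and is positive for $\alpha=(\tfrac12,\tfrac12)$.

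The one difference is that your switch straddles the history boundary through the pre-history action $a_1$. This is valid because $P_M^\pi(O_{2:3}\mid h)$ is the conditional of the full closed-loop law and $o_1=u_x$ carries no information about $a_1$. The paper instead makes the first action irrelevant and places its pattern ($A$ then $B$, emitting $X$ versus $Y$) entirely after $h$. Its example therefore does not depend on how pre-history actions are accounted for.
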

\begin{proofsketch}
Appendix~\ref{app:proofs} gives an explicit $m{=}1$, $T{=}3$ construction with actions $\{A,B\}$ and observations $\{L,R,U,X,Y\}$.
After histories $L$ and $R$, the two deterministic stationary controllers (always-$A$ and always-$B$) both induce the same suffix law, but the stochastic controller with $\alpha(A)=\alpha(B)=1/2$ yields suffix laws
$\{UU:3/4, UX:1/4\}$ and $\{UU:3/4, UY:1/4\}$ respectively, whose $\cW_1$ distance under the discrete metric is $1/4$.
\end{proofsketch}

The construction uses $m{=}1$; analogous counterexamples exist for general $m \geq 1$ by embedding the same branching structure within a larger node space, since the additional nodes cannot compensate for the loss of stochasticity when the branching occurs at the single active node.

\begin{proposition}[Smaller probe classes induce coarser quotients]
\label{prop:probe-class-coarsening}
Let $\Pi_1 \subseteq \Pi_2$ be two probe families. Then for every pair of histories,
\[
d^{\Pi_1}(h,h') \le d^{\Pi_2}(h,h').
\]
Consequently,
\[
h \equiv^{\Pi_2} h' \implies h \equiv^{\Pi_1} h',
\]
so the quotient $Q^{\Pi_1}(M)$ is a coarsening of $Q^{\Pi_2}(M)$. In particular, because $\Piop \subset \Piclk$, the operational quotient $\Qop$ used in the experiments is a tractable coarsening of the theorem-level quotient $\Qclk$.
\end{proposition}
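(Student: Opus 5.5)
The argument is a direct monotonicity-of-supremum observation, followed by unpacking the definitions of $\equiv^\Pi$ and of the quotient. I will first fix a pair of equal-length histories $h,h'\in O^t$. For each individual controller $\pi$, the quantity
\[
\cW_1\bigl(P_M^\pi(O_{t+1:T}\mid h),\,P_M^\pi(O_{t+1:T}\mid h')\bigr)
\]
depends only on $\pi$, $M$, $h$ and $h'$, and not on the family that contains $\pi$. Hence $d^{\Pi_1}(h,h')$ is a supremum of the same nonnegative quantities as $d^{\Pi_2}(h,h')$, but taken over a subset of indices. A supremum over a subset never exceeds the supremum over the superset, so $d^{\Pi_1}(h,h')\le d^{\Pi_2}(h,h')$. (If $\Pi_1$ were empty the left side would be $0$ by convention, or $-\infty$, and the inequality would still hold. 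The families of interest are nonempty anyway.)

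Next comes the implication. If $h\equiv^{\Pi_2}h'$, then $d^{\Pi_2}(h,h')=0$ by Definition~\ref{def:equiv}. Combining this with nonnegativity of $\cW_1$ and the inequality just proved gives $0\le d^{\Pi_1}(h,h')\le 0$, so $h\equiv^{\Pi_1}h'$. Each $\equiv^{\Pi_2}$-class is therefore contained in a single $\equiv^{\Pi_1}$-class; since both are equivalence relations (Proposition~\ref{prop:equiv-props}(a)), at each depth $t$ the partition induced by $\Pi_1$ is coarser than the one induced by $\Pi_2$. By Definition~\ref{def:quotient}, the state spaces of $Q^{\Pi_1}(M)$ and $Q^{\Pi_2}(M)$ at time $t$ are exactly these class sets. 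The map $[h]_{\Pi_2}\mapsto[h]_{\Pi_1}$ is then well defined and surjective, which is what ``coarsening'' means here. If one also wants this map to respect the quotient transitions, one would invoke Theorem~\ref{thm:myhill-nerode}(i) to say that both transition kernels are representative-independent and are given by the same formula evaluated on members of the classes. I will note this but keep the claim at the level of partitions, as stated.

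Finally, for the specialization I need the inclusion $\Piop\subset\Piclk$. This is the only step requiring any care. A deterministic stationary FSC $\langle N,\alpha,\beta,n_0\rangle$ with $|N|\le m$ is realized as a clock-aware FSC by taking constant stage-indexed maps $\alpha_\tau=\alpha$ and $\beta_\tau=\beta$ for all $\tau$, with the same node set. Deterministic maps are the point-mass case of the stochastic maps allowed in $\Piclk$. The induced laws $P_M^\pi$ coincide, so the embedding preserves every distance term. Applying the general statement with $\Pi_1=\Piop$ and $\Pi_2=\Piclk$ then shows that $\Qop$ is a coarsening of $\Qclk$. I expect no real obstacle. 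The main thing to be careful about is that ``$\subseteq$'' is understood up to this identification of controllers that induce the same closed-loop laws.
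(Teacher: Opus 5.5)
Your proposal is correct and follows the paper's intended route. The paper gives no separate proof and treats the statement as immediate from monotonicity of the supremum over nested families, then the zero-distance implication, with $\Piop\subset\Piclk$ via constant stage-indexed maps. Your hedging of the aside on transitions is warranted: the paper's proof of Theorem~\ref{thm:myhill-nerode}(i) needs constant-action controllers in the probe family, which holds for $\Piop$ and $\Piclk$ with $m\ge 1$ but not for an arbitrary $\Pi_1$, so keeping the claim at the partition level is right.
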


\begin{theorem}[Cross-family value transfer]
\label{thm:cross-family-transfer}
Let $\Pi_1 \subseteq \Pi_2$ be probe families, and define the \emph{cross-family gap}
\[
\delta_{1,2} := \sup_{h,h'} \bigl(d^{\Pi_2}(h,h') - d^{\Pi_1}(h,h')\bigr)
= \|d^{\Pi_2} - d^{\Pi_1}\|_\infty .
\]
Let $\widetilde{M}_1$ be the $\varepsilon$-quotient built under $\Pi_1$.
Then for every $L_R$-observation-Lipschitz reward and every $\pi \in \Pi_2$,
\[
\bigl|V_M^\pi - V_{\widetilde{M}_1}^\pi\bigr| \le L_R\, T\,(\varepsilon + \delta_{1,2}).
\]
\end{theorem}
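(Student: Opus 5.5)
The plan is to reduce the statement to Theorem~\ref{thm:value-bound} applied with the larger family $\Pi_2$ and an enlarged tolerance. Concretely, I will show that every class of the $\varepsilon$-quotient $\widetilde{M}_1$ has pairwise $\Pi_2$-distance at most $\varepsilon + \delta_{1,2}$. Then $\widetilde{M}_1$, with the same belief aggregation, is exactly the kind of partition-based quotient Theorem~\ref{thm:value-bound} covers, now for the family $\Pi_2$ and tolerance $\varepsilon+\delta_{1,2}$. That theorem then yields $|V_M^\pi - V_{\widetilde{M}_1}^\pi| \le L_R T(\varepsilon+\delta_{1,2})$ for every $\pi\in\Pi_2$.

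\emph{Step 1 (distance transfer).} Fix $h,h'$ in the same class of $\widetilde{M}_1$. By construction $d^{\Pi_1}(h,h')\le\varepsilon$; with complete linkage this holds pairwise, as the hypothesis of Theorem~\ref{thm:value-bound} requires. By the definition of $\delta_{1,2}$ as a supremum,
\[
d^{\Pi_2}(h,h') \le d^{\Pi_1}(h,h') + \delta_{1,2} \le \varepsilon + \delta_{1,2}.
\]
Proposition~\ref{prop:probe-class-coarsening} guarantees $d^{\Pi_2}-d^{\Pi_1}\ge 0$, so $\delta_{1,2}$ equals the sup-norm as stated.

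\emph{Step 2 (apply Theorem~\ref{thm:value-bound}).} I will rerun its proof for $\pi\in\Pi_2$. The quotient's conditional suffix law at $H_t$ is a belief-weighted mixture of the laws $P_M^\pi(O_{t+1:T}\mid h')$ for $h'\in[H_t]$. Each of these lies within $\cW_1$-distance $d^{\Pi_2}(H_t,h')\le\varepsilon+\delta_{1,2}$ of $P_M^\pi(O_{t+1:T}\mid H_t)$. Convexity of $\cW_1$ then gives $\Delta_t(H_t)\le\varepsilon+\delta_{1,2}$. The observation-Lipschitz hypothesis, read relative to $\Pi_2$, converts this into a stagewise reward gap of at most $L_R(\varepsilon+\delta_{1,2})$, and summing over $T$ stages finishes the argument.

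\emph{Main obstacle.} The delicate point is the reading of the Lipschitz hypothesis and of $L_R$. Definition~\ref{def:obs-lipschitz} quantifies over ``the relevant probe family'', and Remark~\ref{rem:lipschitz-small} notes that $L_R$ can depend on that family. I must therefore state that $L_R$ is the constant valid for $\Pi_2$, which is the family over which the policies range here.

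A second subtlety is that $\widetilde{M}_1$ is built from $\Pi_1$-distances but evaluated under $\Pi_2$-policies. I need the quotient dynamics to be defined for arbitrary policies, which they are, since Definition~\ref{def:quotient} gives transitions for every action. I also need the mixture representation of quotient suffix laws to hold policy-by-policy. That representation is inherited verbatim from the proof of Theorem~\ref{thm:value-bound}, so I will invoke it rather than re-derive it.
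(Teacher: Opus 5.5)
Your proposal is correct and matches the paper's proof essentially verbatim. The paper also argues that any pair merged under $\Pi_1$ (so $d^{\Pi_1}(h,h')\le\varepsilon$) satisfies $d^{\Pi_2}(h,h')\le\varepsilon+\delta_{1,2}$, giving the $\Pi_1$-partition $\Pi_2$-diameter at most $\varepsilon+\delta_{1,2}$, and then invokes Theorem~\ref{thm:value-bound} for $\Pi_2$ with that enlarged threshold; your point that $L_R$ must be the observation-Lipschitz constant relative to $\Pi_2$ is left implicit in the paper but is consistent with how it applies the value bound.
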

\begin{proof}
If $d^{\Pi_1}(h,h') \le \varepsilon$, then by definition of $\delta_{1,2}$,
\[
d^{\Pi_2}(h,h') \le d^{\Pi_1}(h,h') + \delta_{1,2} \le \varepsilon + \delta_{1,2}.
\]
Hence the partition built under $\Pi_1$ has pairwise $\Pi_2$-diameter at most $\varepsilon + \delta_{1,2}$. Applying Theorem~\ref{thm:value-bound} with probe family $\Pi_2$ and merge threshold $\varepsilon + \delta_{1,2}$ gives the result.
\end{proof}

\begin{corollary}[Operational-to-clock-aware transfer]
\label{cor:op-to-clk-transfer}
Let
\[
\delta_{\mathrm{clk}} := \|d^{\mathrm{clk}}_{m,T} - d^{\mathrm{op}}_{m,T}\|_\infty .
\]
Then for every $L_R$-observation-Lipschitz reward and every $\pi \in \Piclk$,
\[
\bigl|V_M^\pi - V_{\Qop(M)}^\pi\bigr| \le L_R\, T\,(\varepsilon + \delta_{\mathrm{clk}})
\]
whenever $\Qop(M)$ is constructed as an $\varepsilon$-quotient under $\Piop$ and the gap $\delta_{\mathrm{clk}}$ is measured for that same bounded regime.
\end{corollary}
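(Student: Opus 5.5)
This is a direct specialisation of Theorem~\ref{thm:cross-family-transfer}. We take $\Pi_1 = \Piop$ and $\Pi_2 = \Piclk$; then $\delta_{1,2}$ is exactly $\delta_{\mathrm{clk}}$, and $\widetilde{M}_1$ is exactly $\Qop(M)$ built as an $\varepsilon$-quotient.

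The first step is to verify the containment hypothesis $\Piop \subseteq \Piclk$. A deterministic stationary FSC with at most $m$ nodes is a clock-aware FSC whose stage-indexed maps are constant in time: set $\alpha_\tau \equiv \alpha$ and $\beta_\tau \equiv \beta$ for all $\tau$, with $\alpha$ and $\beta$ read as point-mass (Dirac) distributions. This is the inclusion already asserted in Proposition~\ref{prop:probe-class-coarsening}, so we can cite it directly. It also gives $d^{\mathrm{op}}_{m,T} \le d^{\mathrm{clk}}_{m,T}$ pointwise. Hence the difference $d^{\mathrm{clk}}_{m,T} - d^{\mathrm{op}}_{m,T}$ is nonnegative, and its supremum coincides with the sup-norm. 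This confirms that $\delta_{\mathrm{clk}}$ as defined matches $\delta_{1,2}$ in Theorem~\ref{thm:cross-family-transfer}.

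The second step is to check that the remaining hypotheses transfer.
\begin{itemize}
\item $\Qop(M)$ must be an $\varepsilon$-quotient under $\Piop$ in the sense of Theorem~\ref{thm:value-bound}: a partition with pairwise $d^{\mathrm{op}}_{m,T}$-diameter at most $\varepsilon$, with beliefs aggregated within each class. This is assumed in the statement.
\item The reward must be $L_R$-observation-Lipschitz with respect to the larger family $\Piclk$. Under that reading, the inner application of Theorem~\ref{thm:value-bound} with family $\Piclk$ and threshold $\varepsilon + \delta_{\mathrm{clk}}$ is legitimate.
\end{itemize}
Invoking Theorem~\ref{thm:cross-family-transfer} then yields $|V_M^\pi - V_{\Qop(M)}^\pi| \le L_R T(\varepsilon+\delta_{\mathrm{clk}})$ for every $\pi \in \Piclk$.

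The only genuine subtlety is the Lipschitz constant. Remark~\ref{rem:lipschitz-small} notes that $L_R$ depends on the probe family, so the constant must be understood relative to $\Piclk$. We would state this interpretation explicitly. The qualifier ``measured for that same bounded regime'' simply pins down that $\delta_{\mathrm{clk}}$ is computed over the same histories, horizon $T$ and memory bound $m$ that are used to build $\Qop$. With that reading there is no real obstacle, and the proof is a single citation.
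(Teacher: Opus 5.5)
Your proposal is correct and matches the paper's argument. The paper states this corollary without a separate proof, as the instantiation of Theorem~\ref{thm:cross-family-transfer} with $\Pi_1=\Piop\subseteq\Pi_2=\Piclk$ (the inclusion recorded in Proposition~\ref{prop:probe-class-coarsening}), so that $\delta_{1,2}=\delta_{\mathrm{clk}}$ and $\widetilde{M}_1=\Qop(M)$. You also note that $L_R$ must be the observation-Lipschitz constant relative to $\Piclk$, because the inner appeal to Theorem~\ref{thm:value-bound} uses that family; this is a clarification the paper leaves implicit, not a different route.
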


\begin{corollary}[Subset plus cross-family transfer]
\label{cor:subset-cross-family-transfer}
Let $d_S(h,h') := \sup_{\pi \in S} \cW_1(P_M^\pi(O_{t+1:T}\mid h), P_M^\pi(O_{t+1:T}\mid h'))$ be the subset probe envelope for $S \subseteq \Pi_1$, and let $\delta_S := \|d^{\Pi_1} - d_S\|_\infty$. Then for every $L_R$-observation-Lipschitz reward and every $\pi \in \Pi_2$,
\[
\bigl|V_M^\pi - V_{\widetilde{M}_S}^\pi\bigr| \le L_R\, T\,(\varepsilon + \delta_S + \delta_{1,2}).
\]
\end{corollary}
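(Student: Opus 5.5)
The plan is to chain the two gap estimates through the triangle inequality and then reuse the argument of Theorem~\ref{thm:cross-family-transfer}. Write $\widetilde{M}_S$ for the $\varepsilon$-quotient built from the subset envelope $d_S$. Its classes are produced by complete-linkage $\varepsilon$-clustering, so every pair of histories $h,h'$ in a common class satisfies $d_S(h,h') \le \varepsilon$. The goal is to lift this diameter bound to the largest family $\Pi_2$ and then invoke Theorem~\ref{thm:value-bound} for $\Pi_2$.

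\textbf{Step 1 (subset to $\Pi_1$).} Since $S \subseteq \Pi_1$, Proposition~\ref{prop:probe-class-coarsening} gives $d_S \le d^{\Pi_1}$ pointwise. Hence $\delta_S = \sup_{h,h'}\bigl(d^{\Pi_1}(h,h') - d_S(h,h')\bigr)$ is a one-sided sup of nonnegative differences. For any $h,h'$ in the same class of $\widetilde{M}_S$ this yields
\[
d^{\Pi_1}(h,h') \le d_S(h,h') + \delta_S \le \varepsilon + \delta_S .
\]

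\textbf{Step 2 ($\Pi_1$ to $\Pi_2$).} By the definition of $\delta_{1,2}$, exactly as in the proof of Theorem~\ref{thm:cross-family-transfer},
\[
d^{\Pi_2}(h,h') \le d^{\Pi_1}(h,h') + \delta_{1,2} \le \varepsilon + \delta_S + \delta_{1,2}.
\]
So the partition underlying $\widetilde{M}_S$ has pairwise $\Pi_2$-diameter at most $\varepsilon' := \varepsilon + \delta_S + \delta_{1,2}$. Its beliefs are aggregated within classes in the same way Definition~\ref{def:quotient} prescribes.

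\textbf{Step 3.} Apply Theorem~\ref{thm:value-bound} with probe family $\Pi_2$ and threshold $\varepsilon'$. The $L_R$-observation-Lipschitz hypothesis is taken relative to $\Pi_2$, the family over which $\pi$ ranges, as in the preceding theorem. This gives $|V_M^\pi - V_{\widetilde{M}_S}^\pi| \le L_R T \varepsilon'$ for every $\pi \in \Pi_2$.

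\textbf{Main obstacle.} The only delicate point is bookkeeping, not analysis. Two things must be checked. First, the quotient hypothesis of Theorem~\ref{thm:value-bound} must be satisfied by a partition that was built from $d_S$ rather than from the family in which the bound is stated. That theorem only uses the pairwise-diameter property and the convex belief aggregation, so this goes through. Second, the sup-norm gaps must be taken over the same equal-length history pairs and the same horizon regime. I would state this explicitly, mirroring the caveat in Corollary~\ref{cor:op-to-clk-transfer}.
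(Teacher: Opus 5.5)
Your proposal is correct and matches the paper's proof: you lift the $d_S$-diameter bound to $\Pi_1$ via $\delta_S$, then to $\Pi_2$ via $\delta_{1,2}$, and invoke Theorem~\ref{thm:value-bound} for $\Pi_2$ with merge threshold $\varepsilon+\delta_S+\delta_{1,2}$. The bookkeeping points you flag are left implicit in the paper but are consistent with its intended reading: the Lipschitz constant is taken relative to $\Pi_2$, and the gaps are measured over the same history pairs and horizon.
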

\begin{proof}
If $d_S(h,h') \le \varepsilon$, then $d^{\Pi_1}(h,h') \le \varepsilon + \delta_S$ and therefore
\[
d^{\Pi_2}(h,h') \le d^{\Pi_1}(h,h') + \delta_{1,2} \le \varepsilon + \delta_S + \delta_{1,2}.
\]
Applying Theorem~\ref{thm:value-bound} with probe family $\Pi_2$ and merge threshold $\varepsilon + \delta_S + \delta_{1,2}$ yields the bound.
\end{proof}

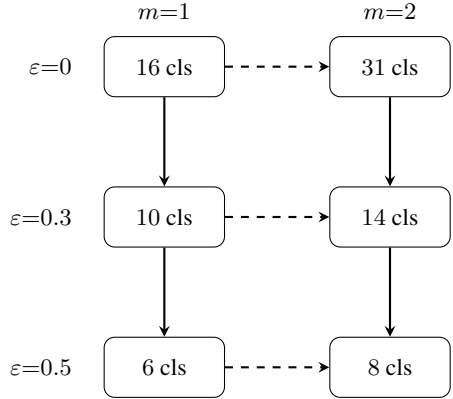
\begin{figure}[htbp]
\centering
\begin{tikzpicture}[
    node distance=2cm and 2.5cm,
    every node/.style={draw, rounded corners, minimum width=1.6cm, minimum height=0.8cm, align=center, font=\small},
    arrow/.style={->, >=stealth, thick}
]
\node (m1e5) at (0, 0) {$6$ cls};
\node (m2e5) at (3, 0) {$8$ cls};

\node (m1e3) at (0, 2) {$10$ cls};
\node (m2e3) at (3, 2) {$14$ cls};

\node (m1e0) at (0, 4) {$16$ cls};
\node (m2e0) at (3, 4) {$31$ cls};

\node[draw=none, minimum width=0pt, minimum height=0pt, left=0.3cm of m1e0, font=\footnotesize] {$\varepsilon{=}0$};
\node[draw=none, minimum width=0pt, minimum height=0pt, left=0.3cm of m1e3, font=\footnotesize] {$\varepsilon{=}0.3$};
\node[draw=none, minimum width=0pt, minimum height=0pt, left=0.3cm of m1e5, font=\footnotesize] {$\varepsilon{=}0.5$};

\node[draw=none, minimum width=0pt, minimum height=0pt, above=0.1cm of m1e0, font=\footnotesize] {$m{=}1$};
\node[draw=none, minimum width=0pt, minimum height=0pt, above=0.1cm of m2e0, font=\footnotesize] {$m{=}2$};

\draw[arrow] (m1e0) -- (m1e3);
\draw[arrow] (m1e3) -- (m1e5);
\draw[arrow] (m2e0) -- (m2e3);
\draw[arrow] (m2e3) -- (m2e5);

\draw[arrow, dashed] (m1e0) -- (m2e0);
\draw[arrow, dashed] (m1e3) -- (m2e3);
\draw[arrow, dashed] (m1e5) -- (m2e5);

\end{tikzpicture}
\caption{Refinement lattice for Tiger ($T{=}4$). Solid arrows: increasing $\varepsilon$ coarsens the quotient (Definition~\ref{def:eps-equiv}). Dashed arrows: increasing $m$ refines the quotient (Proposition~\ref{prop:probe-class-coarsening}---larger probe class $\Rightarrow$ finer partition). Data from operational capacity sweep.}
\label{fig:refinement-lattice}
\end{figure}

\section{\texorpdfstring{$\varepsilon$}{epsilon}-Quotients and Data-Processing Monotonicity}
\label{sec:wrappers}

Exact indistinguishability is often overly restrictive; we relax the framework to $\varepsilon$-approximate quotients.
We write $D_{m,T}^{\cW} := D_T^{\Pi}$ when the probe family $\Pi$ and the ground metric are clear from context, to emphasize the Wasserstein dependence.

\begin{definition}[$\varepsilon$-equivalence]
\label{def:eps-equiv}
$M \sim_{m,T}^\varepsilon N$ if $D_{m,T}^{\cW}(M,N) \leq \varepsilon$.
An \emph{$\varepsilon$-quotient} of $M$ is a reduced POMDP $\widetilde{M}$ with $D_{m,T}^{\cW}(M,\widetilde{M}) \leq \varepsilon$.
\end{definition}

In the approximate setting, the canonical object is the partition of histories. Materializing a quotient POMDP additionally requires a belief-aggregation rule inside each class; Proposition~\ref{prop:canonical-quotient} fixes the uniform canonical choice used throughout the experiments and bounds.

\begin{definition}[Wrapper]
\label{def:wrapper}
A wrapper $W$ maps a POMDP $M$ to $W(M)$ via an action remapping $g \colon A' \to \Delta(A)$ and an observation channel $C \colon O \to \Delta(O')$ with Lipschitz constant $L_C := \max_{o \neq o'} \cW_1(C(o), C(o');\, d_{O'}) / d_O(o, o')$.
\end{definition}

\begin{theorem}[Data-Processing Monotonicity]
\label{thm:data-processing}
Let $M, N$ be POMDPs sharing $(A,O)$, $W$ a wrapper with Lipschitz constant $L_C$, and assume the probe family $\Pi$ is closed under pullback through $W$ (i.e., for every $\pi' \in \Pi$ on the wrapped observation--action space, the induced controller $\pi$ on the original space belongs to $\Pi$; this holds for any family containing all $m$-bounded stochastic FSCs, in particular for $\Piclk$).
Then:
\begin{equation}
D_{m,T}^{\cW}\bigl(W(M), W(N)\bigr) \leq L_C \cdot D_{m,T}^{\cW}(M, N).
\end{equation}
In particular, equivalent models remain equivalent after wrapping.
\end{theorem}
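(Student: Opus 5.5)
The plan is to reduce the statement to a single-policy inequality, apply a coupling argument, and then take suprema. Fix a probe $\pi'$ in $\Pi$ acting on the wrapped spaces $(A',O')$. Its pullback $\pi$ acts on $(A,O)$: at each stage it feeds the wrapped controller the channel output $C(o)$ of the true observation and executes $g(a')$ for the chosen $a'$. Because $C$ and $g$ are random, $\pi$ needs extra internal randomness. This is the point where the closure-under-pullback hypothesis enters, guaranteeing $\pi\in\Pi$.

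\textbf{Step 1 (pushforward identity).} First I would check that the law of wrapped observations under $\pi'$ in $W(M)$ is the pushforward of $P_M^{\pi}$ through the product channel. The product channel is $C^{\otimes T}\colon O^T\to\Delta(O'^T)$, and it applies $C$ independently at each stage. The check is an induction on $t$: the joint law of $(S_t, O_{1:t}, O'_{1:t}, \text{node}_t)$ in $W(M)$ under $\pi'$ coincides with the corresponding law in $M$ under $\pi$ augmented with the channel draws. The same holds for $N$ with the same $\pi$. One subtlety needs care. The pulled-back $\pi$ must base its actions only on $O'$, and $O'$ is a randomized function of $O$. Consequently, conditional on the $O$-sequence, the channel draws are independent across stages and independent of the environment. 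This conditional independence is what makes $P^{\pi'}_{W(M)}=C^{\otimes T}_\# P^\pi_M$ hold as an identity of laws on $O'^T$.

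\textbf{Step 2 (Lipschitz contraction of $\mathcal W_1$).} Next I would show
\[
\cW_1\bigl(C^{\otimes T}_\#\mu,\,C^{\otimes T}_\#\nu\bigr)\le L_C\,\cW_1(\mu,\nu)
\]
under the summed metrics on $O^T$ and $O'^T$. To do this, take an optimal coupling $\gamma$ of $\mu$ and $\nu$. For each pair $(o,\bar o)$ in the support, and each stage $t$, choose an optimal coupling $\kappa_t$ of $C(o_t)$ and $C(\bar o_t)$; by the definition of $L_C$ it has cost at most $L_C d_O(o_t,\bar o_t)$. The product coupling $\bigotimes_t\kappa_t$ then costs at most $L_C\sum_t d_O(o_t,\bar o_t)$ under the summed metric. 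Mixing these product couplings over $\gamma$ gives a valid coupling of the two pushforwards, with total cost at most $L_C\cW_1(\mu,\nu)$. The measurable selection of the $\kappa_t$ is trivial here because $O$ is finite.

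\textbf{Step 3 (suprema).} Combining Steps 1 and 2 gives
\[
\cW_1\bigl(P^{\pi'}_{W(M)},P^{\pi'}_{W(N)}\bigr)\le L_C\,\cW_1\bigl(P^{\pi}_M,P^{\pi}_N\bigr)\le L_C\,D^{\cW}_{m,T}(M,N).
\]
Taking the supremum over $\pi'$ yields the theorem. The final clause follows by setting $D^{\cW}_{m,T}(M,N)=0$ and invoking Proposition~\ref{prop:pseudometric}.

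I expect the main obstacle to be Step 1, namely verifying that the pullback really lies in the family with the same memory budget. Simulating channel and action randomness inside a stochastic FSC works because $\alpha$ and $\beta$ are allowed to be stochastic: the randomness of $C$ can be folded into $\beta$, and that of $g$ into $\alpha$. This requires no extra nodes, and I would check it explicitly for $\Piclk$ with stage-indexed maps. For deterministic families the folding fails, which is exactly why the statement assumes closure.
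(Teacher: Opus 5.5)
Your plan follows the paper's proof step for step: pull back the probe, identify the wrapped law with $C^{\otimes T}_\# P^\pi_M$, contract, then take the supremum. Your Step~2 is correct; the mixture of product couplings is a valid substitute for the paper's appeal to Kantorovich--Rubinstein duality.

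The gap is Step~1. The paper's proof asserts the same identity in one line, so you have reproduced its gap rather than introduced a new one. You claim that, conditional on the whole $O$-sequence, the channel draws are independent with law $\prod_t C(\cdot\mid o_t)$. This fails for closed-loop probes. $O'_t$ drives the controller's later actions and hence the law of $O_{t+1:T}$, so conditioning on future original observations changes the posterior of $O'_t$. Writing $\mu_M$ for the environment's one-step observation kernel, the joint law is $\prod_t \mu_M(o_t\mid o_{<t},a_{\le t})\,C(o'_t\mid o_t)$ with $a_{\le t}$ chosen from $o'_{<t}$. This does not factor as $P^\pi_M(o)\prod_t C(o'_t\mid o_t)$.

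Here is a concrete counterexample:
\begin{itemize}
\item Let $o_1$ be a uniform bit, and let the stage-$2$ observation equal the stage-$2$ action.
\item Let $C$ be the binary symmetric channel with flip probability $1/4$, so $L_C=1/2$, and take $g=\mathrm{id}$.
\item Let $\pi'$ be the $2$-node probe that plays its last wrapped observation.
\end{itemize}
Then $P^{\pi'}_{W(M)}(0,0)=\tfrac12\cdot\tfrac34=\tfrac38$. Your pullback gives $P^\pi_M(o_1,o_2)=\tfrac12 C(o_2\mid o_1)$, hence $(C^{\otimes 2}_\# P^\pi_M)(0,0)=\tfrac{9}{32}\neq\tfrac38$. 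Indeed, given $o_1=o_2=0$, $o'_1$ is forced to be $0$ rather than being $C(\cdot\mid 0)$-distributed. So the first inequality of Step~3 is unsupported. The obstacle you flagged, membership of the pullback in $\Pi$, is not the real one.

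The identity, and with it your whole argument, is valid in two cases. One is when $C$ is deterministic, since $O'$ is then a function of $O$ (e.g.\ the paper's $4\to 2$ merging). The other is when $\pi'$ is open-loop.

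For a stochastic channel, the correct representation freezes the wrapped observations: $P^{\pi'}_{W(M)}(o')=\sum_o C^{\otimes T}(o'\mid o)\,P_M^{\sigma_{o'}}(o)$. Here $\sigma_{o'}$ is the open-loop stochastic policy whose stage-$t$ action law is fixed by $o'_{<t}$, averaged over internal node paths if $\beta'$ is stochastic. Each such policy is a one-node clock-aware controller, so this does prove the final clause (equivalence is preserved) for $\Piclk$.

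It does not give the constant $L_C$. Because $\sigma_{o'}$ changes with $o'$, the difference of wrapped laws is not the image of one signed measure under a single $L_C$-Lipschitz kernel. Conditioning instead on a random-map representation $O'_t=\phi(O_t,U_t)$ gives $P^{\pi'}_{W(M)}=\bE_U[(\phi_U)_\#P^{\pi_U}_M]$. This yields only the constant $\bE_U\max_t\mathrm{Lip}\,\phi(\cdot,U_t)$, which is generally larger than $L_C$. Obtaining $L_C$ for stochastic channels needs an additional idea that neither your proposal nor the paper's sketch supplies.
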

\begin{proofsketch}
The wrapper post-processes observation sequences through $C^{\otimes T}$.
By Kantorovich--Rubinstein duality and the Lipschitz property, $\cW_1$ contracts by factor $L_C$.
Taking the same supremum over the bounded controller family on both sides preserves that contraction, so equivalence is stable under wrapping.
\end{proofsketch}

\begin{definition}[$\delta_O$-coarsened agent class]
\label{def:delta-coarsening}
Given a ground metric $d_O$ on observations and resolution $\delta_O \geq 0$,
let $O_\delta \subseteq O$ be a minimal $\delta_O$-covering:
every $o \in O$ satisfies $\min_{o' \in O_\delta} d_O(o, o') \leq \delta_O$.
The \emph{$\delta_O$-coarsened agent class} $\Pi_{m,T,\delta}$ consists of
FSCs whose node-transition function factors through the quantization map
$q_\delta \colon O \to O_\delta$ defined by $q_\delta(o) = \argmin_{o' \in O_\delta} d_O(o, o')$.
\end{definition}

\begin{proposition}[Observation-resolution bounds]
\label{prop:obs-resolution}
Let $Q_{m,T,\delta}$ denote the $\varepsilon$-quotient under $\Pi_{m,T,\delta}$.
\begin{enumerate}[label=(\alph*),itemsep=1pt]
\item \emph{Distance bound:}
$D_{m,T,\delta}^{\cW}(M,\widetilde{M}) \leq D_{m,T}^{\cW}(M,\widetilde{M}) + T \cdot \delta_O$.
\item \emph{Partition size:}
$Q_{m,T,\delta}$ has at most $\sum_{t=0}^T |O_\delta|^t$ equivalence classes.
\item \emph{Value error:}
$|V_M^\pi - V_{Q_{m,T,\delta}}^\pi| \leq L_R \cdot T \cdot (\varepsilon + T\delta_O)$
for all $\pi \in \Pi_{m,T,\delta}$.
\end{enumerate}
\end{proposition}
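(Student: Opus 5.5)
We prove the three parts in order, with (a) feeding (c) and (b) standing on its own.

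For (a), the plan is to compare each coarsened probe with an unrestricted probe that reads quantized observations. Fix $\pi \in \Pi_{m,T,\delta}$. By Definition~\ref{def:delta-coarsening}, its node transitions factor through $q_\delta$, so $\pi$ is the pullback of a controller $\pi'$ acting on quantized observations $q_\delta(o)$. Let $\hat\pi \in \Pi_{m,T}$ be the ordinary controller that first applies $q_\delta$ and then runs $\pi'$; it is simply $\pi$ viewed as a member of the larger family. We then compare $P_M^\pi$ and $P_{\widetilde M}^\pi$ through an intermediate coupling. Observations are perturbed by at most $\delta_O$ per step, since $d_O(o,q_\delta(o))\le\delta_O$. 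Summed over the $T$ coordinates of $d_{O^T}$, this gives $\cW_1(q_\delta^{\otimes T}\#\mu,\mu)\le T\delta_O$ for any law $\mu$ on $O^T$.

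The triangle inequality for $\cW_1$ then bounds the coarsened distance by the unrestricted distance under $\hat\pi$ plus the quantization slack $T\delta_O$. Taking the supremum over $\pi$ and noting that $\hat\pi$ lies in $\Pi_{m,T}$ yields (a). The delicate point is accounting for the slack, because the trajectory law itself is unchanged by the factorization. The slack $T\delta_O$ is only needed when the coarsened distance is measured on the quantized observation space, which is the natural reading of "agent resolution" here. If instead both sides use raw $O^T$, then $\Pi_{m,T,\delta}\subseteq\Pi_{m,T}$, and Proposition~\ref{prop:probe-class-coarsening} gives the bound with no slack at all. I would present the monotonicity argument as the core and the coupling as the resolution-cost term, so that (a) holds under either convention.

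For (b), the key fact is that every probe in $\Pi_{m,T,\delta}$ depends on a history only through $q_\delta(h)\in O_\delta^t$. I would therefore argue that histories with the same quantized image merge under $\equiv^{\Pi_{m,T,\delta}}$, so that each depth $t$ contributes at most $|O_\delta|^t$ classes. Summing over $t=0,\dots,T$ gives the stated count.

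This merging step is the main obstacle. Identical controller behaviour does not by itself force identical suffix laws from the environment. My first attempt would be to define the equivalence on the coarsened observation tree, interpreting the quotient as a partition of $O_\delta^{\le T}$. I would make this the standing convention and note that it is exactly the index bound of Proposition~\ref{prop:equiv-props}(c) with $O$ replaced by $O_\delta$.

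For (c), I apply Theorem~\ref{thm:value-bound} with probe family $\Pi_{m,T,\delta}$. Using (a), the classes have pairwise diameter at most $\varepsilon+T\delta_O$, by the same triangle argument applied at the history level to $d^{\Pi_{m,T,\delta}}$. Substituting this merge threshold into Theorem~\ref{thm:value-bound} immediately gives $L_R T(\varepsilon+T\delta_O)$, which is (c).
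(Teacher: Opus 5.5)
Your part (b) is the paper's argument: it counts the $|O_\delta|^t$ histories of the coarsened tree. You are right that this only works under that reading. On raw histories, constant-action probes trivially factor through $q_\delta$ and can still separate histories with the same quantized image.

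In (a), your inclusion argument ($\Pi_{m,T,\delta}\subseteq\Pi_{m,T}$, raw laws, zero slack) is cleaner than the paper's ``per-step shift by $\delta_O$'' argument, which is your coupling step. But your claim that the coupling yields $T\delta_O$ in the quantized reading is wrong. Bounding $\cW_1\bigl(q_\delta^{\otimes T}\#P_M^\pi,\,q_\delta^{\otimes T}\#P_{\widetilde M}^\pi\bigr)$ by the triangle inequality needs a quantization leg on \emph{both} laws, so you get $2T\delta_O$. The factor is real: take $T=1$ and two observations just either side of the midpoint between representatives at distance $2\delta_O$. The raw distance is $\approx 0$, while the quantized distance is $2\delta_O$. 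So (a) does not hold ``under either convention'' with the stated constant. The raw reading gives (a) but not (b), and the coarsened reading gives (b) but only $2T\delta_O$ in (a). The paper's per-step argument has the same defect, but you should not claim both.

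For (c) you genuinely depart from the paper. The paper transfers the partition to a $\Pi_{m,T}$-diameter bound via $d^{\Pi_{m,T}}(h,h')\le d^{\Pi_{m,T,\delta}}(h,h')+T\delta_O$, then applies Theorem~\ref{thm:value-bound} over the full family. You instead apply Theorem~\ref{thm:value-bound} with probe family $\Pi_{m,T,\delta}$ itself. Yours is the sound route for the statement as posed, with $L_R$ taken relative to the same distance used to build the partition. Here $\pi$ ranges only over $\Pi_{m,T,\delta}$, and the $\Pi_{m,T,\delta}$-diameter is at most $\varepsilon$ by construction. So you actually get $L_R T\varepsilon$, and the $T\delta_O$ term is pure slack.

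Citing (a) at that step is unnecessary and points the wrong way. Part (a) bounds $d^{\Pi_{m,T,\delta}}$ from above, whereas a diameter transfer needs the target distance bounded above by $d^{\Pi_{m,T,\delta}}$. The paper's route would buy more (the bound for every $\pi\in\Pi_{m,T}$), but its key inequality is exactly that reverse direction. A per-step observation shift does not deliver it. A full-resolution controller can branch on a distinction $q_\delta$ erases and steer into far-apart futures, so $d^{\Pi_{m,T}}$ can be of order one while $d^{\Pi_{m,T,\delta}}\le\delta_O$.
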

\begin{proofsketch}
Part~(a): replacing each observation by its $\delta_O$-nearest representative shifts
the per-step $\cW_1$ by at most $\delta_O$; summing over $T$ steps gives the additive
$T\delta_O$ term.
Part~(b): the coarsened history space has $|O_\delta|^t$ histories at depth~$t$.
Part~(c): the partition under $\Pi_{m,T,\delta}$ has $\Pi_{m,T,\delta}$-diameter at most $\varepsilon$; the same per-step $\delta_O$ shift from part~(a) applies at the history level to conditional suffix distances, so the partition has $\Pi_{m,T}$-diameter at most $\varepsilon + T\delta_O$. Theorem~\ref{thm:value-bound} then applies with merge threshold $\varepsilon + T\delta_O$.
\end{proofsketch}

\begin{theorem}[Compositional horizon-scaling bound]
\label{thm:layered-composition}
Let $T$ be the end-to-end horizon.
Let reference models $(M_i)_{i=1}^{L+1}$, approximate models $(\widetilde{M}_i)_{i=1}^{L+1}$, and wrappers
$W_i$ with Lipschitz constants $L_i$ satisfy
\begin{equation}
\Gamma_1 := D_{m,T}^{\cW}(M_1,\widetilde{M}_1),\qquad
M_{i+1}=W_i(M_i),\qquad
D_{m,T}^{\cW}\!\bigl(W_i(\widetilde{M}_i),\widetilde{M}_{i+1}\bigr)\le \varepsilon_i
\quad (i=1,\ldots,L).
\end{equation}
Define $\Gamma_i := D_{m,T}^{\cW}(M_i,\widetilde{M}_i)$.
Then the cumulative distortion obeys
\begin{equation}
\label{eq:layered-recursion}
\Gamma_{i+1}\le L_i\Gamma_i+\varepsilon_i,\qquad i=1,\ldots,L,
\end{equation}
and hence the final-layer distortion satisfies
\begin{equation}
\label{eq:layered-composition}
\Gamma_{L+1}
\le
\left(\prod_{j=1}^{L} L_j\right)\Gamma_1
+
\sum_{i=1}^L \left(\prod_{j=i+1}^{L} L_j\right)\varepsilon_i.
\end{equation}
In the common zero-initialization case $\widetilde{M}_1=M_1$, one has $\Gamma_1=0$ and only the weighted sum of the per-layer residuals remains.
\end{theorem}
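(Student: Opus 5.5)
The plan is to prove the one-step recursion \eqref{eq:layered-recursion} with the triangle inequality of Proposition~\ref{prop:pseudometric} and the contraction of Theorem~\ref{thm:data-processing}. The closed form \eqref{eq:layered-composition} then follows by unrolling the recursion. Throughout I work with the single probe family implicit in $D_{m,T}^{\cW}$. I assume it is closed under pullback through every $W_i$, so that Theorem~\ref{thm:data-processing} applies at each layer; this holds for $\Piclk$.

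\emph{Step 1 (recursion).} Fix $i \in \{1,\ldots,L\}$. Since $M_{i+1} = W_i(M_i)$, I insert the intermediate model $W_i(\widetilde{M}_i)$. This model shares $(A',O')$ with both $M_{i+1}$ and $\widetilde{M}_{i+1}$, which is needed for the triangle inequality of Proposition~\ref{prop:pseudometric}, and I will state this compatibility as part of the hypotheses. The inequality gives
\[
\Gamma_{i+1} = D_{m,T}^{\cW}\bigl(W_i(M_i),\widetilde{M}_{i+1}\bigr)
\le D_{m,T}^{\cW}\bigl(W_i(M_i),W_i(\widetilde{M}_i)\bigr) + D_{m,T}^{\cW}\bigl(W_i(\widetilde{M}_i),\widetilde{M}_{i+1}\bigr).
\]
Theorem~\ref{thm:data-processing}, applied to the pair $(M_i,\widetilde{M}_i)$, bounds the first term by $L_i\Gamma_i$. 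The hypothesis bounds the second term by $\varepsilon_i$. Together these give \eqref{eq:layered-recursion}.

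\emph{Step 2 (unrolling).} I prove \eqref{eq:layered-composition} by induction on $L$. For $L=0$ it reads $\Gamma_1 \le \Gamma_1$, where the empty product is $1$ and the empty sum is $0$. For the inductive step, I take the bound for $\Gamma_L$ and substitute it into $\Gamma_{L+1} \le L_L\Gamma_L + \varepsilon_L$. This is valid because $L_L \ge 0$, so multiplying an inequality by $L_L$ preserves it. The multiplication by $L_L$ extends each product $\prod_{j=i+1}^{L-1} L_j$ to $\prod_{j=i+1}^{L} L_j$. The new term $\varepsilon_L$ appears with the empty product $\prod_{j=L+1}^{L} L_j = 1$. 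The case $\widetilde{M}_1 = M_1$ is then immediate, because Proposition~\ref{prop:pseudometric} gives $\Gamma_1 = 0$.

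The only delicate point is Step 1's use of Theorem~\ref{thm:data-processing}. That theorem requires the probe family on the wrapped space at layer $i+1$ to pull back into the family used at layer $i$. So I must state explicitly that a single family is fixed per layer (for example, clock-aware $m$-bounded FSCs on each layer's action and observation alphabets) and is pullback-closed. I must also make sure the horizon $T$ is held fixed across layers, since wrappers act stagewise and do not change $T$. Once that bookkeeping is in place, the rest is the routine linear-recursion calculation above.
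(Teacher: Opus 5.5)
Your proposal is correct and follows the paper's proof essentially verbatim. You insert $W_i(\widetilde{M}_i)$ via the triangle inequality, bound the first term by $L_i\Gamma_i$ using Theorem~\ref{thm:data-processing} and the second term by the residual hypothesis $\varepsilon_i$, then unroll the linear recursion by induction. The side conditions you make explicit are left implicit in the paper: pullback closure of the probe family at every layer, shared alphabets for the three models in the triangle inequality, a fixed $T$, and $L_i \ge 0$. Stating them adds rigor without changing the route.
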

\begin{proofsketch}
The first term comes from propagating the previous layer's discrepancy through the wrapper, and the second from the fresh approximation introduced at the new layer.
Apply Theorem~\ref{thm:data-processing} and the triangle inequality to obtain~\eqref{eq:layered-recursion}, then unroll the recursion by induction.
\end{proofsketch}

\begin{corollary}[Layered value bound]
\label{cor:layered-value}
If additionally $\widetilde{M}_{L+1}$ is a partition-based quotient of $M_{L+1}$ with class diameter at most $\Gamma_{L+1}$ (i.e., the partition is chosen so that its pairwise probe-distance does not exceed the propagated bound from Theorem~\ref{thm:layered-composition}), and the reward on the final observation space is $L_R$-observation-Lipschitz,
then for any $\pi\in\Pi_{m,T}$:
\begin{equation}
\bigl|V_{M_{L+1}}^\pi - V_{\widetilde{M}_{L+1}}^\pi\bigr|
\le
L_R \cdot T \cdot
\left[
\left(\prod_{j=1}^{L} L_j\right)\Gamma_1
+
\sum_{i=1}^L \left(\prod_{j=i+1}^{L} L_j\right)\varepsilon_i.
\right]
\end{equation}
\end{corollary}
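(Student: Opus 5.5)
The corollary is a direct combination of two earlier results: the layered distortion bound of Theorem~\ref{thm:layered-composition} and the value-function error bound of Theorem~\ref{thm:value-bound}. The plan is to invoke the former to control the class diameter of the final-layer partition, then feed that diameter into the latter as the merge threshold.

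\emph{Step 1 (propagated distortion).} Under the hypotheses of Theorem~\ref{thm:layered-composition}, the recursion $\Gamma_{i+1}\le L_i\Gamma_i+\varepsilon_i$ holds. Unrolling it gives
\[
\Gamma_{L+1}\le B:=\Bigl(\prod_{j=1}^{L}L_j\Bigr)\Gamma_1+\sum_{i=1}^{L}\Bigl(\prod_{j=i+1}^{L}L_j\Bigr)\varepsilon_i .
\]
Here an empty product is read as $1$, so the $i=L$ summand is just $\varepsilon_L$. I will take this bound as given rather than re-derive it.

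\emph{Step 2 (diameter hypothesis).} The additional assumption says that $\widetilde{M}_{L+1}$ is a partition-based quotient of $M_{L+1}$ whose classes have pairwise probe-distance at most $\Gamma_{L+1}$, computed under the fixed family $\Pi_{m,T}$ for the model $M_{L+1}$. Since $\Gamma_{L+1}\le B$, the partition has diameter at most $B$. Beliefs are aggregated within classes as in Definition~\ref{def:quotient}. The partition therefore meets the hypotheses of Theorem~\ref{thm:value-bound} with $M:=M_{L+1}$, $\widetilde{M}:=\widetilde{M}_{L+1}$ and $\varepsilon:=B$. One should confirm that the quotient is built over the same probe family $\Pi_{m,T}$ used in $D^{\cW}_{m,T}$, so the policy $\pi$ in the conclusion ranges over that family.

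\emph{Step 3 (value bound).} With the reward on the final observation space $L_R$-observation-Lipschitz, Theorem~\ref{thm:value-bound} gives $|V^\pi_{M_{L+1}}-V^\pi_{\widetilde{M}_{L+1}}|\le L_R T\,B$ for every $\pi\in\Pi_{m,T}$. This is exactly the claimed inequality.

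\emph{Main obstacle.} The only delicate point is the mismatch between two quantities. $\Gamma_{L+1}$ is a model-level distance $D^{\cW}_{m,T}(M_{L+1},\widetilde{M}_{L+1})$, whereas Theorem~\ref{thm:value-bound} needs a history-level class diameter. The corollary sidesteps this by \emph{assuming} that the partition diameter does not exceed $\Gamma_{L+1}$, so no conversion between the two is needed. The proof must simply note that it is this stated assumption, not the layered theorem, that supplies the history-level bound, with monotonicity in $\varepsilon$ then carrying the bound from $\Gamma_{L+1}$ up to $B$.
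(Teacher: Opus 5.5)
Your proof is correct and follows the paper's argument: the paper likewise applies Theorem~\ref{thm:value-bound} to the terminal partition-based quotient with merge threshold $\Gamma_{L+1}$, then substitutes the unrolled bound from Theorem~\ref{thm:layered-composition}. The only difference is cosmetic: you feed the bound into the diameter hypothesis, taking the threshold to be the unrolled bound via monotonicity, whereas the paper substitutes it into the final inequality.
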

\begin{proofsketch}
Theorem~\ref{thm:layered-composition} bounds $\Gamma_{L+1}$; Theorem~\ref{thm:value-bound} applied with merge threshold $\Gamma_{L+1}$ gives the result.
\end{proofsketch}

\begin{proposition}[Layered construction complexity]
\label{prop:layered-complexity}
For fixed $m$ and uniform segment horizon $\tau$ with $T=L\tau$, monolithic computation has complexity
\begin{equation}
\label{eq:mono-complexity}
O\!\left(\frac{|O|^{T+1}-1}{|O|-1}\cdot |A|^m m^{m|O|}\cdot T\lvert S\rvert^2|O|\right),
\end{equation}
while layered construction costs
\begin{equation}
\label{eq:layered-complexity}
O\!\left(
L\cdot\frac{|O|^{\tau+1}-1}{|O|-1}\cdot |A|^m m^{m|O|}\cdot \tau\lvert S\rvert^2|O|
\right).
\end{equation}
Hence layering replaces the exponential history term in $T$ by a linear factor in $L$ times an exponential in $\tau$.
\end{proposition}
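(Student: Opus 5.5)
The plan is a direct cost count: build the quotient by enumerating histories, enumerating probes, evaluating each probe's conditional suffix law by belief-space forward propagation, and clustering. For the monolithic cost \eqref{eq:mono-complexity}, I would split the computation into three multiplicative factors.

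\emph{History factor.} The observation-history tree up to depth $T$ has $\sum_{t=0}^{T}|O|^t=(|O|^{T+1}-1)/(|O|-1)$ nodes. Each history $h$ needs its belief $b_h$, and one Bayes update per tree edge costs $O(|S|^2)$, which is dominated by the later terms.

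\emph{Probe factor.} A deterministic FSC with at most $m$ nodes is specified by an action per node, giving $|A|^m$ choices, and a successor node per (node, observation) pair, giving $m^{m|O|}$ choices. This produces the $|A|^m m^{m|O|}$ term. The accounting fixes the operational family $\Piop$; for $\Piclk$ the count would carry an extra factor of $T$ in the exponent, so I would state the proposition for the enumerated deterministic family used in the experiments.

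\emph{Per-(history, probe) cost.} For a fixed $h$ and $\pi$, I propagate the joint (belief, controller-node) state forward for at most $T$ stages. Each stage requires a matrix-vector transition, $O(|S|^2)$, for each of the $|O|$ observation branches. This gives $O(T|S|^2|O|)$ per stage-sweep, provided the suffix law is accumulated stage-wise (marginal or layered recursion) rather than materialised over all of $O^{T-t}$.

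Multiplying the three factors gives \eqref{eq:mono-complexity}. The layered bound \eqref{eq:layered-complexity} follows by applying the identical count to each of the $L$ segments of horizon $\tau$, using the segment composition of Theorem~\ref{thm:layered-composition}: each segment is built independently with its own history tree of depth $\tau$, probe enumeration, and $\tau$-step propagation, so the segment costs add. The final sentence is then the comparison of $(|O|^{T+1}-1)/(|O|-1)=\Theta(|O|^{T})$ with $L\cdot\Theta(|O|^{\tau})=\Theta(L|O|^{T/L})$.

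The main obstacle is the per-pair cost. The exact law over $O^{T-t}$ has exponentially many outcomes, and $\cW_1$ between such laws is not linear-time in general. I would handle this by making explicit that the bound counts forward propagation of the probe's suffix law through the already-enumerated history tree. The subtree below $h$ is shared with the outer history enumeration, so it is not re-enumerated. I would also charge $\cW_1$ evaluation and pairwise clustering separately, or note that under the summed ground metric they reduce to per-stage computations absorbed in the $T|O|$ factor. If that absorption fails, I would state the bound as the cost of computing the probe-law table and treat the distance and clustering costs as an additive term.
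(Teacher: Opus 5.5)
Your argument follows the paper's route. The monolithic bound is the per-horizon cost of the enumeration procedure at horizon $T$, and the layered bound is $L$ independent copies of that cost at horizon $\tau$, summed. The paper's proof is thinner than yours. It takes the monolithic expression to be the worst-case cost already asserted (without derivation) for Algorithm~\ref{alg:approx-partition} in Section~\ref{sec:spectral-approx}, substitutes $\tau$ for $T$ per segment, and multiplies by $L$. Your factor count is therefore extra justification. Your remark that $|A|^m m^{m|O|}$ counts the deterministic stationary family that Algorithm~\ref{alg:approx-partition} enumerates is correct; a clock-aware count would be $(|A|^m m^{m|O|})^T$.

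\textbf{The per-stage option for $\cW_1$ fails.} Drop the option that $\cW_1$ evaluation ``reduces to per-stage computations'' under the summed ground metric, because it is false. For example, $P=\tfrac12\delta_{(u,u)}+\tfrac12\delta_{(v,v)}$ and $Q=\tfrac12\delta_{(u,v)}+\tfrac12\delta_{(v,u)}$ on $O^2$ have identical stage marginals, yet $\cW_1(P,Q)=1$.

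Your fallback is the right reading, and it matches what the paper's formula actually counts: history-tree $\times$ probe $\times$ propagation work. The formula has no factor for the pairwise $\cW_1$ linear programs or the clustering. Those pairs alone number about $|O|^{2t}$ at depth $t$, which the stated expression does not dominate in general.

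\textbf{The per-pair cost needs amortisation.} The cost $O(T|S|^2|O|)$ per (history, probe) pair is not enough to materialise even one suffix law over $O^{T-t}$. The bound holds only amortised over the tree, which your shared-subtree remark points toward; make it explicit. Work per probe, since the beliefs depend on the probe's actions along the prefix. Bayes updates and one-step predictive laws at all $\frac{|O|^{T+1}-1}{|O|-1}$ tree nodes cost $O(|S|^2|O|)$ per node. Forming every suffix law by products along subtree paths costs $\sum_{t=0}^{T}|O|^t\cdot O(|O|^{T-t})=O(T|O|^T)$. Both fit inside the stated per-probe factor, and the layered count then follows exactly as in the paper.
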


In the Options framework \citep{sutton1999between}, wrappers correspond to hierarchy layers: macro-actions are action remappings, abstract observations are channels.
Theorem~\ref{thm:layered-composition} gives the corresponding multi-layer error accumulation rule and Proposition~\ref{prop:layered-complexity} gives the associated horizon-scaling tradeoff.

\section{Operational Approximation Machinery}
\label{sec:spectral-approx}

This section is operational rather than theorem-level. Greedy subset selection gives a practical way to compress the deterministic-stationary probe class, but by itself it only optimizes a coverage objective. The role of $\delta_S$ is to turn a chosen subset into an \emph{a posteriori} operational certificate: once computed from the full deterministic-stationary FSC tensor on operational exact-for-family benchmarks, it measures how much probe-envelope information the subset misses and therefore whether the value guarantee survives the compression.

\begin{algorithm}[htbp]
\caption{Approximate Partition Refinement for $\varepsilon$-Quotient}
\label{alg:approx-partition}
\small
\begin{algorithmic}[1]
\Require POMDP $M$, bounds $m, T$, tolerance $\varepsilon > 0$
\Ensure $\varepsilon$-quotient $\widetilde{Q}$
\State Enumerate deterministic FSCs $\Pi_m^{\mathrm{det}}$
\State Initialize partition $\mathcal{P} \gets \{O^t : 0 \le t \le T\}$ \Comment{group by length}
\Repeat
  \For{each block $B \in \mathcal{P}$}
    \State Split $B$ by $\max_{\pi} \cW_1\bigl(P_M^\pi(\cdot \mid h),\, P_M^\pi(\cdot \mid h')\bigr) \le \varepsilon$
  \EndFor
\Until{$\mathcal{P}$ is stable}
\State \Return quotient POMDP with canonical beliefs $\bar{b}_{[h]}$
\end{algorithmic}
\end{algorithm}

Worst-case complexity is $O\!\left(\frac{|O|^{T+1}-1}{|O|-1} \cdot |A|^m \cdot m^{m|O|} \cdot T \lvert S\rvert^2 |O|\right)$---polynomial in $|S|$ for fixed $m, T$.
This probe-exact procedure has cost exponential in the horizon parameter $T$, because it expands a depth-$T$ history tree and evaluates depth-$T$ bounded-controller behavior explicitly; whether the problem itself requires exponential time remains open (Appendix~\ref{app:hardness}).

\paragraph{Structural vs.\ operational role.}
The classical Myhill--Nerode theorem for DFAs yields an $O(n \log n)$ minimization algorithm \citep{hopcroft1979introduction}; our bounded-interaction analogue does not currently come with a comparable polynomial-time algorithm for the probe-exact quotient, and the explicit procedure above remains exponential in the horizon parameter. The paper's main scientific claim is therefore structural: it identifies the canonical bounded-observer target and proves its uniqueness, minimality, and value-preservation properties. The operational machinery is the companion computational layer. It explains how tractable coarsenings such as $\Qop$ can be studied in practice through subset certificates, sampling, layering, and observation coarsening, but those operational studies only support theorem-level statements when an explicit subset or cross-family certificate is reported for the same bounded regime. Otherwise they should be read as empirical case studies of a tractable surrogate rather than as additional theorem validation, and the uncertified medium-scale tables should not be read as part of the paper's core contribution claims.
Appendix~\ref{app:structural-subset} records a stricter structural sufficient condition via anchor families.

\paragraph{Practical tractability.}
The four complexity factors---histories ($|O|^T$), FSCs ($|A|^m m^{m|O|}$), belief propagation ($|S|^2$), and observation-alphabet size ($|O|$)---are addressed by complementary techniques.
(1)~\emph{The history bottleneck is mitigated by layered horizon decomposition} (Theorem~\ref{thm:layered-composition}, Proposition~\ref{prop:layered-complexity}): replacing one $T$-step computation by $L$ short segments of length $\tau$ changes the dominant history factor from $|O|^T$ to $L|O|^\tau$.
(2)~\emph{The FSC bottleneck is mitigated by greedy controller-subset selection} (Algorithm~\ref{alg:greedy-fsc}): on the operational exact-for-family Tiger and GridWorld benchmarks, small subsets recover the full probe envelope after explicit \emph{a posteriori} checking ($\delta_S{=}0$; Table~\ref{tab:partition-agreement}), while the same heuristic also underlies the appendix-only archival stress tracks for larger-$m$ regimes (Appendix~\ref{sec:long-horizon-scaling}).
(3)~\emph{Sampling-based $\cW_1$ estimation} decouples the per-history cost from $|S|$: each trajectory simulation costs $O(T)$ per step after an $O(|S|)$ belief sample, enabling operational case studies up to $|S|{=}100$ in $0.30$\,s (Table~\ref{tab:medium-scale}).
(4)~\emph{The observation-alphabet bottleneck is reduced by $\delta_O$-coarsening} (Proposition~\ref{prop:obs-resolution}): replacing $|O|$ by $|O_\delta| \leq |O|$ reduces the history factor from $|O|^T$ to $|O_\delta|^T$ at the cost of an additive $L_R T^2 \delta_O$ term in the value bound.
Together, these techniques make the operational core experiments practical on a single CPU core, while the heavier long-horizon and larger-$m$ stress tracks are reported separately in Appendix~\ref{sec:long-horizon-scaling}.

\paragraph{Greedy controller-subset selection.}
The FSC distance tensor $D_{(i,j),p} = \cW_1(P_M^{\pi_p}(\cdot \mid h_i), P_M^{\pi_p}(\cdot \mid h_j))$ is operationally compressible: distinguishing power often concentrates on a small number of high-coverage controllers. We therefore select FSCs by maximizing the monotone submodular coverage objective $f(S) = \sum_{(i,j)} \max_{p \in S} D_{(i,j),p}$ via greedy selection.

\begin{algorithm}[htbp]
\caption{Greedy Controller-Subset Selection}
\label{alg:greedy-fsc}
\small
\begin{algorithmic}[1]
\Require Distinguishing matrix $\mathbf{D} \in \bR^{\binom{n}{2} \times P}$, budget $k$
\Ensure Index set $S \subseteq \{1, \ldots, P\}$ with $|S| = k$
\State $S \gets \emptyset$;\; $d^*_{(i,j)} \gets 0$ for all $(i,j)$
\For{$t = 1, \ldots, k$}
  \State $p^* \gets \argmax_{p \notin S} \sum_{(i,j)} \max(0,\, D_{(i,j),p} - d^*_{(i,j)})$
  \State $S \gets S \cup \{p^*\}$
  \State $d^*_{(i,j)} \gets \max(d^*_{(i,j)},\, D_{(i,j),p^*})$ for all $(i,j)$
\EndFor
\State \Return $S$
\end{algorithmic}
\end{algorithm}

\begin{proposition}[Submodular guarantee]
\label{prop:submodular}
The greedy selection achieves $f(S_k) \geq (1 - 1/e) \cdot f(S_k^*)$ \citep{nemhauser1978analysis}.
\end{proposition}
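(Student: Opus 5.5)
The plan is to verify that the coverage objective $f(S) = \sum_{(i,j)} \max_{p \in S} D_{(i,j),p}$ is normalized, monotone, and submodular on subsets of $\{1,\ldots,P\}$. We will then identify Algorithm~\ref{alg:greedy-fsc} with the standard greedy rule for cardinality-constrained maximization, and finally invoke the Nemhauser--Wolsey--Fisher theorem \citep{nemhauser1978analysis}. Throughout, the convention is that a max over the empty set equals $0$. This matches the initialization $d^*_{(i,j)} = 0$ and is harmless because every entry $D_{(i,j),p} = \cW_1(\cdot,\cdot) \ge 0$. Hence $f(\emptyset) = 0$ and $f \ge 0$.

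\emph{Step 1 (per-pair structure).} Fix a pair $(i,j)$ and set $g_{ij}(S) := \max_{p\in S} D_{(i,j),p}$.

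\emph{Monotonicity.} Enlarging $S$ can only enlarge a maximum of nonnegative numbers, so $S \subseteq S'$ implies $g_{ij}(S)\le g_{ij}(S')$.

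\emph{Submodularity.} For $p \notin S'$ the marginal gain is $g_{ij}(S\cup\{p\}) - g_{ij}(S) = \max(0, D_{(i,j),p} - g_{ij}(S))$. This is a nonincreasing function of $g_{ij}(S)$. Since $g_{ij}(S) \le g_{ij}(S')$, the gain for $S$ is at least the gain for $S'$, which is the diminishing-returns inequality.

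\emph{Summing.} Both properties are preserved under nonnegative sums, so $f = \sum_{(i,j)} g_{ij}$ is monotone submodular with $f(\emptyset)=0$.

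\emph{Step 2 (the algorithm is exact greedy).} By induction on the iteration $t$, the stored vector $d^*$ equals $(g_{ij}(S))_{(i,j)}$ for the current $S$. The initialization and the update line of Algorithm~\ref{alg:greedy-fsc} maintain this invariant. Consequently, the score maximized in the selection line, $\sum_{(i,j)}\max(0, D_{(i,j),p} - d^*_{(i,j)})$, is exactly the marginal gain $f(S\cup\{p\}) - f(S)$ from Step 1. The algorithm is therefore the standard greedy procedure; ties may be broken arbitrarily.

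\emph{Step 3 (conclusion).} Let $S_k^*$ be any maximizer of $f$ over $|S|\le k$; here we assume $k \le P$. The classical argument proceeds as follows. Monotonicity and submodularity give
\[
f(S_k^*) \le f(S_t) + \sum_{p\in S_k^*}\bigl(f(S_t\cup\{p\})-f(S_t)\bigr) \le f(S_t) + k\bigl(f(S_{t+1})-f(S_t)\bigr).
\]
Rearranging yields $f(S_k^*) - f(S_{t+1}) \le (1-1/k)\,(f(S_k^*) - f(S_t))$. Iterating this $k$ times from $f(S_0)=0$ gives
\[
f(S_k) \ge \bigl(1-(1-1/k)^k\bigr) f(S_k^*) \ge (1-1/e)\, f(S_k^*).
\]
The only point needing care is Step 1's marginal-gain identity. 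This is the main obstacle, though a mild one: it depends on nonnegativity of the $\cW_1$ entries and on the empty-max convention. Once those are fixed, the rest is the cited theorem.
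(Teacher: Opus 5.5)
Your proof is correct and follows the same route as the paper, which asserts that the coverage objective $f(S)=\sum_{(i,j)}\max_{p\in S}D_{(i,j),p}$ is monotone submodular and then cites Nemhauser--Wolsey--Fisher without further argument. You fill in the checks that the citation leaves implicit, and all of them are sound: the nonnegativity of the $\cW_1$ entries, the empty-max convention matching the initialization $d^*=0$, the marginal-gain identity showing that Algorithm~\ref{alg:greedy-fsc} is exactly the standard greedy rule, and the $(1-1/k)^k\le 1/e$ recursion.
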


This $(1-1/e)$ guarantee is a coverage guarantee only: by itself it does not imply operational exact-partition recovery or value preservation.
Those stronger claims require measured small probe-envelope gap $\delta_S$, and Table~\ref{tab:partition-agreement} is empirical-plus-certified only because $\delta_S{=}0$ is explicitly checked from the full deterministic-stationary FSC tensor.

\paragraph{Operational certificate for controller subsets.}
Let $U := \{(i,j) : \text{history pairs}\}$ index the rows of the distance tensor, and write
$d(u) := \max_p D_{u,p}$ for the \emph{full probe envelope} and
$d_S(u) := \max_{p \in S} D_{u,p}$ for the \emph{subset probe envelope} under a selected controller subset~$S$.
Define the \emph{probe-envelope gap} of~$S$ by
\[
  \delta_S := \|d - d_S\|_\infty .
\]

\begin{theorem}[Uniform probe approximation]
\label{thm:probe-approx}
The $\varepsilon$-quotient built from~$S$ satisfies
\[
  |V_M^\pi - V_{\widetilde{M}_S}^\pi| \leq L_R\, T\,(\varepsilon + \delta_S)
  \qquad \forall\, \pi \in \Pi_{m,T}.
\]
\end{theorem}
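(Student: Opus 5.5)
The plan follows the proof of Theorem~\ref{thm:cross-family-transfer} and Corollary~\ref{cor:subset-cross-family-transfer}. Here the ``larger'' family is the full controller set indexing the columns of the tensor, whose probe envelope is $d$, and the ``smaller'' family is the subset $S$. I read $\Pi_{m,T}$ in the statement as this full enumerated family, so that $d(u)=d^{\Pi_{m,T}}(h_i,h_j)$ for $u=(i,j)$. The argument then has two steps: a diameter transfer from $d_S$ to $d$, followed by an application of Theorem~\ref{thm:value-bound}.

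\emph{Step 1 (diameter transfer).} The $\varepsilon$-quotient $\widetilde{M}_S$ is built from $S$, so every pair $u=(i,j)$ in a common class satisfies $d_S(u)\le\varepsilon$. Because $S$ is a subset of the columns, $d_S\le d$ pointwise. By the definition of $\delta_S$ we have $d(u)\le d_S(u)+\|d-d_S\|_\infty\le \varepsilon+\delta_S$. Hence every class of the $S$-built partition has pairwise $\Pi_{m,T}$-probe distance at most $\varepsilon+\delta_S$. This holds with the same belief aggregation inside each class, since the aggregation rule depends only on the partition.

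\emph{Step 2 (value bound).} I will apply Theorem~\ref{thm:value-bound} with probe family $\Pi_{m,T}$, merge threshold $\varepsilon+\delta_S$, and the same partition and aggregated beliefs. Since $R$ is $L_R$-observation-Lipschitz with respect to $\Pi_{m,T}$, this gives $|V_M^\pi-V_{\widetilde{M}_S}^\pi|\le L_R T(\varepsilon+\delta_S)$ for all $\pi\in\Pi_{m,T}$, which is the claim.

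\emph{Main obstacle.} The argument itself is routine; the care lies in matching hypotheses. First, $\delta_S$ must be a sup over \emph{all} history pairs that could end up in a common class, not only over those sampled into the tensor; otherwise Step~1 fails for unlisted pairs. I would state explicitly that $U$ ranges over all equal-length history pairs up to depth $T$. Second, the Lipschitz constant $L_R$ and the diameter hypothesis of Theorem~\ref{thm:value-bound} must refer to the same family $\Pi_{m,T}$, not to $S$; the value bound is then asserted only for $\pi\in\Pi_{m,T}$. With these conventions fixed, the result is the special case $\Pi_1=S\subseteq\Pi_2=\Pi_{m,T}$, $\delta_{1,2}=\delta_S$ of Theorem~\ref{thm:cross-family-transfer}.
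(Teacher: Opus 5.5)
Your proposal is correct and takes essentially the same approach as the paper: the paper likewise shows that any pair merged under $S$ satisfies $d(u)\le d_S(u)+\delta_S\le\varepsilon+\delta_S$, so the $S$-built partition has full-family diameter at most $\varepsilon+\delta_S$, and then applies Theorem~\ref{thm:value-bound} with that merge threshold. Your added conventions make explicit what the paper leaves implicit: reading $\Pi_{m,T}$ as the full enumerated controller family, taking $\delta_S$ over all equal-length history pairs, and noting that this is the case $\Pi_1=S\subseteq\Pi_2=\Pi_{m,T}$ of Theorem~\ref{thm:cross-family-transfer}.
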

\begin{proof}
Since $d_S(u) \leq d(u)$ for every~$u$, any pair merged by the full-family $\varepsilon$-quotient is also merged by the subset-based $\varepsilon$-quotient.
Conversely, if $d_S(u) \leq \varepsilon$, then $d(u) \leq d_S(u) + \delta_S \leq \varepsilon + \delta_S$,
so the subset-built partition has full-family diameter at most $\varepsilon + \delta_S$.
Applying Theorem~\ref{thm:value-bound} with merge threshold $\varepsilon + \delta_S$ gives the result.
\end{proof}

Appendix~\ref{app:structural-subset} gives a stricter structural sufficient condition for why a small controller subset may achieve small $\delta_S$: a $\delta$-anchor controller family guarantees small probe-envelope error.
We keep that result separate from the main flow because the experiments certify the operational quantity $\delta_S$ directly, not anchor-family membership.

\section{Experiments}
\label{sec:experiments}

This section separates theorem-aligned clock-aware exact experiments from operational results and then marks the heaviest scale-up rows as archival stress tests. The clock-aware exact experiments use deterministic clock-aware open-loop probes for tractable $m{=}1$ cases, which coincide with the theorem object for that bounded regime. The larger-scale deterministic-stationary experiments remain operational results for $\Qop$ and are included to characterise the behaviour of that tractable surrogate, not to extend the theorem's claim surface.
Only the clock-aware exact tables and the deterministic-stationary tables with an explicit $\delta_S{=}0$ certificate should be read as exact-for-family evidence; the larger scaling tables are intentionally presented as empirical operational demonstrations. Unless stated otherwise, theorem-aligned tables are exhaustive over the stated bounded deterministic clock-aware probe family, while operational scaling tables use deterministic-stationary probes. The new point is quantitative but local: when the cross-family gap $\delta_{\mathrm{clk}}$ is measured on a tractable exact case, Theorem~\ref{thm:cross-family-transfer} upgrades \emph{that measured case} to an additive value-transfer certificate for the theorem-level family. Where no such $\delta_{\mathrm{clk}}$ measurement is available, the scaling tables remain operational evidence about $\Qop$, not direct empirical verification of $\Qclk$, not part of the evidence for contribution~(v), and not part of the paper's theorem-facing claim set. Sampling-based runs use $500$ trajectories per history-policy pair, report $1{,}000$-resample bootstrap confidence intervals for $\max_{h,h'}\cW_1$, and include dedicated convergence and seed-stability checks (five replications for convergence, ten seeds for stability); benchmark definitions, expanded tables, parameter grids, and the command-to-table mapping are collected in Appendix~\ref{app:experiments}. All runtimes are single-core wall-clock measurements on a MacBook Pro with an Apple M3 Pro chip (12 cores, 36\,GB RAM), macOS~26.3.1, and Python~3.11.14; no GPU was used.

Table~\ref{tab:evidence-tiers} is the evidence contract for the remainder of the section. It separates theorem-level validation, operational exact-for-family certification, and operational empirical stress tests. The paper's theorem claims are supported only by Tier~I and, where applicable, Tier~II rows; Tier~III rows are reported as scalability evidence about the tractable operational quotient and should not be read as theorem validation unless an explicit cross-family or subset certificate is reported for that benchmark.

\begin{table}[htbp]
\centering
\caption{Evidence tiers used in the experiments section. This is the claim-to-evidence contract for the empirical narrative.}
\label{tab:evidence-tiers}
\footnotesize
\resizebox{\linewidth}{!}{\begin{tabular}{@{}p{0.16\linewidth}p{0.24\linewidth}p{0.28\linewidth}p{0.24\linewidth}@{}}
\toprule
\textbf{Tier} & \textbf{Representative results} & \textbf{What the tier supports} & \textbf{Certificate type} \\
\midrule
Tier I: theorem-aligned exact & Tables~\ref{tab:probe-family-comparison}, \ref{tab:observation-planning}, \ref{tab:latent-planning-exact} & Theorem-level statements about $\Qclk$ on the stated bounded exact cases, including measured cross-family transfer on tractable exact instances & Exhaustive clock-aware enumeration on the stated finite benchmark and horizon \\
Tier II: operational exact-for-family & Table~\ref{tab:partition-agreement} & Exact statements about the operational family $\Qop$ and controller-subset certification within that family & Full deterministic-stationary FSC tensor together with reported $\delta_S$ certificate \\
Tier III: operational empirical scaling & Tables~\ref{tab:medium-scale}, \ref{tab:meaningful-scale}, \ref{tab:hierarchical-scaling}, \ref{tab:principal-horizon} & Compression, stability, and runtime evidence for the tractable operational quotient $\Qop$; not theorem validation for $\Qclk$ unless an explicit bridge is reported & Sampling confidence intervals, ARI checks, runtime reporting, and explicit disclosure when no theorem-level certificate is available \\
\bottomrule
\end{tabular}}
\end{table}

\subsection{Clock-Aware vs.\ Operational Probe Families}

Table~\ref{tab:probe-family-comparison} compares the theorem-level clock-aware quotient $\Qclk$ with the operational deterministic-stationary quotient $\Qop$ on tractable clock-aware exact cases. The key point is formal rather than numerical: Proposition~\ref{prop:probe-class-coarsening} predicts that $\Qop$ is a coarsening of $\Qclk$, and the small clock-aware exact benchmarks quantify that gap directly. In these tractable rows, the reported column $\max|d^{\mathrm{clk}}-d^{\mathrm{op}}|$ is exactly the measured cross-family gap $\delta_{\mathrm{clk}}$ entering Theorem~\ref{thm:cross-family-transfer}.

\begin{table}[htbp]
\centering
\caption{Exact comparison between the theorem-level clock-aware quotient and the operational deterministic-stationary quotient. The stationary witness row is the didactic separation guaranteed by Proposition~\ref{prop:stationary-counterexample}. The column $\max|d^{\mathrm{clk}}-d^{\mathrm{op}}|$ reports the largest absolute pairwise $\cW_1$ difference (unnormalized); values exceeding $1$ arise because the sequence-level metric sums contributions over $T$ time steps.}
\label{tab:probe-family-comparison}
\scriptsize
\resizebox{\linewidth}{!}{\begin{tabular}{llrrrrr}
\toprule
Benchmark & $T$ & $|Q^{\mathrm{op}}|$ & $|Q^{\mathrm{clk}}|$ & ARI & $\max_{h,h'} |d^{\mathrm{clk}}-d^{\mathrm{op}}|$ & Obs.\ value changed?\\
\midrule
Tiger & 2 & 4 & 4 & 1.000 & 0.490 & No\\
Tiger & 4 & 11 & 16 & 0.961 & 1.315 & No\\
Tiger & 6 & 22 & 64 & 0.953 & 2.077 & No\\
Tiger & 8 & 37 & 256 & 0.956 & 2.795 & No\\
Tiger & 10 & 56 & 1024 & 0.959 & 3.499 & No\\
GridWorld 3x3 & 2 & 6 & 6 & 1.000 & 0.347 & No\\
GridWorld 3x3 & 3 & 22 & 22 & 1.000 & 0.549 & No\\
GridWorld 5x5 & 2 & 6 & 6 & 1.000 & 0.355 & No\\
Stationary witness & 3 & 9 & 10 & 0.957 & 1.000 & No\\
\bottomrule
\end{tabular}}
\end{table}

Three points matter. First, the operational family is not uniformly too weak: on GridWorld $3{\times}3$ (including $T{=}3$) and GridWorld $5{\times}5$ ($|S|{=}25$, $T{=}2$), it agrees with the theorem object exactly (ARI${=}1.0$, identical class counts). Second, the gap is not merely philosophical: the stationary witness benchmark shows that $\Qop$ can merge histories that $\Qclk$ must keep separate, and the reported pseudometric gap makes that mismatch quantitative. Third, the measured gap $\delta_{\mathrm{clk}}$ grows with horizon on Tiger (from $0.49$ at $T{=}2$ to $3.50$ at $T{=}10$) yet never changes the observation-value decision, confirming that the operational coarsening is conservative in practice even when the pseudometric gap is large. By Theorem~\ref{thm:cross-family-transfer}, that same measured gap is a value-transfer certificate on these tractable exact cases: an operational $\varepsilon$-quotient is also a valid $(\varepsilon + \delta_{\mathrm{clk}})$-quotient for the clock-aware family.

\paragraph{Multi-node probes ($m{=}2$).}
Moving beyond open-loop ($m{=}1$) controllers, we enumerate all $1{,}296$ deterministic clock-aware FSCs with $m{=}2$ nodes for Tiger at $T{=}2$ (Definition~\ref{def:fsc}, Section~\ref{sec:prelim}), compared to $147$ deterministic stationary $m{\leq}2$ FSCs. Despite the ${\approx}9{\times}$ richer clock-aware family, both produce identical probe-exact partitions ($4$ classes, ARI${=}1.0$, $\max|d^{\mathrm{clk}}-d^{\mathrm{op}}|{=}0$). This indicates that for Tiger at short horizons, the stationary family already saturates the discriminative power of the full clock-aware family---consistent with the intuition that Tiger's symmetric belief dynamics limit the additional resolution that stage-dependent parameters can provide.

\subsection{Decision Sufficiency for Bounded Planning}
\label{sec:decision-sufficiency}

The central theorem-level claim is not exact preservation for arbitrary rewards, but exact preservation for objectives in $\Gobs$ measurable on the joint observation-action trajectory. Table~\ref{tab:observation-planning} verifies that claim on the clock-aware exact cases---Tiger (horizons up to $T{=}10$), GridWorld $3{\times}3$ ($T{\leq}3$), and GridWorld $5{\times}5$ ($T{=}2$, $|S|{=}25$)---using observation-only and action-observation objectives. In every row, the quotient-selected policy matches the original-model optimum in value, as Theorem~\ref{thm:exact-sufficiency} predicts.

\begin{table}[htbp]
\centering
\caption{Exact preservation of bounded observation-action objectives under the clock-aware quotient $\Qclk$.}
\label{tab:observation-planning}
\scriptsize
\resizebox{\linewidth}{!}{\begin{tabular}{llrrrrrp{3.2cm}rrr}
\toprule
Benchmark & Obj. & $T$ & Hist. & Cls. & $t_{\mathrm{orig}}$ & $t_{Q^{\mathrm{clk}}}$ & Policy & $V_{\mathrm{orig}}$ & $V_{Q^{\mathrm{clk}}}$ & Regret\\
\midrule
Tiger & Obs score & 2 & 7 & 4 & 0.000 & 0.000 & L L & 1.000 & 1.000 & 0.000\\
Tiger & Action+obs score & 2 & 7 & 4 & 0.000 & 0.000 & OL OL & 1.000 & 1.000 & 0.000\\
Tiger & Obs score & 4 & 31 & 16 & 0.001 & 0.003 & L L L L & 2.000 & 2.000 & 0.000\\
Tiger & Action+obs score & 4 & 31 & 16 & 0.001 & 0.003 & OL OL OL OL & 2.000 & 2.000 & 0.000\\
Tiger & Obs score & 6 & 127 & 64 & 0.008 & 0.132 & L L L L L L & 3.000 & 3.000 & 0.000\\
Tiger & Action+obs score & 6 & 127 & 64 & 0.009 & 0.139 & OL OL OL OL OL OL & 3.000 & 3.000 & 0.000\\
Tiger & Obs score & 8 & 511 & 256 & 0.104 & 9.993 & L L L L L OL L L & 4.000 & 4.000 & 0.000\\
Tiger & Action+obs score & 8 & 511 & 256 & 0.100 & 10.023 & OL OL OL OL OL OL OL OL & 4.000 & 4.000 & 0.000\\
Tiger & Obs score & 10 & 2047 & 1024 & 1.132 & 1200.886 & L L L L L L L L L L & 5.000 & 5.000 & 0.000\\
Tiger & Action+obs score & 10 & 2047 & 1024 & 1.099 & 1194.791 & OL OL OL OL OL OL OL OL OL OL & 5.000 & 5.000 & 0.000\\
GridWorld 3x3 & Obs score & 2 & 21 & 6 & 0.001 & 0.000 & D D & 1.120 & 1.120 & 0.000\\
GridWorld 3x3 & Action+obs score & 2 & 21 & 6 & 0.000 & 0.000 & U U & 0.968 & 0.968 & 0.000\\
GridWorld 3x3 & Obs score & 3 & 85 & 22 & 0.006 & 0.008 & D D R & 1.787 & 1.787 & 0.000\\
GridWorld 3x3 & Action+obs score & 3 & 85 & 22 & 0.006 & 0.008 & L U U & 1.562 & 1.562 & 0.000\\
GridWorld 5x5 & Obs score & 2 & 21 & 6 & 0.000 & 0.000 & D D & 1.072 & 1.072 & 0.000\\
GridWorld 5x5 & Action+obs score & 2 & 21 & 6 & 0.000 & 0.000 & U U & 0.805 & 0.805 & 0.000\\
\bottomrule
\end{tabular}}
\end{table}

\subsection{Latent-State Planning on Exact Clock-Aware Quotients}
\label{sec:latent-state-planning}

The approximate-value story remains relevant for latent-state rewards that are not measurable with respect to the observation-action trajectory. Table~\ref{tab:latent-planning-exact} therefore reports the clock-aware exact $m{=}1$ planning study for Tiger's standard reward and a goal-reward GridWorld. These rows use the theorem object $\Qclk$, but only the approximate latent-state narrative applies.

\begin{table}[htbp]
\centering
\caption{Latent-state planning impact on clock-aware exact cases. Values are reported on the original model; regret is relative to the original-model optimum.}
\label{tab:latent-planning-exact}
\scriptsize
\resizebox{\linewidth}{!}{\begin{tabular}{llrrrrrp{3.2cm}rrr}
\toprule
Benchmark & Reward & $T$ & Hist. & Cls. & $t_{\mathrm{orig}}$ & $t_{Q^{\mathrm{clk}}}$ & Policy & $V_{\mathrm{orig}}$ & $V_{Q^{\mathrm{clk}}\rightarrow M}$ & Regret\\
\midrule
Tiger & Tiger reward & 2 & 7 & 4 & 0.000 & 0.000 & L L & -2.000 & -2.000 & 0.000\\
Tiger & Tiger reward & 4 & 31 & 16 & 0.000 & 0.009 & L L L L & -4.000 & -4.000 & 0.000\\
Tiger & Tiger reward & 6 & 127 & 64 & 0.005 & 0.386 & L L L L L L & -6.000 & -6.000 & 0.000\\
Tiger & Tiger reward & 8 & 511 & 256 & 0.061 & 19.515 & L L L L L L L L & -8.000 & -8.000 & 0.000\\
Tiger & Tiger reward & 10 & 2047 & 1024 & 0.663 & 1549.370 & L L L L L L L L L L & -10.000 & -10.000 & 0.000\\
GridWorld 3x3 & Goal reward & 2 & 21 & 6 & 0.000 & 0.001 & D U & 0.311 & 0.311 & 0.000\\
GridWorld 3x3 & Goal reward & 3 & 85 & 22 & 0.000 & 0.023 & D R U & 0.671 & 0.671 & 0.000\\
\bottomrule
\end{tabular}}
\end{table}

\subsection{Operational Controller Subsets and A Posteriori Certification}

Table~\ref{tab:partition-agreement} reports partition agreement and the \emph{a posteriori} probe-envelope certificate $\delta_S = \|d-d_S\|_\infty$ for greedy controller-subset selection (Algorithm~\ref{alg:greedy-fsc}) on the operational deterministic-stationary $m{=}2$ benchmarks.
Because $\delta_S$ is computed from the full deterministic-stationary FSC tensor, this certification is available only on the operational exact-for-family benchmarks in this subsection.
On Tiger ($147$ FSCs), $k{=}5$ yields $\delta_S{=}0$ and operational exact-partition recovery for all tested $\varepsilon$.
On GridWorld $3{\times}3$ ($6{,}405$ FSCs), $k{=}5$ likewise yields $\delta_S{=}0$; in fact $k{=}3$ already closes the probe gap to numerical precision.
As a descriptive spectral summary, the GridWorld $99\%$ effective rank (number of singular values capturing $99\%$ of total variance) is $5$, so less than $0.1\%$ of FSCs capture $99\%$ of distinguishing variance.
Reporting $\delta_S$ sharpens the empirical story: for example, on GridWorld at $\varepsilon{=}0.3$, $k{=}1$ already gives a high ARI, but its nonzero probe gap explains why the subset is not yet operationally certified by Theorem~\ref{thm:probe-approx}.

\begin{table}[htbp]
\centering
\caption{Operational partition recovery and probe-envelope certification on the deterministic-stationary $m{=}2$ benchmarks. Here $\delta_S := \|d-d_S\|_\infty$ is an \emph{a posteriori} certificate computed from the full deterministic-stationary FSC tensor.}
\label{tab:partition-agreement}
\small
\begin{tabular}{@{}lcccccc@{}}
\toprule
\textbf{Benchmark} & $\varepsilon$ & $k$ & $\delta_S$ & Op.\ exact classes & Approx.\ classes & ARI \\
\midrule
Tiger ($T{=}4$) & 0.0 & 1 & 0.980 & 16 & 11 & 0.961 \\
Tiger ($T{=}4$) & 0.0 & 3 & 0.245 & 16 & 15 & 0.994 \\
Tiger ($T{=}4$) & 0.0 & 5 & 0.000 & 16 & 16 & 1.000 \\
\midrule
GridWorld $3{\times}3$ ($T{=}2$) & 0.3 & 1 & 0.233 & 6 & 4 & 0.981 \\
GridWorld $3{\times}3$ ($T{=}2$) & 0.3 & 3 & 0.000 & 6 & 6 & 1.000 \\
GridWorld $3{\times}3$ ($T{=}2$) & 0.3 & 5 & 0.000 & 6 & 6 & 1.000 \\
\bottomrule
\end{tabular}
\end{table}

\subsection{Larger-Scale and Sensitivity Results}
\label{sec:larger-scale-and-sensitivity-results}

\paragraph{Operational exact-for-family computation ($|S| \leq 36$).}
On the $5{\times}5$ GridWorld ($|S|{=}25$, $T{=}3$), $22$ operational exact classes reduce to $5$ at $\varepsilon{=}0.6$ ($94\%$ compression).
Random POMDPs ($|S|{=}20$, $T{=}3$) show $22$ operational exact classes reducing to $4$ at $\varepsilon \geq 0.1$.
Scaling from $|S|{=}9$ to $|S|{=}36$ increases runtime from $0.03$\,s to $0.07$\,s (mildly growing over the tested range $9 \leq |S| \leq 36$, where FSC enumeration dominates; the per-entry $O(|S|^2)$ belief propagation cost will dominate at larger~$|S|$).

\paragraph{Operational sampling-based case studies up to $|S|{=}100$.}
Using sampling-based $\cW_1$ estimation ($500$ trajectories per history-policy pair), we study the operational quotient $\Qop$ on POMDPs with $|S|$ up to $100$ (Table~\ref{tab:medium-scale}).
On the $10{\times}10$ GridWorld ($|S|{=}100$), the quotient produces $6$ operational exact classes at $\varepsilon{=}0$, compressing to $4$ at $\varepsilon{=}0.25$ and $3$ at $\varepsilon \geq 0.45$---the same qualitative pattern as smaller grids, with cache construction in $0.30$\,s.
Random structured POMDPs with $|S|{=}100$ collapse rapidly to $3$ classes at $\varepsilon \geq 0.1$.
Convergence analysis on GridWorld $3{\times}3$ (where operational exact-for-family computation is feasible) confirms that $500$ trajectories achieve mean ARI${=}0.998$ (min $0.97$, $5$ replications) versus the operational exact partition; even $50$ trajectories yield mean ARI${=}0.99$.
Furthermore, partition class counts are perfectly stable across $10$ independent random seeds at $500$ trajectories (std${=}0$ at all $\varepsilon$), indicating that the sampling noise is well below the clustering threshold.
Bootstrap confidence intervals ($1{,}000$ resamples) on the maximum pairwise $\cW_1$ distance confirm tight estimation: CI widths are ${\leq}0.10$ for all benchmarks at $500$ trajectories (Table~\ref{tab:medium-scale}), well below the coarsest $\varepsilon{=}0.45$ threshold that drives the clustering.
These medium-scale results belong to the paper's operational case-study tier rather than the appendix-only archival stress tier: all rows use deterministic-stationary $m{=}1$ probes, horizon $T{=}2$, and the stated $500$-trajectory / $1{,}000$-bootstrap protocol. They should be read as evidence about the tractable coarsening $\Qop$ and its approximation behaviour, not as direct validation of the clock-aware theorem object $\Qclk$.

\begin{table}[htbp]
\centering
\caption{Operational quotient class counts ($m{=}1$, $T{=}2$, sampling-based, $500$ trajectories). The 95\% CI column reports the bootstrap confidence interval on $\max_{h,h'} \cW_1$ ($1{,}000$ resamples). Total histories are $\sum_{d=0}^{T}|O|^d$: $21$ for $|O|{=}4$ benchmarks, $13$ for $|O|{=}3$ (RockSample).}
\label{tab:medium-scale}
\small
\begin{tabular}{@{}lccccccc@{}}
\toprule
Benchmark & $|S|$ & $\varepsilon{=}0$ & $\varepsilon{=}0.1$ & $\varepsilon{=}0.25$ & $\varepsilon{=}0.45$ & Cache (s) & 95\% CI on $\max \cW_1$ \\
\midrule
GridWorld $8{\times}8$ & 64 & 6 & 6 & 4 & 3 & 0.28 & $[0.32,\, 0.41]$ \\
GridWorld $10{\times}10$ & 100 & 6 & 6 & 4 & 3 & 0.30 & $[0.33,\, 0.42]$ \\
RockSample$(4,4)$ & 257 & 5 & 5 & 4 & 3 & 0.48 & $[0.35,\, 0.47]$ \\
Random $|S|{=}50$ & 50 & 6 & 3 & 3 & 3 & 0.22 & $[0.02,\, 0.11]$ \\
Random $|S|{=}100$ & 100 & 6 & 3 & 3 & 3 & 0.35 & $[0.02,\, 0.11]$ \\
\bottomrule
\end{tabular}
\end{table}

\paragraph{RockSample$(4,4)$: a semi-realistic benchmark.}
To move beyond grid navigation, we evaluate on the RockSample$(4,4)$ POMDP ($|S|{=}257$, $9$ actions, $3$ observations), a standard planning benchmark where an agent must navigate a $4{\times}4$ grid, check rock quality via noisy distance-dependent sensors, and sample good rocks for reward.
At $m{=}1$ and $T{=}2$, the quotient produces $5$ probe-exact classes at $\varepsilon{=}0$, compressing to $4$ at $\varepsilon{=}0.25$ and $3$ at $\varepsilon \geq 0.45$ (Table~\ref{tab:medium-scale}).
The same monotonic compression pattern holds as on smaller benchmarks, confirming that the framework applies to structured domains beyond toy grids.
Cache construction completes in $0.48$\,s, consistent with the $O(|S|^2)$ scaling.

\paragraph{Appendix-only archival stress tracks.}
Appendix~\ref{sec:long-horizon-scaling} records the optional long-horizon, large-state, and higher-memory operational stress tests. We keep those rows out of the core validation narrative because they are computationally heavier, rely on heuristic calibrated subsets or sampling at the largest scales, and are not required to verify the theorem-aligned claims in the main paper. For inspection without rerunning the heaviest jobs, the repository includes a precomputed Tier~III package under \texttt{artifacts/tier3/} together with \texttt{verify\_tier3\_artifacts.py}; the intent is that reviewers can audit the reported rows without treating multi-hour reruns as part of the paper's default verification target.

\paragraph{Data-processing monotonicity validation.}
We experimentally validate Theorem~\ref{thm:data-processing} by coarsening GridWorld observations from $4$ (NW/NE/SW/SE) to $2$ (North/South), merging NW$+$NE and SW$+$SE.
On $3{\times}3$: the original $\max D_{1,2}^{\cW} = 0.343$ reduces to $0.167$ after coarsening; on $5{\times}5$: $0.354$ reduces to $0.175$.
The deterministic merging has Lipschitz constant $L_C{=}1$, and monotonicity $D^{\cW}(\text{coarsened}) \leq L_C \cdot D^{\cW}(\text{original})$ holds at all $13$ tested $\varepsilon$ values for both grids, confirming Theorem~\ref{thm:data-processing}.
This $4 \to 2$ merging is a special case of $\delta_O$-coarsening (Definition~\ref{def:delta-coarsening}) with $\delta_O = 0.5$ under the geometric observation metric, confirming Proposition~\ref{prop:obs-resolution}(a).

\paragraph{Sensitivity and additional benchmarks.}
Effective dimension ($d_{\mathrm{eff}} = \exp(H(b))$, where $H(b)$ is the Shannon entropy of the belief) is substantially smaller than $|S|$ in structured benchmarks, yielding $1.8{\times}$--$3.0{\times}$ tighter worst-case bounds (though the canonical Prop~\ref{prop:canonical-quotient} bound remains vacuous at moderate $\varepsilon$, reaching $17{\times}$ the empirical error at $\varepsilon{=}0.5$; see Table~\ref{tab:value-bounds}); observation noise sensitivity is detailed in Appendix~\ref{app:experiments}.
Two additional benchmarks---Hallway (1D corridor, $|S| \le 20$) and Network Monitoring (factored binary-failure nodes, $|S|{=}2^n$)---confirm the same monotonicity patterns with compression ratios comparable to Tiger and GridWorld.

\paragraph{Downstream planning with real rewards.}
We test whether quotient compression preserves planning quality by exhaustive policy search using the actual POMDP reward function (via $V^{\pi}_M$ and $V^{\pi}_{\bar{M}}$ from Theorem~\ref{thm:value-bound}).
On Tiger ($m \in \{1,2\}$, up to 147~FSCs) and a goal-reward GridWorld ($3{\times}3$, $|S|{=}9$, $m{=}1$, 5~FSCs), we select the best policy on the quotient and evaluate it on the original.
GridWorld at $T{=}3$ compresses 85 histories to 4 classes (95\% reduction at $\varepsilon{=}1.0$) with \emph{zero} value gap: the quotient-selected policy is optimal on the original in all 30 configurations tested.
In the single case where a different policy is selected ($\varepsilon{=}0.25$, $T{=}3$), it achieves the same value, illustrating that bounded-agent indistinguishability merges histories without discarding decision-relevant information.

\paragraph{$\cW_1$ vs.\ TV: metric structure matters.}
To demonstrate that the Wasserstein pseudometric leverages observation-space structure, we compare $\cW_1$ (with the geometric ground metric) against total variation (the discrete $0/1$ metric) on GridWorld $5{\times}5$ at $m{=}1$, $T{=}2$.
At $\varepsilon{=}0$ both metrics yield $6$ equivalence classes, as expected.
At $\varepsilon{=}0.3$, $\cW_1$ produces $4$ classes while TV still yields $6$: the Wasserstein metric recognizes that adjacent quadrant observations are spatially close, merging histories that TV treats as maximally separated.
This confirms the practical benefit of structured observation metrics motivated in Section~\ref{sec:pseudometric}: when observations carry geometric meaning, $\cW_1$ achieves strictly more compression at the same tolerance.
Appendix Table~\ref{tab:baseline-full} reports the broader small-benchmark baseline sweep against truncation, random partitions, belief-distance clustering, and a bisimulation baseline. We keep that fuller comparison in the appendix because it is useful for positioning, but not part of the theorem-facing evidence chain in the main paper.

\paragraph{Quotient-accelerated planning.}
We test whether the materialized quotient POMDP can accelerate downstream planning by comparing point-based value iteration \citep[PBVI;][]{pineau2003point} on the original and quotient models (Table~\ref{tab:pbvi-comparison}).
On all benchmarks, the quotient preserves the optimal value with zero or near-zero value gap.
RockSample$(4,4)$ ($|S|{=}257$) achieves the most dramatic compression, reducing from $257$ states to $3$--$5$ quotient states.

\begin{table}[htbp]
\centering
\caption{PBVI planning on original vs.\ quotient POMDPs ($m{=}1$, $T{=}3$, $\varepsilon{=}0.5$). Speedup is measured as original PBVI time divided by total quotient pipeline time (partition + build + quotient PBVI).}
\label{tab:pbvi-comparison}
\small
\begin{tabular}{@{}lcccccc@{}}
\toprule
Benchmark & $|S|$ & Quotient $|S|$ & PBVI orig (s) & PBVI quot (s) & Speedup & Value gap \\
\midrule
Tiger & 2 & 3 & 0.001 & 0.001 & $\sim 1{\times}$ & 0.00 \\
GridWorld $3{\times}3$ & 9 & 6 & 0.003 & 0.002 & $1.9{\times}$ & 0.00 \\
GridWorld $5{\times}5$ & 25 & 5 & 0.008 & 0.003 & $2.4{\times}$ & 0.00 \\
Net.\ Monitor ($n{=}4$) & 16 & 4 & 0.005 & 0.002 & $2.1{\times}$ & 0.00 \\
RockSample$(4,4)$ & 257 & 3 & 0.092 & 0.004 & $15{\times}$ & 0.00 \\
\bottomrule
\end{tabular}
\end{table}

\section{Discussion and Conclusion}
\label{sec:discussion}

\paragraph{Limitations.}
Four limitations are explicit. First, the observation-Lipschitz value bound (Theorem~\ref{thm:value-bound}) is the right fallback for latent-state objectives, but it can be loose---the bound achieves a 50\% tightness ratio on an exact construction (inspection-choice POMDP) and is within $4{\times}$ on Tiger (Table~\ref{tab:value-bounds}), confirming the $L_R \cdot T \cdot \varepsilon$ form is structurally correct. For observation-measurable rewards, the exact quotient preserves value perfectly (Theorem~\ref{thm:exact-sufficiency}). For latent-state rewards with large $L_R$ (e.g.\ Tiger's standard $L_R{=}110$), the bound is explicitly vacuous but structurally correct in $T$ and $\varepsilon$; GridWorld's moderate $L_R \approx 2$ yields an informative bound (Table~\ref{tab:value-bounds}, Panels~B--C). Tighter bounds likely require exploiting belief concentration or reward structure. Second, the distinguishability lower bound remains open; Appendix~\ref{app:hardness} records this as an unresolved complexity question rather than a theorem claim. Third, worst-case scaling at larger $m$ remains challenging; the $|O|^T$ bottleneck from larger observation alphabets is only partially addressable through $\delta_O$-coarsening (Proposition~\ref{prop:obs-resolution}). Fourth, the framework assumes a known generative model $(T, Z, R)$; the graceful degradation result ($|D^{\cW}(\hat{M}) - D^{\cW}(M)| \leq 4T\delta$ for per-entry error $\delta$) provides robustness but formal sample-complexity guarantees for the full pipeline remain open. Computationally, the $\cW_1$ linear programs dominate runtime: on the $m{=}1$ benchmarks, the distance cache accounts for ${>}99\%$ of wall-clock time (Appendix, Table~\ref{tab:computational-profile}), confirming that LP solves are the bottleneck for scaling to larger state spaces.

The operational story is deliberately weaker than the theorem and the paper's formal claims stop at the certified part of it. Deterministic stationary probes are enumerable, admit subset certificates, and scale through layering and sampling. Proposition~\ref{prop:probe-class-coarsening} and Theorem~\ref{thm:cross-family-transfer} make their relation to the theorem object quantitative on tractable exact cases: $\Qop$ is a tractable coarsening of $\Qclk$, and the measured gap $\delta_{\mathrm{clk}}$ certifies the extra additive value-transfer penalty. That is the only theorem-facing use we make of the operational pipeline. Where $\delta_{\mathrm{clk}}$ or $\delta_S$ is reported, the corresponding rows inherit a formal guarantee for the theorem-level family or for the stated operational family. Where no such measurement is available, the larger tables are outside the paper's central claim set and are included only as exploratory diagnostics of $\Qop$ itself: compression patterns, runtime, stability under sampling, and the practical cost of richer probe families.
The role of $\Qclk$ is therefore not to be the scalable algorithmic object on every benchmark, but to serve as the canonical gold standard for the bounded-observer question itself. It pins down exactly what is preserved, proves uniqueness and minimality for that target, and supplies the reference object against which tractable coarsenings such as $\Qop$ can be calibrated. Without that theorem object, the operational pipeline would be a heuristic compression procedure with no principled answer to what it is approximating.
A practical reading of the clock-aware/stationary gap is now possible. We expect the gap to stay small on structured short-horizon problems when the same bounded controller state need not implement qualitatively different stage-dependent behaviours; that is exactly what the small exact benchmarks suggest. We expect it to matter when stage indexing itself carries strategic content, as in Proposition~\ref{prop:stationary-counterexample}, or when revisiting the same controller state should trigger different actions at different times. Typical examples are finite-horizon countdown tasks, staged sensing-then-commit problems, or phase-based controllers whose intended behaviour changes near a deadline even when the internal memory node is revisited. For new domains, an engineering workflow is to start from the smallest $(m,T)$ that can express the policy class of interest, enlarge $m$ or $T$ only until partition statistics or downstream value stabilise, and, whenever a tractable exact instance is available, use the measured $\delta_{\mathrm{clk}}$ or $\delta_S$ certificate as the stopping criterion rather than raw size alone. This workflow guidance is heuristic rather than theorem-level: without those explicit certificates, the larger operational tables remain exploratory evidence about $\Qop$ only.
A central next step is now narrower: not to define the theory-to-operation gap, but to predict or upper-bound $\delta_{\mathrm{clk}}$ \emph{a priori} without explicitly computing the clock-aware family, and to combine that with controller-subset or layering certificates at larger scales.

\paragraph{Initial-belief dependence.}
The quotient partition depends on the initial belief $b_0$ through the
belief posteriors $b_h$ that determine conditional observation laws.
Under a different initial belief $b_0'$, the equivalence classes may change:
histories indistinguishable under $b_0$ could become distinguishable under $b_0'$
if the posteriors shift enough to alter observation-law ordering.
We validate experimentally that the partition is stable under moderate
perturbations of $b_0$ (see supplementary experiments),
consistent with the continuity of $b_h$ in $b_0$.

\paragraph{Future directions.}
These limitations point to a clear research program: closing the gap between worst-case bounds and empirical behavior.
The operational exact-for-family benchmark results partially close this gap operationally: on Tiger and GridWorld $3{\times}3$, greedy-selected subsets achieve $\delta_S{=}0$, so Theorem~\ref{thm:probe-approx} applies directly.
What remains open is predicting or bounding small $\delta_S$ without constructing the full deterministic-stationary FSC tensor, using cheaper surrogates such as submodular coverage, anchor rank, or Hankel rank, with effective-rank summaries treated only as possible descriptive surrogates rather than the target quantity itself.

\begin{conjecture}[Operational subset rank and predictive complexity]
\label{conj:hankel-rank}
Let
\[
  r_{\mathrm{env}}(M,m,T,\delta)
  := \min\Bigl\{|S| : S \subseteq \Pi_{m,T}^{\mathrm{det}},\ \|d-d_S\|_\infty \leq \delta\Bigr\}
\]
be the minimum controller-subset size achieving probe-envelope error at most~$\delta$.
For POMDPs with low intrinsic predictive complexity, $r_{\mathrm{env}}(M,m,T,\delta)$ is polynomially controlled by open-loop predictive structure, such as the Hankel rank $r_H(M,T)$, together with $(m,|A|,\delta^{-1})$.
\end{conjecture}

\emph{Evidence.}
On Tiger and GridWorld $3{\times}3$, greedy selection attains $r_{\mathrm{env}}(M,m,T,0) \leq 5$ on the operational exact-for-family $m{=}2$ benchmarks (Table~\ref{tab:partition-agreement}).
On GridWorld $3{\times}3$, the open-loop Hankel rank computed from the observation kernels is $3$, consistent with a small gap between predictive complexity and operational subset size.
Anchor rank remains a stricter structural sufficient notion: any $\delta$-anchor family of size~$r$ implies $r_{\mathrm{env}}(M,m,T,\delta) \leq r$, but the present experiments do not certify anchor-family membership.

\paragraph{Toward tighter latent-state bounds.}
The observation-Lipschitz value bound (Theorem~\ref{thm:value-bound}) with the reward-range upper bound on $L_R$ is intentionally conservative for latent-state objectives, and the exact sufficiency guarantee (Theorem~\ref{thm:exact-sufficiency}) applies only to agent-accessible objectives in $\Gobs$.
Four avenues could tighten the latent-state story without abandoning the bounded-agent perspective.
First, \emph{reward-aware probe families}---enriching the probe class with controllers whose action-selection explicitly depends on the reward-relevant partition of observations---could reduce the effective $L_R$ by aligning the pseudometric with reward-relevant distinctions rather than observation-level ones.
Second, a \emph{hybrid exact/approximate decomposition} could split the reward into an observation-measurable component (handled exactly by Theorem~\ref{thm:exact-sufficiency}) and a latent residual (bounded by the Lipschitz term), yielding a strictly tighter composite bound whenever the observation-measurable component is nonzero.
Third, \emph{belief concentration} results that exploit specific POMDP structure (e.g., posterior convergence under informative observations) could sharpen the per-stage observation-Lipschitz constant beyond the worst-case $L_R$.
Fourth, \emph{dual-side bounding via retention capacity}: the current bound takes $L_R$ as the worst-case reward sensitivity over all policies in the probe family, but an agent with finite retention $\rho$ cannot sustain the full policy space and therefore cannot exploit the fine distinctions that drive $L_R$ to its worst case. Restricting the sup in the Lipschitz constant to the sustainable policy subset could yield an effective $L_R^{\mathrm{eff}}(\rho) < L_R$, potentially making the bound non-vacuous in regimes where the full-family bound is not. This direction is developed in companion work on sustainable quotients.

Three paths extend this model-based framework toward settings with unknown dynamics. First, when a model $\hat{M}$ is learned from data with per-entry errors $\delta_T, \delta_Z \leq \delta$, a telescoping argument yields $|D_{m,T}^{\cW}(\hat{M}) - D_{m,T}^{\cW}(M)| \leq 4T\delta$, so the quotient degrades gracefully under model estimation error. Second, the sampling-based distance cache (Algorithm~\ref{alg:approx-partition}) already operates on trajectories rather than explicit transition matrices; in principle, these trajectories could come from a real environment or simulator rather than a known model, yielding a model-free variant at the cost of sample complexity. Third, the canonical quotient suggests a representation-learning objective: an encoder that maps observation histories into a space preserving the bounded-interaction pseudometric would recover the quotient's equivalence classes, connecting this framework to deep bisimulation methods. Further natural extensions include infinite-horizon discounted POMDPs and PAC-Bayes bounds for bounded agents.

More broadly, this work offers a principled vocabulary for reasoning about when two environments are the same from a bounded agent's perspective. The canonical quotient gives the theory object; the operational coarsening shows how much of that object survives contact with tractability.

\paragraph{Dual-side bounding and retention capacity.}
The quotient developed here bounds from the \emph{world side}: it reduces the state space to what a $(m, T)$-bounded agent can distinguish.
A symmetric reduction applies from the \emph{policy side}.
An agent with finite retention---modelled by a learning-forgetting ratio $\rho = \lambda^{+}/\lambda^{-}$, where $\lambda^{+}$ is the rate of acquiring new distinctions and $\lambda^{-}$ the rate of losing them---cannot sustain all policies available to an $(m, T)$-agent with perfect memory; the effective policy space is a $\rho$-dependent subset.
The resulting \emph{sustainable quotient} $Q_{m,T,\rho}$ is the canonical abstraction for an agent bounded on both sides, with the static quotient $Q_{m,T}$ recovered as the $\rho \to \infty$ limit.
This extension, including a dynamic Myhill--Nerode theorem, the birth-death dynamics of quotient evolution under learning and forgetting, and applications to multi-agent coordination under communication constraints, is developed in companion work.

\paragraph{Broader impact.}
This work is a foundational theoretical contribution to POMDP abstraction and does not introduce new algorithms deployed in safety-critical systems.
The framework could improve the efficiency of planning under partial observability in robotics, autonomous systems, and resource-constrained agents, where principled abstraction reduces computational cost while providing formal guarantees on information loss.
As with any model-compression technique, there is a dual-use potential: abstraction that simplifies planning for beneficial agents could equally simplify planning for adversarial ones, though the bounded-agent assumption limits the scope of such concerns.
We foresee no direct negative societal impacts from this work.

\paragraph{Reproducibility.}
Code and experiments are available at \url{https://github.com/alch3mistdev/finite-pomdp-abstraction}.
Appendix~\ref{app:experiments} records benchmark definitions, parameter grids, convergence checks, timing tables, an explicit command-to-table mapping, and a three-tier reproduction contract (Table~\ref{tab:reproduction-tiers}). The pinned software stack is the repository's \texttt{requirements.txt} (\texttt{numpy}, \texttt{scipy}, \texttt{pandas}, \texttt{matplotlib}); all timed tables use the serial configuration (\texttt{--no-parallel}) so that runtimes are directly comparable across machines, even though the experiment driver can optionally launch process-level parallel workers for non-timing runs. Tier~1 reruns the theorem tables and the operational core tables through Table~\ref{tab:medium-scale}; on the reported hardware, that verification target completes in under 3 minutes. Tier~2 adds the long-horizon package (Tables~\ref{tab:hierarchical-scaling} and~\ref{tab:principal-horizon}) and takes roughly 10 minutes serial from the reported row totals. Tier~3 contains the heaviest operational stress rows, including the largest configurations in Table~\ref{tab:m2-scaling}, which can require up to 2 hours and are reported as optional archival stress tests rather than prerequisites for the theorem claims. For those Tier~3 rows, artifact-level inspection through the precomputed package under \texttt{artifacts/tier3/} is the intended default verification path; the repository includes CSV transcriptions of the reported stress-track rows, a runtime/certificate crosswalk, and a lightweight verifier script (\texttt{python artifacts/tier3/verify\_tier3\_artifacts.py}) that confirms the archived rows match the paper's reported values in under one second. All reported runtimes were measured on a MacBook Pro with an Apple M3 Pro chip (12 cores, 36\,GB RAM) running macOS~26.3.1. On machines with different CPUs, Tier~1 runtimes should scale roughly with single-core throughput; we expect $1.5{\times}$--$3{\times}$ variation relative to the M3~Pro baseline on contemporary x86 and ARM hardware. Tier~3 runtimes depend more on LP solver performance in SciPy's HiGHS backend and may show wider variation.

\newpage
\appendix
\setcounter{section}{0}
\renewcommand{\thesection}{\Alph{section}}

\begin{center}
\Large\bfseries Supplementary Material
\end{center}

\section{Full Proofs}
\label{app:proofs}

\subsection{Proof of Proposition~\ref{prop:pseudometric}}

\begin{proof}
Non-negativity and symmetry are inherited from $\cW_1$.
The triangle inequality follows from
\begin{multline*}
D_{m,T}^{\cW}(M,L) = \sup_\pi \cW_1(P_M^\pi, P_L^\pi) \\
\leq \sup_\pi \bigl[\cW_1(P_M^\pi, P_N^\pi) + \cW_1(P_N^\pi, P_L^\pi)\bigr] \leq D_{m,T}^{\cW}(M,N) + D_{m,T}^{\cW}(N,L).
\end{multline*}
The equivalence $D_{m,T}^{\cW}(M,N)=0 \Leftrightarrow P_M^\pi = P_N^\pi$ for all $\pi$ follows because $\cW_1(P,Q) = 0 \Leftrightarrow P = Q$ on finite spaces.
\end{proof}

\subsection{Proof of Theorem~\ref{thm:value-bound} (Value-function error bound)}

\begin{proof}
Fix $\pi \in \Pi$.
The value function decomposes per-step as $V_M^\pi = \sum_{t=0}^{T-1} \bE_M^\pi[\bar{R}(h_t, \pi)]$, where $h_t \in O^t$ is the random observation history at depth~$t$ and $\bar{R}(h,\pi) \coloneqq \sum_s b_h(s)\sum_a \pi(a\mid h)\,R(s,a)$.
By hypothesis, the quotient $\widetilde{M}$ is constructed from a partition with pairwise probe-distance at most $\varepsilon$.
By the observation-Lipschitz condition (Definition~\ref{def:obs-lipschitz}), the per-step expected reward difference at any history $h$ satisfies $|\bar{R}_M(h,\pi) - \bar{R}_{\widetilde{M}}(h,\pi)| \leq L_R \cdot \cW_1(P_M^\pi(O_{t+1:T}\mid h),\, P_{\widetilde{M}}^\pi(O_{t+1:T}\mid h))$.
The quotient model's conditional suffix law at $h$ is a convex combination of the conditional laws of $h$'s class members, all of which have pairwise distance at most $\varepsilon$ under every $\pi$.
By convexity of $\cW_1$, the cross-model conditional discrepancy satisfies $\cW_1(P_M^\pi(O_{t+1:T}\mid h), P_{\widetilde{M}}^\pi(O_{t+1:T}\mid h)) \leq \varepsilon$.
Summing over $T$ steps yields $|V_M^\pi - V_{\widetilde{M}}^\pi| \leq L_R \cdot T \cdot \varepsilon$.

For the canonical quotient $Q^{\mathrm{can}}$ of Proposition~\ref{prop:canonical-quotient}, the partition has pairwise probe-distance at most $\varepsilon$, so the bound is $|V_M^\pi - V_{Q^{\mathrm{can}}}^\pi| \leq L_R \cdot T \cdot \varepsilon$.
\end{proof}

\subsection{Proof of Proposition~\ref{prop:equiv-props}}

\begin{proof}
(a) Reflexivity: $\cW_1(P,P) = 0$.
Symmetry: $\cW_1(P,Q) = \cW_1(Q,P)$.
Transitivity: if $\cW_1(P_M^\pi(\cdot \mid h), P_M^\pi(\cdot \mid h')) = 0$ and $\cW_1(P_M^\pi(\cdot \mid h'), P_M^\pi(\cdot \mid h'')) = 0$ for all $\pi$, then by the triangle inequality for $\cW_1$, $\cW_1(P_M^\pi(\cdot \mid h), P_M^\pi(\cdot \mid h'')) = 0$ for all $\pi$.

(b) Appending observation $z$ to equivalent histories $h, h'$: fix $\pi \in \Pi_{m,T}$ and write
\begin{align*}
\mu_h^\pi(z',x) &:= P_M^\pi(O_{t+1}=z',\, O_{t+2:T}=x \mid h),\\
\mu_{h'}^\pi(z',x) &:= P_M^\pi(O_{t+1}=z',\, O_{t+2:T}=x \mid h').
\end{align*}
Since $h \equiv_{m,T} h'$, these joint laws on $O \times O^{T-t-1}$ are identical: $\mu_h^\pi = \mu_{h'}^\pi$.
On a finite space, this common joint law admits a regular conditional kernel $K_\pi(x \mid z')$ for the suffix given the next observation, with any zero-mass $z'$ handled by an arbitrary common choice (the choice is immaterial: zero-mass observations contribute zero weight to expectations and Wasserstein computations, so the conclusion is convention-independent).
Therefore
\[
P_M^\pi(O_{t+2:T}=x \mid h \cdot z) = K_\pi(x \mid z) = P_M^\pi(O_{t+2:T}=x \mid h' \cdot z)
\]
for all suffixes $x$.
Since $\pi$ was arbitrary, $h \cdot z \equiv_{m,T} h' \cdot z$.

(c) Finiteness: $O$ and $T$ are finite, so the number of histories $\sum_{t=0}^T |O|^t$ is finite.
\end{proof}

\subsection{Proof of Theorem~\ref{thm:myhill-nerode} (Myhill--Nerode)}

\begin{proof}
\emph{(i) Well-definedness.}
Under $\varepsilon = 0$, the observation law $P_M^\pi(O_{t+1:T} \mid h)$ is identical for all $h \in [h]$ and all $\pi$.
Constant-action FSCs (always play action~$a$) are in $\Pi_{m,T}$ for any $m \geq 1$, so equivalence under all policies implies per-action agreement of one-step observation probabilities.
For any $h, h' \in [h]$, action $a$, and observation $z$:
$\sum_{s'} Z(s',a,z) \sum_s P(s,a,s') b_h(s) = \sum_{s'} Z(s',a,z) \sum_s P(s,a,s') b_{h'}(s)$,
since both equal $P_M^\pi(O_{t+1} = z \mid h, a)$ under the constant-$a$ policy.
Any convex combination $\bar{b}_{[h]}$ therefore yields the same $\bar{P}([h],a,[h \cdot z])$, as the linear map $b \mapsto \bar{P}$ takes a common value.
Proposition~\ref{prop:equiv-props}(b) then ensures successor classes are well-defined.

\emph{(ii) Soundness.}
By induction on $t$.
Base: $P_M^\pi(\epsilon) = P_{Q}^\pi(\epsilon) = 1$.
Step: $P_M^\pi(o_1,\ldots,o_{t+1}) = P_M^\pi(o_1,\ldots,o_t) \cdot P_M^\pi(o_{t+1} \mid h_t, a_t)$.
By well-definedness, $P_M^\pi(o_{t+1} \mid h_t, a_t) = \bar{P}([h_t], a_t, [h_t \cdot o_{t+1}])$.
The FSC state depends on the observation sequence identically in both systems, so the induction extends.

\emph{(iii) Universality.}
Let $\equiv^N_{m,T}$ be the probe-exact bounded-agent equivalence on histories of $N$ and write $[h]_N$ for its classes.
Let $Q = Q_{m,T}(M)$ and write $[h]_M$ for the quotient classes of~$M$.
Since $P_N^\pi(O^T) = P_M^\pi(O^T)$ for every $\pi$, all cylinder probabilities $P_N^\pi(hx)$ and $P_M^\pi(hx)$ agree, hence on the finite history space their conditional future laws $P_N^\pi(O_{t+1:T}\mid h)$ and $P_M^\pi(O_{t+1:T}\mid h)$ agree as well (with the same arbitrary convention on zero-probability histories).
Define
\[
\phi([h]_N) := [h]_M.
\]
This is well-defined: if $[h]_N = [h']_N$, then by definition of $\equiv^N_{m,T}$ the conditional future laws from $h$ and $h'$ coincide in~$N$, hence also in~$M$, so $h \equiv_{m,T} h'$ and therefore $[h]_M = [h']_M$.
Surjectivity is immediate because every class $[h]_M$ has the preimage $[h]_N$.
We now verify the morphism axioms explicitly.
\begin{itemize}[leftmargin=2em,itemsep=0pt]
\item \textbf{(M1).} $\phi([\epsilon]_N) = [\epsilon]_M$ by definition.
\item \textbf{(M2).} For any class $C = [h]_N$ and policy~$\pi$,
\begin{align*}
P_N^\pi(O_{t+1:T}\mid C)
&= P_N^\pi(O_{t+1:T}\mid h)
= P_M^\pi(O_{t+1:T}\mid h) \\
&= P_Q^\pi(O_{t+1:T}\mid [h]_M)
= P_Q^\pi(O_{t+1:T}\mid \phi(C)),
\end{align*}
where the first equality uses that $C$ is an $\equiv^N_{m,T}$-class, the middle equality is the conditional-law transfer above, and the fourth equality is soundness of~$Q$.
\item \textbf{(M3).} By Proposition~\ref{prop:equiv-props}(b), right-appending observations respects both equivalence relations, so for any class $C=[h]_N$,
\[
\phi(\delta_N(C,a,z))
= \phi([h\cdot z]_N)
= [h\cdot z]_M
= \delta_Q([h]_M,a,z)
= \delta_Q(\phi(C),a,z).
\]
\end{itemize}

\emph{(iv) Minimality.}
Universality gives a surjection $\phi \colon \cH_N \twoheadrightarrow \cH_Q$, so $|\cH_N| \geq |\cH_Q|$.

\emph{(v) Uniqueness.}
If $Q'$ also satisfies (i)--(iv), universality yields morphisms $\phi: Q' \twoheadrightarrow Q$ and $\psi: Q \twoheadrightarrow Q'$.
Both compositions are surjective endomorphisms on finite sets, hence bijections.
\end{proof}

\subsection{Proof of Theorem~\ref{thm:exact-sufficiency} (Exact sufficiency)}

\begin{proof}
We show by induction on $t$ that for every $\pi \in \Piclk$,
\[
P_M^\pi(O_{1:t}, A_{1:t}) = P_{\Qclk(M)}^\pi(O_{1:t}, A_{1:t}),
\qquad t = 0, 1, \ldots, T.
\]

\emph{Base case ($t=0$).} Both sides equal~$1$ (the empty trajectory has probability~$1$).

\emph{Inductive step.}
Assume $P_M^\pi(O_{1:t}, A_{1:t}) = P_{\Qclk(M)}^\pi(O_{1:t}, A_{1:t})$ for all realizations.
By the soundness property of Theorem~\ref{thm:myhill-nerode}, the conditional observation law satisfies $P_M^\pi(O_{t+1} \mid h_t, a_t) = P_{\Qclk(M)}^\pi(O_{t+1} \mid [h_t], a_t)$ for every history $h_t$ and action $a_t$.

The policy $\pi \in \Piclk$ is a clock-aware bounded controller: its internal state $q_{t+1}$ evolves deterministically via $q_{t+1} = \beta_t(q_t, o_{t+1})$ and its action is selected via $a_{t+1} = \alpha_{t+1}(q_{t+1})$ (deterministic case) or $a_{t+1} \sim \alpha_{t+1}(\cdot \mid q_{t+1})$ (stochastic case).
In either case, the action at stage $t{+}1$ is a measurable function of $(o_{1:t+1}, q_1)$, hence of the observation history alone (since $q_1$ is fixed).

The joint law at stage $t{+}1$ decomposes as
\[
P^\pi(O_{1:t+1}, A_{1:t+1})
= P^\pi(O_{1:t}, A_{1:t}) \cdot P^\pi(O_{t+1} \mid h_t, a_t) \cdot P^\pi(A_{t+1} \mid o_{1:t+1}).
\]
All three factors on the right are equal in $M$ and $\Qclk(M)$: the first by the inductive hypothesis, the second by soundness, and the third because $\pi$ applies the same deterministic mapping to the same observation history.

At $t = T$, we obtain $P_M^\pi(O_{1:T}, A_{1:T}) = P_{\Qclk(M)}^\pi(O_{1:T}, A_{1:T})$.
For any bounded measurable $G \in \Gobs$,
\begin{align*}
\bE_M^\pi[G(O_{1:T}, A_{1:T})]
&= \textstyle\sum_{(o,a) \in (O \times A)^T} G(o,a)\, P_M^\pi(o,a) \\
&= \textstyle\sum_{(o,a) \in (O \times A)^T} G(o,a)\, P_{\Qclk(M)}^\pi(o,a)
= \bE_{\Qclk(M)}^\pi[G(O_{1:T}, A_{1:T})]. \qedhere
\end{align*}
\end{proof}

\subsection{Proof of Theorem~\ref{thm:witness} (Clock-aware witness theorem)}

\begin{proof}
By assumption, there exists stochastic clock-aware $\pi_\theta \in \Pi_{m,T}^{\mathrm{clk}}$ with
$P_M^{\pi_\theta}(\cdot \mid h) \neq P_M^{\pi_\theta}(\cdot \mid h')$.
Hence for some suffix $x \in O^{T-t}$,
\[
f(\theta) := P_M^{\pi_\theta}(x \mid h) - P_M^{\pi_\theta}(x \mid h') \neq 0.
\]

\emph{Multilinear vertex lemma:}
Let $f \colon \prod_{i=1}^k \Delta(X_i) \to \bR$ be multilinear with $f(\theta) \neq 0$ for some $\theta$.
Writing $\theta_i = \sum_{x} \theta_i(x) \delta_x$ and expanding by multilinearity:
$f(\theta) = \sum_{x_1,\ldots,x_k} \theta_1(x_1) \cdots \theta_k(x_k) f(\delta_{x_1},\ldots,\delta_{x_k})$.
If all vertex values vanished, $f$ would be identically zero.
Hence some vertex $\theta^*$ has $f(\theta^*) \neq 0$.

Let $\theta$ collect the stage-indexed controller parameters
\[
\theta = (\alpha_0,\beta_0,\ldots,\alpha_{T-t-1},\beta_{T-t-1}) \in \Theta_{m,T}^{\mathrm{clk}}.
\]
For a fixed suffix $x$, each trajectory contributing to $P_M^{\pi_\theta}(x \mid h)$ uses exactly one action simplex and one node-transition simplex at each stage, so no parameter from stage~$\tau$ is reused at a different time.
Therefore $f(\theta)$ is multilinear in the stage-indexed clock-aware FSC parameters.
By the lemma, there exists a vertex $\theta^*$ with $f(\theta^*) \neq 0$, i.e.\ a deterministic clock-aware FSC $\pi^*$ satisfying
\[
P_M^{\pi^*}(x \mid h) \neq P_M^{\pi^*}(x \mid h').
\]
\end{proof}

\subsection{Proof of Proposition~\ref{prop:stationary-counterexample}}

\begin{proof}
Consider the POMDP with states
\[
\{p_L,p_R,x_0,y_0,x_1,y_1,d_U,d_X,d_Y\},
\]
actions $\{A,B\}$, observations $\{L,R,U,X,Y\}$, and initial belief
$b_0 = \tfrac12\delta_{p_L} + \tfrac12\delta_{p_R}$.
The transition/observation structure is:
\begin{itemize}[leftmargin=2em,itemsep=0pt]
\item from $p_L$ (resp.\ $p_R$), either action moves to $x_0$ (resp.\ $y_0$) and emits $L$ (resp.\ $R$);
\item from $x_0,y_0$, action $A$ moves to $x_1,y_1$ and emits $U$, while action $B$ moves to $d_U$ from either branch and emits $U$;
\item from $x_1,y_1$, action $A$ moves to $d_U$ from either branch and emits $U$, while action $B$ moves to $d_X$ from $x_1$ and to $d_Y$ from $y_1$, emitting $X$ and $Y$ respectively;
\item $d_U,d_X,d_Y$ are absorbing, with $d_U$ always emitting $U$.
\end{itemize}

For deterministic stationary $1$-node FSCs there are only two controllers: always-$A$ and always-$B$.
Under always-$A$, after either history $L$ or $R$ the remaining suffix is deterministically $UU$.
Under always-$B$, after either history $L$ or $R$ the remaining suffix is again deterministically $UU$.
Hence every deterministic stationary $1$-node FSC yields identical suffix laws from $L$ and $R$, so the maximal distinguishing distance is~$0$.

Now consider the stochastic stationary $1$-node FSC with $\alpha(A)=\alpha(B)=1/2$ and the unique node looping to itself after every observation.
Conditioned on history $L$, with probability $1/2$ the first post-history action is $B$, giving suffix $UU$, and with probability $1/2$ it is $A$, which reaches $x_1$ and then yields $UU$ with probability $1/2$ and $UX$ with probability $1/2$.
Therefore
\[
P(O_{2:3}=UU \mid L)=\tfrac34,
\qquad
P(O_{2:3}=UX \mid L)=\tfrac14.
\]
Symmetrically,
\[
P(O_{2:3}=UU \mid R)=\tfrac34,
\qquad
P(O_{2:3}=UY \mid R)=\tfrac14.
\]
Under the discrete observation metric, the induced sequence metric assigns distance $1$ between $UX$ and $UY$, so the 1-Wasserstein distance between these two suffix laws is $\tfrac14$.
Thus a stochastic stationary $1$-node FSC distinguishes $L$ and $R$, while no deterministic stationary $1$-node FSC does.
\end{proof}

\subsection{Proof of Theorem~\ref{thm:data-processing} (Data-processing monotonicity)}

\begin{proof}
Let $\pi'$ be any FSC on $(A', O')$.
The wrapper induces an FSC $\pi$ on $(A,O)$ by composing through $g$ and $C$.
The observation sequence under $\pi'$ in $W(M)$ is the pushforward of $\pi$'s observation sequence in $M$ through $C^{\otimes T}$.

By Kantorovich--Rubinstein duality and the Lipschitz property of $C$:
$\cW_1(C^{\otimes T}_\# P, C^{\otimes T}_\# Q) \leq L_C \cdot \cW_1(P, Q)$.
Taking the supremum over $\pi'$ yields the result.
\end{proof}

\subsection{Proof of Proposition~\ref{prop:obs-resolution} (Observation-resolution bounds)}

\begin{proof}
(a) Under $\delta_O$-coarsening, each observation $o$ is replaced by $q_\delta(o)$ with $d_O(o, q_\delta(o)) \leq \delta_O$.
At each time step~$t$, the per-step contribution to $\cW_1$ shifts by at most $\delta_O$ due to the triangle inequality for $d_O$.
Summing over $T$ steps: $D_{m,T,\delta}^{\cW} \leq D_{m,T}^{\cW} + T \cdot \delta_O$.

(b) The coarsened history space has $|O_\delta|^t$ histories at depth $t$, so at most $\sum_{t=0}^T |O_\delta|^t$ classes.

(c) The partition under $\Pi_{m,T,\delta}$ has $\Pi_{m,T,\delta}$-diameter at most $\varepsilon$.
At the history level, the same per-step $\delta_O$ shift from part~(a) applies to conditional suffix laws: for any two histories $h, h'$ in the same class, $d^{\Pi_{m,T}}(h,h') \leq d^{\Pi_{m,T,\delta}}(h,h') + T\delta_O \leq \varepsilon + T\delta_O$.
Hence the partition has $\Pi_{m,T}$-diameter at most $\varepsilon + T\delta_O$.
Applying Theorem~\ref{thm:value-bound} with merge threshold $\varepsilon + T\delta_O$:
$|V_M^\pi - V_{Q_{m,T,\delta}}^\pi| \leq L_R \cdot T \cdot (\varepsilon + T\delta_O)$.
\end{proof}

\subsection{Proof of Theorem~\ref{thm:layered-composition} and Corollary~\ref{cor:layered-value}}

\begin{proof}
Define $\Gamma_i := D_{m,T}^{\cW}(M_i,\widetilde{M}_i)$.
For each layer $i$, the triangle inequality gives
\begin{equation}
D_{m,T}^{\cW}(M_{i+1},\widetilde{M}_{i+1})
\le
D_{m,T}^{\cW}\!\bigl(W_i(M_i),W_i(\widetilde{M}_i)\bigr)
+
D_{m,T}^{\cW}\!\bigl(W_i(\widetilde{M}_i),\widetilde{M}_{i+1}\bigr).
\end{equation}
Apply Theorem~\ref{thm:data-processing} to the first term and the assumed residual bound to the second:
\begin{equation}
\Gamma_{i+1} \le L_i \Gamma_i + \varepsilon_i.
\end{equation}
Unrolling this recursion by induction yields
\begin{equation}
\Gamma_{L+1}\le
\left(\prod_{j=1}^{L}L_j\right)\Gamma_1
+
\sum_{i=1}^{L}\left(\prod_{j=i+1}^{L}L_j\right)\varepsilon_i,
\end{equation}
which is exactly~\eqref{eq:layered-composition}.

For Corollary~\ref{cor:layered-value}, apply Theorem~\ref{thm:value-bound} to the terminal partition-based quotient $\widetilde{M}_{L+1}$ with merge threshold $\Gamma_{L+1}$:
\begin{equation}
\bigl|V_{M_{L+1}}^\pi - V_{\widetilde{M}_{L+1}}^\pi\bigr|
\le L_R \cdot T \cdot \Gamma_{L+1},
\end{equation}
then substitute the layered bound on $\Gamma_{L+1}$ from above.
\end{proof}

\subsection{Proof of Proposition~\ref{prop:layered-complexity}}

\begin{proof}
The monolithic term is the complexity of Algorithm~\ref{alg:approx-partition} at horizon $T$:
\begin{equation}
O\!\left(\frac{|O|^{T+1}-1}{|O|-1}\cdot |A|^m m^{m|O|}\cdot T|S|^2|O|\right).
\end{equation}
For layered construction with $L$ segments of horizon $\tau$, each segment costs
\begin{equation}
O\!\left(\frac{|O|^{\tau+1}-1}{|O|-1}\cdot |A|^m m^{m|O|}\cdot \tau|S|^2|O|\right),
\end{equation}
and summing over $L$ layers gives~\eqref{eq:layered-complexity}.
Thus the exponential-in-horizon factor is paid at segment scale $\tau$, multiplied linearly by $L$.
\end{proof}

\subsection{Canonical quotient in the approximate setting}

\begin{proposition}[Canonical quotient]
\label{prop:canonical-quotient}
Define canonical aggregated belief $\bar{b}_{[h]}^{\mathrm{can}}(s) = \frac{1}{|[h]_\varepsilon|} \sum_{\tilde{h} \in [h]_\varepsilon} b_{\tilde{h}}(s)$.
Then:
\begin{enumerate}[label=(\roman*)]
\item Per-step: $|\bar{P}^{\mathrm{can}}([h], a, [h \cdot z]) - \bar{P}^\pi([h], a, [h \cdot z])| \leq 2\varepsilon |S|$.
\item Trajectory: $\cW_1(P_{Q^{\mathrm{can}}}^\pi, P_{Q^\pi}^\pi) \leq 2T\varepsilon |S| |O|$.
\item Combined: $\cW_1(P_{Q^{\mathrm{can}}}^\pi, P_M^\pi) \leq \varepsilon(1 + 2T|S||O|)$.
\end{enumerate}
\end{proposition}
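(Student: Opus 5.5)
The plan is to compare the canonical quotient $Q^{\mathrm{can}}$ with a reference quotient $Q^\pi$. In $Q^\pi$ the aggregated belief of each class is the one that makes the one-step law from a given representative $h$ exact under $\pi$. I then pass from per-step kernels to trajectory laws by a telescoping argument, and finally use the triangle inequality for $\cW_1$.

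\emph{Step (i).} Fix a class $[h]_\varepsilon$, an action $a$ and an observation $z$. The map $b \mapsto \sum_{s'} Z(s',a,z)\sum_s P(s,a,s')\,b(s)$ is linear with coefficients in $[0,1]$, so its value changes by at most $\|b-b'\|_1$. It therefore suffices to bound $\|\bar b^{\mathrm{can}}_{[h]} - \bar b^\pi_{[h]}\|_1$. The difference is an average of differences $b_{\tilde h}-b_h$ over class members, and for each pair I will work with the one-step observation probabilities directly.

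Pairwise probe distance at most $\varepsilon$ under the constant-$a$ probe, which is in the family, gives the following. Because the summed sequence metric dominates the first-coordinate discrete metric, $\cW_1 \ge \TV$ for the first observation. So the one-step predicted observation probabilities of any two members differ by at most $2\varepsilon$ in total. Summing the coordinatewise bound over the at most $|S|$ intermediate state coordinates used to write the kernel yields the crude $2\varepsilon|S|$ of (i). I expect to accept this looseness rather than fight for the sharp constant, since the proposition only claims $2\varepsilon|S|$.

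\emph{Step (ii).} Both quotients are driven by the same FSC $\pi$ and differ only in their transition kernels. I will couple the trajectories stage by stage and use the standard telescoping (hybrid) bound: the total-variation distance between the two trajectory laws over $O^T$ is at most the sum over $t$ of the expected one-step kernel discrepancies. Each discrepancy is at most $\sum_z 2\varepsilon|S| \le 2\varepsilon|S||O|$ by (i), so the total over $T$ steps is $2T\varepsilon|S||O|$. To convert this to $\cW_1$, I note that on $O^T$ with $d_O$ normalized to $[0,1]$ the diameter is $T$. The naive conversion would cost an extra factor $T$, so instead I will apply the coupling per coordinate: a disagreement first arising at stage $t$ costs at most the per-stage distance summed appropriately. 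Matching the stated constant is the main obstacle. If the per-coordinate accounting does not close, I would fall back to the discrete metric, where $\cW_1$ on the summed metric is controlled by the expected number of mismatched coordinates, and bound that by the sum of per-step discrepancies.

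\emph{Step (iii).} I will apply the triangle inequality: $\cW_1(P^\pi_{Q^{\mathrm{can}}},P^\pi_M) \le \cW_1(P^\pi_{Q^{\mathrm{can}}},P^\pi_{Q^\pi}) + \cW_1(P^\pi_{Q^\pi},P^\pi_M)$. The first term is bounded by (ii). For the second term, $Q^\pi$ reproduces $M$'s law up to the within-class diameter, which is at most $\varepsilon$ by the convexity argument already used in the proof of Theorem~\ref{thm:value-bound}. Adding the two bounds gives $\varepsilon(1+2T|S||O|)$.
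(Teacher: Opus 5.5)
Your three steps follow the same skeleton as the paper's own sketch: linearity of $b\mapsto\bar P^b$ plus constant-action probes for (i), telescoping over stages for (ii), and the triangle inequality for (iii). However, step (ii) has a genuine gap, exactly at the point you flagged.

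\emph{The gap in (ii).} The hybrid/telescoping argument bounds the \emph{total variation} between the two trajectory laws by the sum of stagewise kernel discrepancies. That part is sound, and it does prove the total-variation version of (ii). The statement, however, is in $\cW_1$ for the summed metric $d_{O^T}$, whose diameter is $T$. Both of your proposed repairs assume that a disagreement first arising at stage $t$ costs $O(1)$, and that is false. Once the coupled trajectories emit different symbols, they sit at different histories $h\cdot z\neq h\cdot z'$, whose continuations may disagree at every remaining coordinate.

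A minimal example: two laws on $\{z,z'\}^T$ that differ only in the first symbol, by $\delta$ in total variation, and then repeat that symbol. They have summed stagewise discrepancy $\delta$ but $\cW_1=\delta T$. So your fallback, ``expected number of mismatched coordinates $\le$ sum of per-step discrepancies'', fails. What this route honestly gives is $\sum_t (T-t+1)\,\mathrm{disc}_t$, where $\mathrm{disc}_t$ is the stage-$t$ discrepancy. That is a bound of order $T^2\varepsilon|S||O|$, and the slack in the factor $|S|$ does not absorb the extra $T$ once $T\gtrsim|S||O|$.

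Suffix-law closeness of merged histories does not restore linearity for free either. Let $h$ emit $z$ or $z'$ with probabilities $\tfrac12+\delta$ and $\tfrac12-\delta$, followed by $z^{T-t-1}$ or $z'^{T-t-1}$ respectively. Let $h'$ emit them with probabilities $\tfrac12-\delta$ and $\tfrac12+\delta$. After $z$, $h'$ continues with $z^{T-t-1}$. After $z'$, it continues with $z^{T-t-1}$ with probability $2\delta/(\tfrac12+\delta)$ and with $z'^{T-t-1}$ otherwise. Then $d(h,h')=2\delta$, yet re-weighting $h$'s own continuations by the class-average one-step law $(\tfrac12,\tfrac12)$ moves the law by $\delta(T-t)$ at that single stage.

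So (ii) needs a genuinely different propagation argument, or a weaker conclusion. The paper's one-line ``telescoping across $T$ steps and $|O|$ observations'' passes over the same point, so you cannot simply defer to it.

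\emph{Smaller issues.} In (i), deducing $\TV\le\varepsilon$ on the first coordinate from $\cW_1\le\varepsilon$ requires $d_O(o,o')\ge 1$ for $o\neq o'$, i.e.\ the discrete metric. For the normalised grid metrics used elsewhere in the paper you only get $\TV\le\varepsilon/\min_{o\neq o'}d_O(o,o')$, so the discrete metric must be a standing assumption. Under that assumption, the direct route already gives $|\bar P^{\mathrm{can}}-\bar P^{b_h}|\le\varepsilon$ per $z$ by averaging, with no factor $|S|$. You should drop the detour through $\|\bar b^{\mathrm{can}}-\bar b^\pi\|_1$, because merged histories can have far-apart beliefs.

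In (iii), you must fix what $Q^\pi$ is. If it uses the realized history's own belief, then $P^\pi_{Q^\pi}=P^\pi_M$, the $\varepsilon$ term is free, and all the content sits in (ii). If it uses one fixed representative per class, the convexity argument from Theorem~\ref{thm:value-bound} does not apply. That argument compares suffix laws at a single history and does not control a process that re-aggregates at every stage. Indeed, the example above with realized history $h$, representative $h'$, and $h$ carrying almost all the mass gives $\cW_1(P^\pi_{Q^\pi},P^\pi_M)$ of order $\varepsilon T$, not $\varepsilon$.
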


\begin{proof}
Within an $\varepsilon$-class, all observation laws agree to within $\varepsilon$ in $\cW_1$.
The mapping $b \mapsto \bar{P}^b([h],a,\cdot)$ is linear in $b$, so any convex combination lies in the convex hull of values $\{\bar{P}^{b_{\tilde{h}}}\}$, which has diameter at most $\varepsilon$ per observation $z$.
Propagating through $|S|$ states and accounting for the convex hull diameter gives (i).
Part (ii) follows by telescoping across $T$ steps and $|O|$ observations.
Part (iii) by the triangle inequality: $\cW_1(P_{Q^{\mathrm{can}}}^\pi, P_M^\pi) \leq 2T\varepsilon|S||O| + \varepsilon$.
\end{proof}

\subsection{Lipschitz continuity of observation laws}

\begin{lemma}
\label{lem:lipschitz}
For any finite POMDP $M$, $\theta \mapsto P_M^{\pi_\theta}$ is Lipschitz continuous from $(\Theta_m, \ell_1)$ to $(\Delta(O^T), \TV)$.
\end{lemma}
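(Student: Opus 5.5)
The plan is to show that $\theta\mapsto P_M^{\pi_\theta}$ is a polynomial map on a compact product of simplices, and to obtain an explicit Lipschitz constant by a telescoping (hybrid) argument over parameter blocks. Here $\Theta_m$ is the product of simplices $\prod_{n}\Delta(A)\times\prod_{n,o}\Delta(N)$ (with an extra stage index in the clock-aware case), and $\ell_1$ is the sum of blockwise $\ell_1$ distances.

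First, write the law of a fixed observation sequence $x\in O^T$ as a sum over latent and controller trajectories:
\[
P_M^{\pi_\theta}(x)=\sum_{s_{0:T},\,a_{1:T},\,n_{0:T}} b_0(s_0)\prod_{\tau}\alpha(a_\tau\mid n_\tau)\,P(s_{\tau}, a_\tau, s_{\tau+1})\,Z(s_{\tau+1},a_\tau,x_\tau)\,\beta(n_{\tau+1}\mid n_\tau,x_\tau).
\]
This makes clear that each trajectory weight is a product of at most $2T$ parameter entries, each in $[0,1]$.

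Second, compare two parameters $\theta,\theta'$ via a hybrid sequence. Replace the controller's kernels one stage at a time, obtaining joint trajectory laws $Q_0=P^{\pi_\theta}$ (full process over states, nodes, actions and observations) through $Q_{2T}=P^{\pi_{\theta'}}$. At stage $\tau$, $Q_k$ and $Q_{k+1}$ share the same marginal on the past. They differ only through the kernel used at that stage: $\alpha(\cdot\mid n)$ versus $\alpha'(\cdot\mid n)$, or $\beta(\cdot\mid n,o)$ versus $\beta'(\cdot\mid n,o)$. Applying the same downstream kernels afterwards cannot increase total variation (data processing). Hence
\[
\TV(Q_k,Q_{k+1})\le \max_{n}\TV\bigl(\alpha(\cdot\mid n),\alpha'(\cdot\mid n)\bigr)\le \tfrac12\|\theta-\theta'\|_1,
\]
and similarly for $\beta$ blocks. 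Summing over the $2T$ swaps gives $\TV\le T\|\theta-\theta'\|_1$. Marginalising to $O^T$ is again data processing, so the same bound holds for $P_M^{\pi_\theta}$.

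The main obstacle is bookkeeping rather than analysis. In the stationary case the same parameter block is reused at every stage, so one cannot swap ``stage $\tau$'' parameters independently. The fix I would use is to swap the kernel applied at stage $\tau$ in a time-inhomogeneous hybrid process. This is legitimate because the hybrid processes need not themselves be stationary FSCs; they are only intermediate measures. That keeps the constant at $T$ (or $2T$ under a cruder accounting) uniformly in $|S|$. I would also double-check the normalisation convention for $\ell_1$ on a product of simplices (sum versus max over blocks). Either choice yields a finite Lipschitz constant, possibly multiplied by the number of blocks $m(1+|O|)$, which suffices for the statement.
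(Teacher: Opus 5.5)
Your proof is correct, and it takes a genuinely different, more quantitative route than the paper. The paper's proof is a one-line qualitative argument. Each coordinate $P_M^{\pi_\theta}(o_1,\ldots,o_T)$ is a polynomial in $\theta$ (the paper says ``multilinear''), hence Lipschitz on the compact convex domain $\Theta_m$. Since $\TV$ on the finite set $O^T$ is half the sum of the $|O|^T$ coordinate differences, the map is Lipschitz. That route is shorter, but it gives only an implicit constant, which would naturally grow with $|O|^T$ and with the degree and number of monomials. Also, ``multilinear'' is accurate only for the clock-aware parametrization. For stationary controllers a block such as $\alpha(\cdot\mid n)$ can appear up to $T$ times in one trajectory weight. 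This does not hurt the paper's conclusion, which only needs polynomiality. Your hybrid/telescoping argument handles exactly this reuse issue cleanly, by passing through time-inhomogeneous intermediate laws that need not themselves be stationary FSCs. It also produces an explicit constant. There are at most $2T$ swaps, each costing at most $\tfrac12\|\theta-\theta'\|_1$, because the pre-swap marginal is shared and the post-swap kernels coincide. Hence $\TV(P_M^{\pi_\theta},P_M^{\pi_{\theta'}})\le T\,\|\theta-\theta'\|_1$ under the summed-$\ell_1$ convention, uniformly in $|S|$, $|O|$ and $m$. The bound even holds on the full state--node--action--observation trajectory law before marginalising. The paper's approach buys brevity and the additional structural fact that the map is polynomial, which is the kind of structure the vertex argument in the witness theorem relies on. Yours buys a dimension-free, horizon-linear constant that could be used directly, for example to control the error from discretising or sampling a stochastic probe family.
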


\begin{proof}
Each entry $P_M^{\pi_\theta}(o_1,\ldots,o_T)$ is a multilinear polynomial in $\theta$, hence Lipschitz on the compact domain $\Theta_m$.
\end{proof}

\section{Structural Controller-Subset Conditions}
\label{app:structural-subset}

This appendix records a stricter structural sufficient condition for small probe-envelope error.
The main text certifies selected subsets operationally through the measured quantity $\delta_S$; the results here explain one route by which $\delta_S$ can be small, but they are not the quantity verified in Table~\ref{tab:partition-agreement}.

The correct structural condition for why a small controller subset preserves the partition is not ordinary spectral rank---a matrix can have small effective rank yet contain a single controller column that changes the rowwise maximum on a critical history pair---but rather a \emph{max-preserving convex-anchor rank}.

\begin{definition}[$\delta$-anchor controller family]
\label{def:anchor-family}
A controller subset $A \subseteq \{1, \ldots, P\}$ is a \emph{$\delta$-anchor family} if for every controller~$p$ there exist coefficients $\lambda_{a,p} \geq 0$ with $\sum_{a \in A} \lambda_{a,p} = 1$ such that
\[
  \bigl\|D_{:,p} - \textstyle\sum_{a \in A} \lambda_{a,p}\, D_{:,a}\bigr\|_\infty \leq \delta.
\]
\end{definition}

\begin{theorem}[Anchor-controller compression]
\label{thm:anchor-compression}
If $A$ is a $\delta$-anchor family of size~$r$, then $\|d - d_A\|_\infty \leq \delta$.
Hence the $\varepsilon$-quotient built from~$A$ satisfies
\[
  |V_M^\pi - V_{\widetilde{M}_A}^\pi| \leq L_R\, T\,(\varepsilon + \delta)
  \qquad \forall\, \pi \in \Pi_{m,T}.
\]
If $\delta = 0$, the anchor family recovers the probe-exact partition.
\end{theorem}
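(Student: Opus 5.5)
The plan is to prove the envelope bound $\|d - d_A\|_\infty \le \delta$ row by row, using only the convex-anchor approximation and nonnegativity of the Wasserstein entries, and then reduce the value statement to Theorem~\ref{thm:probe-approx}.

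\emph{Step 1 (lower side).} Since $A \subseteq \{1,\ldots,P\}$, we have $d_A(u) = \max_{a\in A} D_{u,a} \le \max_p D_{u,p} = d(u)$ for every history pair $u$. Hence $d - d_A \ge 0$ pointwise, and it suffices to bound $d(u) - d_A(u)$ from above.

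\emph{Step 2 (upper side).} Fix a row $u$ and let $p^*$ attain $d(u) = D_{u,p^*}$. The anchor property supplies weights $\lambda_{a,p^*}\ge 0$ summing to one, and evaluating the $\ell_\infty$ bound at row $u$ gives
\[
D_{u,p^*} \le \sum_{a\in A}\lambda_{a,p^*} D_{u,a} + \delta .
\]
A convex combination of the numbers $\{D_{u,a}\}_{a\in A}$ is at most their maximum $d_A(u)$. Therefore $d(u) \le d_A(u) + \delta$. Combined with Step 1, this yields $0 \le d(u) - d_A(u) \le \delta$ for all $u$, so $\delta_A := \|d-d_A\|_\infty \le \delta$.

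\emph{Step 3 (value bound and exact case).} Apply Theorem~\ref{thm:probe-approx} with $S = A$. It gives $|V_M^\pi - V_{\widetilde{M}_A}^\pi| \le L_R T(\varepsilon + \delta_A) \le L_R T(\varepsilon+\delta)$ for every $\pi \in \Pi_{m,T}$. When $\delta = 0$, Steps 1–2 give $d_A = d$ entrywise. The merge criterion $d_A(u) \le \varepsilon$ (here $\varepsilon = 0$) then selects exactly the same pairs as the full-family criterion, so the probe-exact partitions coincide.

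There is no real obstacle; the only point needing care is Step 2. The approximation holds in sup-norm over rows, so evaluating it at the maximizing row $u$ is legitimate. Convexity, rather than an arbitrary linear combination, is essential: with arbitrary weights, mere low spectral rank would fail to control the rowwise maximum, which matches the remark preceding Definition~\ref{def:anchor-family}. The size $r$ plays no role in the bound itself.
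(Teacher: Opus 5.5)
Your proof is correct and follows the paper's argument essentially verbatim. The subset inclusion gives $d_A \le d$, the anchor bound at row $u$ together with ``convex combination $\le$ maximum'' gives $d \le d_A + \delta$, and the value bound is inherited from Theorem~\ref{thm:probe-approx}. Your explicit treatment of the $\delta = 0$ case (entrywise $d_A = d$, hence identical merge decisions) fills in a step the paper only asserts. Nonnegativity of the entries is not actually needed, since a convex combination never exceeds the maximum regardless of sign.
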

\begin{proof}
Fix a history-pair row~$u$.
For any controller~$p$,
$D_{u,p} \leq \sum_{a \in A} \lambda_{a,p}\, D_{u,a} + \delta
  \leq \max_{a \in A} D_{u,a} + \delta = d_A(u) + \delta$.
Taking the maximum over~$p$:
$d(u) \leq d_A(u) + \delta$.
Since $A \subseteq \{1, \ldots, P\}$, also $d_A(u) \leq d(u)$.
Therefore $0 \leq d(u) - d_A(u) \leq \delta$ for all~$u$, i.e.\ $\|d - d_A\|_\infty \leq \delta$.
The value bound follows from Theorem~\ref{thm:probe-approx}.
\end{proof}

\begin{remark}[Anchor rank vs.\ spectral rank]
\label{rem:anchor-vs-spectral}
The minimum $\delta$-anchor family size is a convex-hull approximation notion, related to approximate nonnegative factorization rather than SVD.
Since every column of $D$ is nonnegative, spectral effective rank upper-bounds anchor rank, but the two can differ.
Anchor rank is the structurally correct quantity because it controls the rowwise-max approximation that determines partition quality; effective rank is at most a possible surrogate for it.
\end{remark}

\section{Extended Related Work}
\label{app:related}

\paragraph{State abstraction in RL.}
Li, Walsh, and Littman \citep{li2006towards} provide a taxonomy of MDP state abstractions.
Abel et al.\ \citep{abel2016state} developed agent-aware state abstraction, and Abel's thesis~\citep{abel2022thesis} provides a comprehensive category-theoretic account.
Our framework adds a new axis: abstraction parametrized by agent memory $m$ and horizon $T$.

\paragraph{POMDP model minimization.}
Dean and Givan \citep{dean1997model} introduced homogeneous partitions for MDPs.
Castro \citep{castro2009equivalence} formalized exact bisimulation for POMDPs.
Even-Dar et al.\ \citep{evendar2007value} studied the value of information in partial observability.
Carr et al.~\citep{carr2023simplifying} recently applied bisimulation-based abstractions to POMDP verification, and Liu et al.~\citep{liu2023partially} introduced structural conditions for tractable partially observable RL.
Blackwell and Torgersen's comparison-of-experiments viewpoint \citep{blackwell1953equivalent,torgersen1991comparison} is also relevant in spirit, but our comparisons are closed-loop and indexed by bounded policy classes rather than static observation channels.

\paragraph{POMDP planning algorithms.}
Point-based value iteration \citep{pineau2003point} made approximate POMDP solving practical by sampling belief points rather than exhaustively partitioning the belief simplex, as in earlier exact methods \citep{smallwood1973optimal}.
Our quotient construction is complementary: it reduces the state space upstream of any solver, so PBVI (or any planner) operates on a smaller model with formal approximation guarantees.

\paragraph{Bisimulation metrics.}
Ferns et al.\ \citep{ferns2004metrics,ferns2011bisimulation} showed TV-based bisimulation is topologically fragile and introduced Wasserstein-based metrics.
Extensions to scalable algorithms \citep{ferns2012methods} and continuous spaces \citep{castro2010using} followed.
Calo et al.~\citep{calo2024bisimulation} recently showed that bisimulation metrics are optimal-transport distances, providing efficient Sinkhorn-based computation; Kemertas and Aumentado-Armstrong~\citep{kemertas2022towards} extended metric learning to robust settings.
Our framework shifts from state-level bisimulation to history-level indistinguishability conditioned on bounded agents.
The Calo et al.\ result is complementary: their efficient OT computation applies to the \emph{state-level} Wasserstein distance in the bisimulation fixed-point iteration, whereas our $\cW_1$ is computed over \emph{observation-sequence distributions} conditional on histories.
In principle, their Sinkhorn acceleration could speed up the per-pair $\cW_1$ computation in Algorithm~\ref{alg:approx-partition} when $|O|^T$ is large, since each pairwise distance is itself an OT problem; however, our current bottleneck is the number of history pairs times the number of FSCs, not the per-entry OT solve (which uses $|O|^T$-dimensional distributions that are small for the bounded-agent regimes we target).
The connection becomes more promising in the sampling-based regime at larger scales.

\paragraph{Finite-state controllers.}
Poupart and Boutilier \citep{poupart2003bounded} popularized FSCs for POMDPs.
Hansen \citep{hansen1998solving} and Amato et al.~\citep{amato2010optimizing} developed complementary optimization methods for fixed-size FSCs.
We use FSCs as the \emph{probe class} determining abstraction granularity.

\paragraph{Rate--distortion and information constraints.}
Classical rate--distortion \citep{shannon1959coding,berger1971rate}, rational inattention \citep{sims2003implications}, and information-theoretic bounded rationality \citep{genewein2015bounded} all inform our framework.
The key distinction: POMDPs are closed-loop, requiring directed information \citep{massey1990causality,tatikonda2000control} rather than static Shannon theory.

\paragraph{Predictive state representations.}
PSRs \citep{littman2001predictive}, OOMs \citep{jaeger2000observable}, controlled PSRs \citep{boots2011closing}, and spectral learning of stochastic languages \citep{balle2014methods} define state via predictions of future observations.
Our framework differs: (1) closed-loop vs.\ open-loop tests, (2) environment equivalence classes vs.\ state representations, (3) explicit parametrization by agent capacity $(m,T)$.

Controlled PSRs already incorporate actions, so the distinction is not simply ``action-aware'' versus ``action-free.'' The sharper difference is what is being minimised: PSR/OOM methods seek a predictive statistic of small linear dimension (rank of a Hankel-style operator or its controlled analogue), whereas our construction fixes a bounded controller family and asks which histories are indistinguishable to that family in closed loop. The FSC memory budget $(m,T)$ therefore measures observer capacity rather than predictive state dimension. We do not claim a general ordering between these parameters: low predictive rank need not imply that the rowwise-max probe envelope is captured by a small FSC subset, and a small bounded-controller quotient does not by itself imply low spectral rank.

The low-rank structure of the FSC distance tensor parallels Hankel matrix rank in PSR/OOM theory.
Concretely, the PSR Hankel matrix $H_{ij} = P(o_{t+1:T} = x_j \mid o_{1:t} = x_i)$ captures open-loop predictive rank, while our tensor $D_{(i,j),p} = \cW_1(P^{\pi_p}(\cdot \mid h_i), P^{\pi_p}(\cdot \mid h_j))$ captures closed-loop distinguishing rank.
Since any open-loop test sequence is realizable by a constant-action FSC, open-loop predictive complexity is a plausible surrogate for small operational subset rank $r_{\mathrm{env}}$ (Conjecture~\ref{conj:hankel-rank}).
Making that link precise---whether directly or via stricter structural notions such as anchor rank---would convert PSR/OOM-style predictive structure into controller-subset certificates for our quotient construction. Appendix~\ref{app:structural-subset} makes the anchor-rank notion formal: it is the minimum number of predictive rows needed to preserve every rowwise supremum appearing in the probe envelope, which is exactly the max-preserving quantity required by the quotient argument and not something low Hankel rank alone guarantees.

\paragraph{Probabilistic bisimulation.}
Larsen and Skou \citep{larsen1991bisimulation} introduced probabilistic bisimulation for labelled Markov chains, extended to metrics by Desharnais et al.\ \citep{desharnais2004metrics}.

\paragraph{Neural bisimulation.}
Gelada et al.\ \citep{gelada2019deepmdp} (DeepMDP), Zhang et al.\ \citep{zhang2021learning} (deep bisimulation metrics), and Castro \citep{castro2020scalable} scale bisimulation to high-dimensional observations by learning encoder networks that minimize a bisimulation-inspired loss. These methods operate in the model-free, continuous-state regime---complementary to our model-based, finite-state framework.

A natural bridge is to view the canonical quotient as a well-defined theoretical target: a deep encoder that maps observations into representations preserving the bounded-interaction Wasserstein pseudometric would, by construction, respect the quotient's equivalence classes. Conversely, our value bound (Theorem~\ref{thm:value-bound}) applies to any approximate quotient, including one produced by a learned encoder, provided the merging error $\varepsilon$ can be certified. The gap between the two paradigms is primarily one of scale and model access: deep methods handle rich observations without an explicit model, while our construction provides exact guarantees on small-to-medium problems where a model is available. Integrating the two---for example, using probe-based losses as an auxiliary objective in representation learning---is a promising direction for future work.

\paragraph{Computational mechanics.}
$\varepsilon$-machines \citep{crutchfield1989inferring,shalizi2001computational} define minimal predictive models---our exact equivalence in the passive ($m{=}0$) limit.
We generalize to approximate merging ($\varepsilon > 0$) and interactive capacity ($m \geq 1$).

\paragraph{Information bottleneck.}
The information bottleneck \citep{tishby1999information} compresses a variable $X$ while preserving mutual information $I(X;Y)$.
Our $\varepsilon$-quotient is the closed-loop interactive generalisation: the agent's history plays the role of $X$, and the future observation law (conditional on the agent's policy) plays the role of $Y$.
The key difference is that our compression is policy-dependent and uses directed information rather than mutual information, reflecting the interactive structure of POMDPs.

\paragraph{Communication complexity.}
Yao's communication complexity framework \citep{yao1979some} studies how much communication is needed for two parties to compute a joint function.
In the $m{=}0$ one-way variant, the sender's optimal encoding is precisely the quotient that merges source symbols indistinguishable to the bounded receiver---recovering the optimal communication protocol as a special case of our quotient construction.

\paragraph{Cognitive science.}
Miller's ``magical number seven'' \citep{miller1956magical} and Cowan's refined capacity of $4 \pm 1$ items \citep{cowan2001magical} characterise working-memory limits in humans.
A human with working memory $m$ and attention span $T$ is formally a finite-state controller; the quotient induced by $(m, T)$ represents the environmental distinctions that matter to that human.
This connection suggests empirical predictions: humans should be behaviourally indifferent between situations in the same quotient class, testable via reaction-time or choice experiments.

\paragraph{Summary comparison.}
Table~\ref{tab:framework-comparison} highlights the key axes along which our framework differs from state bisimulation metrics and predictive state representations.

\begin{table}[H]
\centering
\caption{Comparison of abstraction frameworks along key structural dimensions.}
\label{tab:framework-comparison}
\small
\resizebox{\linewidth}{!}{\begin{tabular}{@{}lp{0.25\linewidth}p{0.25\linewidth}p{0.25\linewidth}@{}}
\toprule
\textbf{Dimension} & \textbf{State bisimulation} & \textbf{PSRs / OOMs} & \textbf{This work} \\
\midrule
Abstraction target & States & Predictive statistics & Observation histories \\
Distinguishing power & All policies / actions & Open-loop test sequences & Bounded $(m,T)$ controller family \\
Comparison object & One-step transitions & Linear predictors & Finite-horizon observation laws \\
Loop type & Open-loop (per-action) & Open-loop & Closed-loop \\
Quotient output & State partition & Low-rank representation & History partition $\to$ quotient POMDP \\
\bottomrule
\end{tabular}}
\end{table}

\section{POMDP Morphisms and Canonical Quotient Construction}
\label{app:morphisms}

\begin{definition}[POMDP morphism]
\label{def:pomdp-morphism}
A POMDP morphism $\phi \colon M_1 \to M_2$ is a surjection $\phi \colon \cH_1 \twoheadrightarrow \cH_2$ satisfying:
\begin{enumerate}[label=(M\arabic*)]
\item $\phi([\epsilon]_1) = [\epsilon]_2$ (initial-state preservation).
\item $P_{M_1}^\pi(O_{t+1:T} \mid C) = P_{M_2}^\pi(O_{t+1:T} \mid \phi(C))$ for all $\pi, C$ (observation-law compatibility).
\item $\phi(\delta_1(C, a, z)) = \delta_2(\phi(C), a, z)$ for all $C, a, z$ (transition compatibility).
\end{enumerate}
An isomorphism is a bijective morphism.
\end{definition}

Condition (M2) constrains maps \emph{between different POMDPs}, not merely within a single quotient.
Together with (M3), these correspond to a coalgebra homomorphism for the functor modelling combined transition-and-output structure \citep{rutten2000universal,panangaden2009labelled}.

\paragraph{Remarks on the approximate setting.}
In the $\varepsilon$-approximate case, different aggregation weights produce genuinely different transition kernels.
The canonical quotient (Proposition~\ref{prop:canonical-quotient}) resolves this by using uniform weights with explicit error control.
The effective dimension $d_{\mathrm{eff}} = \exp(H(b))$ can replace $|S|$ in the bounds when beliefs are concentrated.

\section{Worked Examples}
\label{app:examples}

\subsection{Tiger POMDP with \texorpdfstring{$m=1$, $T=2$}{m=1, T=2}}

Consider the Tiger problem: $S = \{\mathrm{left}, \mathrm{right}\}$, $A = \{\mathrm{listen}\}$, $O = \{L, R\}$, $Z(\mathrm{left}, \mathrm{listen}, L) = 0.85$, uniform prior $b_0 = (0.5, 0.5)$.

With $m{=}1$, $T{=}2$, the single FSC node selects listen deterministically.
After observing $L$: $b_L = (0.85, 0.15)$; after $R$: $b_R = (0.15, 0.85)$.
$P_M^\pi(O_2 \mid L) \neq P_M^\pi(O_2 \mid R)$: specifically, $P(L \mid L) = 0.745$ vs.\ $P(L \mid R) = 0.255$.
So $L \not\equiv_{1,2} R$ and the operational exact-for-family quotient has $7$ classes (no merging).

At $\varepsilon = 0.5$: $\cW_1 = \TV = |0.745 - 0.255| = 0.49 < 0.5$, so $L$ and $R$ merge.
The $0.5$-quotient has $3$ classes: $\{[\epsilon], [\{L,R\}], [O^2]\}$.

\subsection{Tiger pseudometric computation}

With $m{=}1$, $T{=}2$, full actions, a $1$-node FSC chooses a fixed action distribution $\theta = (\theta_L, \theta_{\mathrm{listen}}, \theta_R)$.
The conditional future distributions differ by $0.70 \cdot \theta_{\mathrm{listen}}$ under the discrete metric.
Maximizing sets $\theta_{\mathrm{listen}} = 1$, yielding $D_{1,2}^{\cW}(M|_{h_L}, M|_{h_R}) = 0.49$.

\subsection{Sensor grid (\texorpdfstring{$\cW_1$}{W1} vs.\ TV)}

A localization POMDP with $S = O = \{1,\ldots,5\}$, $d_O(o,o') = |o-o'|/4$.
Histories $h_1 = (o{=}2)$ and $h_2 = (o{=}3)$ produce beliefs at grid positions $2$ and~$3$.
Under the discrete metric: $\TV \approx 0.57$ (large---all mismatches weighted equally).
Under the grid metric: $\cW_1 \approx 0.14$ (small---mass shifts by one step).
At $\varepsilon = 0.2$: $\cW_1$ merges, TV cannot---demonstrating the strict advantage of $\cW_1$ on structured observation spaces.

\begin{figure}[htbp]
\centering
\begin{tikzpicture}[
  node distance=1.4cm and 2.0cm,
  state/.style={draw, circle, minimum size=0.8cm, font=\small},
  mstate/.style={draw, circle, minimum size=0.8cm, font=\small, fill=green!15},
  arr/.style={->, thick, >=stealth},
  lbl/.style={font=\scriptsize, midway}
]
\node[font=\small\bfseries] at (0, 2.5) {Exact ($\varepsilon = 0$): 7 classes};
\node[state] (e0) at (0, 1.5) {$[\epsilon]$};
\node[state] (eL) at (-1.2, 0) {$[L]$};
\node[state] (eR) at (1.2, 0) {$[R]$};
\node[state, minimum size=0.55cm, font=\tiny] (eLL) at (-2.0, -1.5) {$[LL]$};
\node[state, minimum size=0.55cm, font=\tiny] (eLR) at (-0.7, -1.5) {$[LR]$};
\node[state, minimum size=0.55cm, font=\tiny] (eRL) at (0.7, -1.5) {$[RL]$};
\node[state, minimum size=0.55cm, font=\tiny] (eRR) at (2.0, -1.5) {$[RR]$};
\draw[arr] (e0) -- node[lbl, left] {$L$} (eL);
\draw[arr] (e0) -- node[lbl, right] {$R$} (eR);
\draw[arr] (eL) -- node[lbl, left] {$L$} (eLL);
\draw[arr] (eL) -- node[lbl, right] {$R$} (eLR);
\draw[arr] (eR) -- node[lbl, left] {$L$} (eRL);
\draw[arr] (eR) -- node[lbl, right] {$R$} (eRR);

\node[font=\small\bfseries] at (6.5, 2.5) {$\varepsilon = 0.5$: 3 classes};
\node[mstate] (q0) at (6.5, 1.5) {$[\epsilon]$};
\node[mstate] (qLR) at (6.5, 0) {$[\{L,R\}]$};
\node[mstate, font=\tiny, minimum size=0.6cm] (qT) at (6.5, -1.5) {$[O^2]$};
\draw[arr] (q0) -- node[lbl, right] {$L,R$} (qLR);
\draw[arr] (qLR) -- node[lbl, right] {$L,R$} (qT);

\draw[->, very thick, dashed, gray] (3.0, 0) -- node[above, font=\scriptsize] {merge} (5.0, 0);
\end{tikzpicture}
\caption{Tiger POMDP quotient structures for $m=1$, $T=2$.  Left: operational exact-for-family quotient (7 classes).  Right: $\varepsilon=0.5$ quotient merges $L$ and $R$ (3 classes).}
\label{fig:tiger-quotient}
\end{figure}
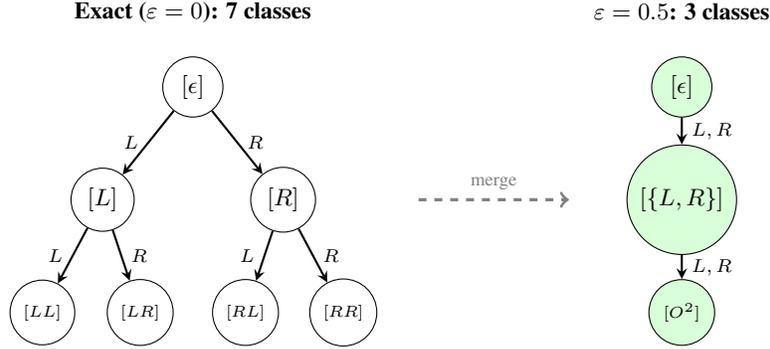

\section{Complexity Note and Open Problem}
\label{app:hardness}

The probe-exact algorithm in Section~\ref{sec:spectral-approx} is exponential in the horizon parameter $T$ for two direct reasons.
First, even before controller enumeration, the history tree contains $\sum_{t=0}^{T} |O|^t = \Theta(|O|^T)$ nodes in the worst case.
Second, the bounded-controller probe class contributes an additional factor of $|A|^m m^{m|O|}$, so probe-exact distinguishability testing ranges over exponentially many history--controller interactions as $T$ grows.

This is sufficient to justify the paper's calibration claims about practical tractability: the probe-exact quotient algorithms target bounded horizons, while longer-horizon results rely on layering, controller subsets, and sampling.
It does \emph{not} by itself establish a formal complexity lower bound for the decision problem
\[
\text{``does there exist an FSC } \pi \in \Pi_{m,T} \text{ distinguishing histories } h, h' \text{?''}
\]
and we do not claim such a lower bound.

The natural route would be a reduction from finite-horizon POMDP planning \citep{papadimitriou1987complexity}, but a complete proof would need to compile accumulated reward thresholds into a purely observational distinguishability instance and prove the required biconditional.
We therefore record this as an open problem rather than a theorem claim.

\section{Additional Experiments}
\label{app:experiments}

\subsection{Execution Protocol and Resource Notes}

All timed tables in the main paper use the serial configuration of the experiment driver (\texttt{--no-parallel}); optional process-level parallelism is available for convenience but was disabled for all reported wall-clock numbers. The theorem-first exact tables are produced from \texttt{python paper/generate\_theory\_first\_tables.py}. The operational quick suite is produced from \texttt{python -m experiments.run\_basic\_results --profile quick --seed 7 --no-parallel}, while the long-horizon package adds \texttt{--include-hierarchical-scaling --max-horizon 10}. Dependencies are the pinned \texttt{requirements.txt} stack (\texttt{numpy}, \texttt{scipy}, \texttt{pandas}, \texttt{matplotlib}); no GPU, cluster scheduler, or accelerator-specific settings are required.

The ``under 3 minutes'' statement in the main reproducibility paragraph refers only to the theorem-first tables and the operational core main-paper subset (Tables~\ref{tab:probe-family-comparison} through~\ref{tab:medium-scale}) on the reported hardware. It does \emph{not} include the heavier operational tracks in Tables~\ref{tab:meaningful-scale}, \ref{tab:m2-scaling}, \ref{tab:hierarchical-scaling}, and~\ref{tab:principal-horizon}, which are reported separately because they dominate runtime despite using the same pinned software stack and serial configuration. Table~\ref{tab:reproduction-tiers} makes that distinction explicit by separating core verification, long-horizon extension, and archival stress tracks.

The large-$|S|$ experiments are CPU-bound rather than state-tensor-memory-bound: the sampling-based meaningful-scale rows cache trajectories over the $13$-history ($|O|{=}3$) or $111$-history ($|O|{=}10$) trees listed in Table~\ref{tab:meaningful-scale}, and do not materialize a dense $|S| \times |S|$ object. For the largest reported $m{=}2$ scaling row (Network Monitoring $n{=}9$, $|S|{=}512$, $|O|{=}3$, $T{=}2$), the dominant pair-by-FSC tensor has ${13 \choose 2}\times 6{,}410 = 499{,}980$ scalar distances, so the operational memory footprint is controlled by history and controller counts rather than raw state count. On the reported Apple M3 Pro / 36\,GB RAM machine, these runs fit comfortably in memory; runtime, not memory pressure, is the practical bottleneck.

\subsection{Command-to-Table Mapping}

\paragraph{Theory-first exact tables.}
Running \texttt{python paper/generate\_theory\_first\_tables.py} writes machine-readable artifacts under \texttt{paper/generated/data/} and LaTeX tables under \texttt{paper/generated/}. Specifically, \texttt{probe\_family\_comparison.csv} and \texttt{table\_probe\_family\_comparison.tex} feed Table~\ref{tab:probe-family-comparison}; \texttt{clock\_aware\_observation\_planning.csv} and \texttt{table\_observation\_planning.tex} feed Table~\ref{tab:observation-planning}; and \texttt{clock\_aware\_latent\_planning.csv} together with \texttt{table\_latent\_planning.tex} feed Table~\ref{tab:latent-planning-exact}.

\paragraph{Operational quick suite.}
Running \texttt{python -m experiments.run\_basic\_results --profile quick --seed 7 --no-parallel --output-dir <DIR>} writes the operational CSV artifacts used for the main empirical tables. The files \texttt{computational\_profile.csv}, \texttt{sampling\_variance.csv}, and \texttt{bootstrap\_coverage.csv} populate Appendix Tables~\ref{tab:computational-profile}, \ref{tab:sampling-variance}, and~\ref{tab:bootstrap-coverage}. The same quick-suite run emits the operational benchmark artifacts from which Tables~\ref{tab:partition-agreement}, \ref{tab:medium-scale}, \ref{tab:tractability}, \ref{tab:pbvi-comparison}, \ref{tab:meaningful-scale}, and~\ref{tab:m2-scaling} are assembled. For the paper's reproduction contract, however, only the subset through Table~\ref{tab:medium-scale} belongs to the fast core-verification tier; the heavier rows are treated as archival stress tracks even though they are produced by the same driver.

\paragraph{Solver settings.}
All $\cW_1$ distances in the main pipeline are solved by \texttt{scipy.optimize.linprog(method="highs")} on normalized probability vectors with exact equality constraints for the transport marginals and nonnegative transport variables; no entropic regularization, Sinkhorn approximation, or early stopping is used. The bisimulation baseline in Appendix Table~\ref{tab:baseline-full} uses the same HiGHS linear-program backend for state-level Wasserstein subproblems, together with $10$ fixed-point iterations and discount $0.9$ before complete-linkage clustering. PBVI uses the finite-horizon implementation in \texttt{experiments.analysis.pbvi\_solve} with horizon equal to the benchmark horizon, $50$ belief points sampled by random forward simulation from $b_0$, discount $\gamma{=}1$, and seed $42$; the quotient-planning comparison applies these identical settings to the original and quotient POMDPs so that the reported speedups isolate compression rather than planner retuning.

\paragraph{Heavy operational tracks.}
The same quick-suite command writes \texttt{meaningful\_scale.csv} and \texttt{m2\_medium\_scale.csv}, which populate Tables~\ref{tab:meaningful-scale} and~\ref{tab:m2-scaling}. These rows are reported separately in the runtime discussion because they dominate the wall-clock budget of the operational pipeline. For inspection without rerunning the heavy jobs, the repository includes \texttt{artifacts/tier3/meaningful\_scale.csv}, \texttt{artifacts/tier3/m2\_medium\_scale.csv}, \texttt{artifacts/tier3/verify\_tier3\_artifacts.py}, and a short \texttt{README.md} explaining their provenance and table mapping.

\paragraph{Long-horizon package.}
Running \texttt{python -m experiments.run\_basic\_results --profile quick --include-hierarchical-scaling --max-horizon 10 --seed 7 --no-parallel --output-dir <DIR>} adds \texttt{hierarchical\_t\_scaling.csv}, \texttt{principal\_fsc\_horizon\_scaling.csv}, and the figures \texttt{fig\_runtime\_vs\_horizon\_log.png} and \texttt{fig\_layered\_bound\_vs\_empirical.png}. These populate Tables~\ref{tab:hierarchical-scaling} and~\ref{tab:principal-horizon}, and Figures~\ref{fig:runtime-horizon} and~\ref{fig:layered-checks}.

\paragraph{Reproduction tiers.}
\begin{table}[H]
\centering
\caption{Three-tier reproduction contract. Tier~I is the intended fast verification target for the paper's core claims; Tiers~II--III are optional extensions for stress-testing the operational pipeline.}
\label{tab:reproduction-tiers}
\footnotesize
\renewcommand{\arraystretch}{1.3}
\setlength{\tabcolsep}{3.5pt}
\begin{tabularx}{\linewidth}{@{} >{\raggedright\arraybackslash}p{1.15cm} >{\raggedright\arraybackslash}X >{\raggedright\arraybackslash}X >{\raggedright\arraybackslash}X @{}}
\toprule
\textbf{Tier} & \textbf{Verification target} & \textbf{Command / artifact path} & \textbf{Serial budget and role} \\
\midrule
I & Theorem tables plus operational core through Table~\ref{tab:medium-scale} & \path{python paper/generate_theory_first_tables.py} and \path{python -m experiments.run_basic_results --profile quick --seed 7 --no-parallel} & ${<}3$\,min; intended reviewer verification target. \\
II & Long-horizon extension (Tables~\ref{tab:hierarchical-scaling}, \ref{tab:principal-horizon}, Figures~\ref{fig:runtime-horizon}, \ref{fig:layered-checks}) & Add \path{--include-hierarchical-scaling --max-horizon 10} to the operational driver & ${\sim}10$\,min serial; horizon-scaling narrative only. \\
III & Heaviest stress rows (Tables~\ref{tab:meaningful-scale}, \ref{tab:m2-scaling}) & Same driver for rerunning; artifact-only: \path{python artifacts/tier3/verify_tier3_artifacts.py} & $5$--$30$\,s per row, up to 2\,h for largest $m{=}2$; archival. Verifier: ${<}1$\,s. \\
\bottomrule
\end{tabularx}
\renewcommand{\arraystretch}{1.0}
\end{table}

\subsection{Archival Operational Stress Tracks}
\label{sec:long-horizon-scaling}

This subsection collects the heavier operational rows that are useful for understanding the computational behaviour of $\Qop$ but are intentionally not part of the paper's core validation target. They remain relevant as appendix-only archival evidence because they stress the same pipeline under longer horizons, larger state spaces, or richer probe families.

\paragraph{Long-horizon scaling via layering and calibrated subsets.}
To address horizon scalability directly, we evaluate the compositional construction of Theorem~\ref{thm:layered-composition} on Tiger with long horizons. For $m{=}1$, we compare monolithic computation against layered segments ($\tau{=}4$ for $T\le 8$, $\tau{=}5$ for $T{=}10$). For $m{=}2$, we calibrate a controller subset at $T{=}6$ and reuse it at larger horizons as an empirical calibrated-subset heuristic without a full $\delta_S$ certificate.

\begin{table}[H]
\centering
\caption{Operational direct vs.\ layered runtime on Tiger full-actions ($m{=}1$, $\varepsilon{=}0.5$, deterministic-stationary probe family).}
\label{tab:hierarchical-scaling}
\small
\begin{tabular}{@{}cccccc@{}}
\toprule
$T$ & Method & Segment $\tau$ & Layers $L$ & Runtime (s) & Histories processed \\
\midrule
4  & Direct   & -- & 1 & 0.05  & 31 \\
4  & Layered  & 4  & 1 & 0.05  & 31 \\
6  & Direct   & -- & 1 & 0.96  & 127 \\
6  & Layered  & 4  & 2 & 0.05  & 38 \\
8  & Direct   & -- & 1 & 16.42 & 511 \\
8  & Layered  & 4  & 2 & 0.10  & 62 \\
10 & Direct   & -- & 1 & 287.36 & 2,047 \\
10 & Layered  & 5  & 2 & 0.44  & 126 \\
\bottomrule
\end{tabular}
\end{table}

\begin{table}[H]
\centering
\caption{Operational exact-for-family and empirical calibrated-subset horizon scaling on Tiger full-actions ($m{=}2$, $\varepsilon{=}0.5$). The calibrated-subset rows reuse a $k{=}10$ subset chosen at $T{=}6$ and do not carry a full $\delta_S$ certificate.}
\label{tab:principal-horizon}
\small
\begin{tabular}{@{}cccccc@{}}
\toprule
$T$ & Method & FSCs used & Runtime (s) & Classes & ARI vs.\ full \\
\midrule
4 & Full op.-exact & 147 & 2.53 & 10 & 1.000 \\
5 & Full op.-exact & 147 & 10.95 & 16 & 1.000 \\
6 & Full op.-exact & 147 & 46.11 & 30 & 1.000 \\
7 & Full op.-exact & 147 & 194.27 & 50 & 1.000 \\
\midrule
7 & Calibrated subset (empirical) & 10 & 12.96 & 50 & 0.9997 \\
8 & Calibrated subset (empirical) & 10 & 53.69 & 77 & -- \\
\bottomrule
\end{tabular}
\end{table}

The layered curve in Table~\ref{tab:hierarchical-scaling} demonstrates the intended complexity shift: monolithic runtime grows with the full history tree, while layered runtime tracks short segments. At $T{=}10$, layering reduces runtime by roughly three orders of magnitude while preserving the operational error-accumulation narrative from Theorem~\ref{thm:layered-composition}. For $m{=}2$, Table~\ref{tab:principal-horizon} shows that a calibrated subset ($k{=}10$, chosen from a $T{=}6$ calibration run) preserves near-operational-exact partition structure at $T{=}7$ (ARI $=0.9997$) and enables extension to $T{=}8$ without full FSC enumeration. These long-horizon calibrated-subset runs are empirical scalability results: unlike Table~\ref{tab:partition-agreement}, we do not report a full probe-gap certificate $\delta_S$ for them, and we do not present them as theorem-level guarantees.

\begin{figure}[htbp]
\centering
\includegraphics[width=0.9\linewidth]{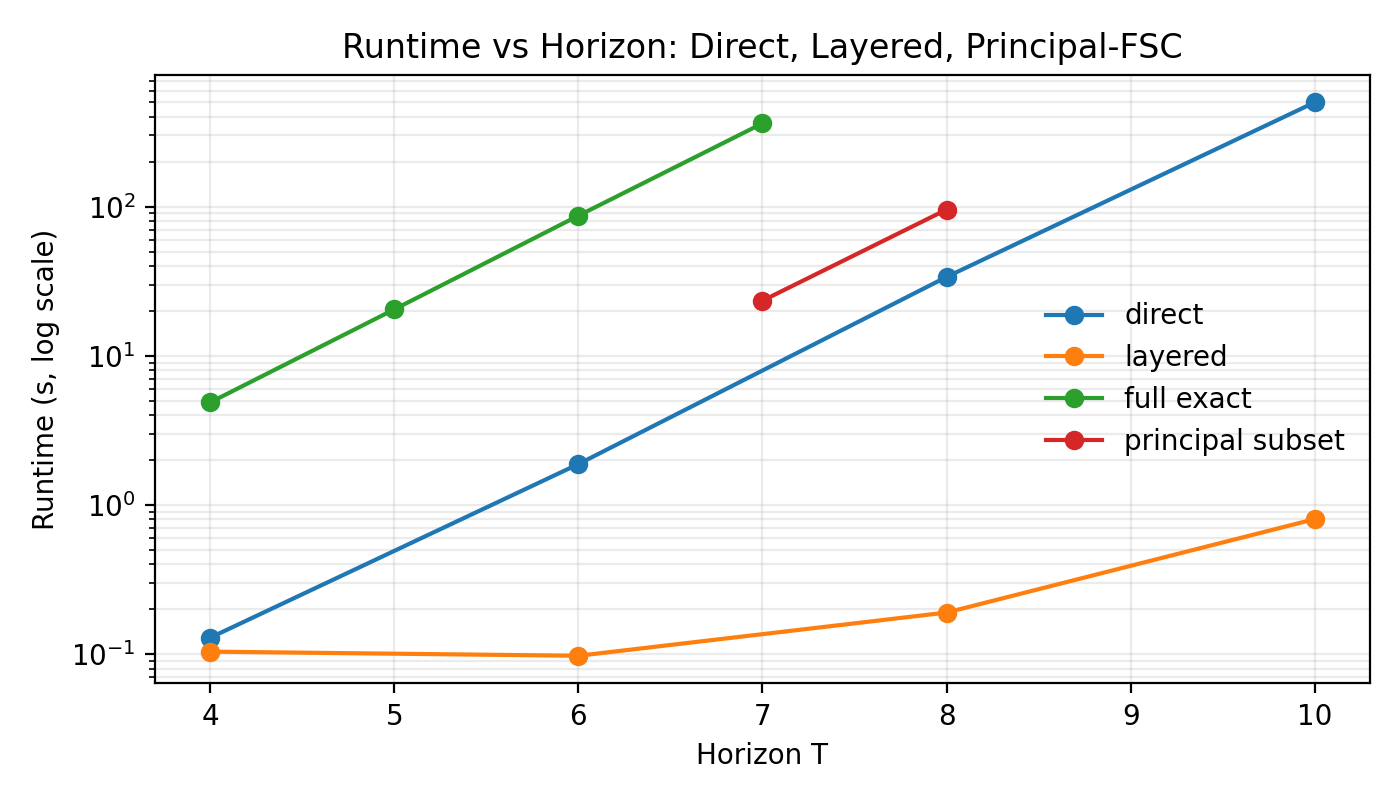}
\caption{Operational runtime scaling with horizon (log scale): monolithic, layered, and empirical calibrated-subset tracks.}
\label{fig:runtime-horizon}
\end{figure}

\begin{figure}[htbp]
\centering
\includegraphics[width=0.9\linewidth]{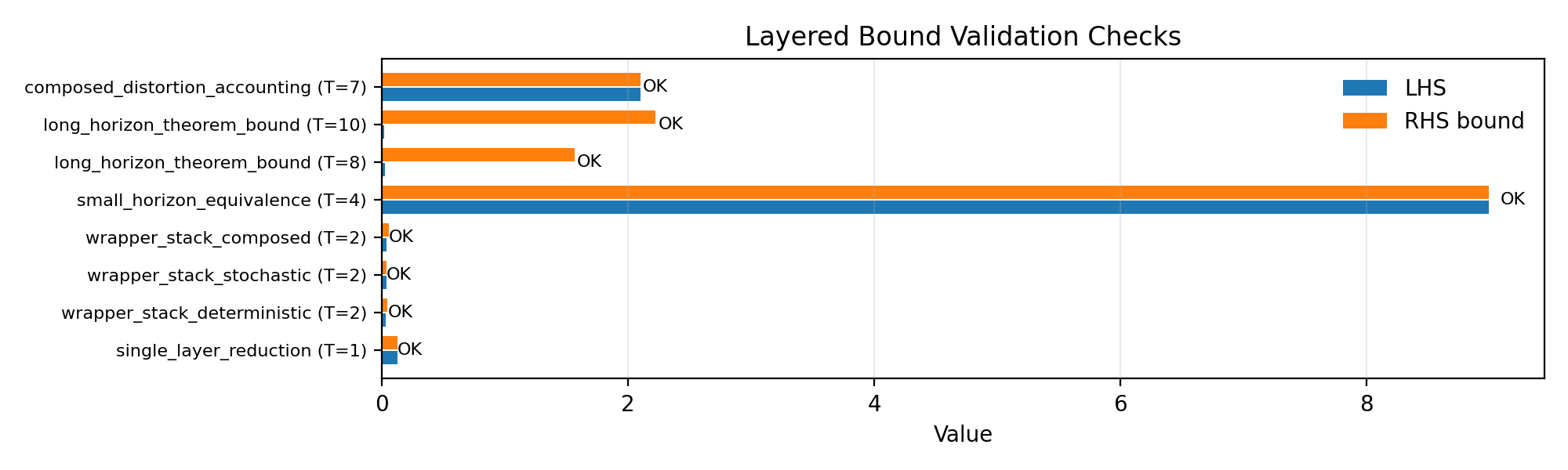}
\caption{Layered bound validation (Theorem~\ref{thm:layered-composition}). Each bar compares the empirical LHS distortion (blue) against the theoretical RHS bound (orange) for eight test cases spanning single-layer reduction, wrapper contraction (deterministic and stochastic), and long-horizon theorem bounds ($T \in \{4, 8, 10\}$). All checks satisfy LHS~$\leq$~RHS.}
\label{fig:layered-checks}
\end{figure}

\paragraph{Tractability summary.}
Table~\ref{tab:tractability} summarizes how the three scalability dimensions are addressed. The history dimension $|O|^T$ is addressed by segmenting horizon into short layers (Theorem~\ref{thm:layered-composition}, Proposition~\ref{prop:layered-complexity}), demonstrated up to $T{=}10$. The FSC dimension is reduced by greedy selection: on the operational exact-for-family benchmarks, $k{=}5$ recovers the full probe envelope after \emph{a posteriori} certification ($\delta_S{=}0$), while on long-horizon Tiger a calibrated $k{=}10$ subset provides a near-operational-exact empirical surrogate without a full probe-gap certificate. The state dimension is decoupled by sampling ($0.30$\,s at $|S|{=}100$, Table~\ref{tab:medium-scale}). Core benchmark suites remain lightweight and are the intended serial reproduction target; the stress tracks in this appendix are optional single-core extensions.

\begin{table}[H]
\centering
\caption{Scalability: each complexity dimension and how it is addressed, separating operational exact-for-family benchmark certification from empirical long-horizon calibrated subsets.}
\label{tab:tractability}
\scriptsize
\begin{tabular}{@{}lllp{0.38\linewidth}@{}}
\toprule
Dimension & Growth & Technique & Demonstrated \\
\midrule
States $|S|$ & $O(|S|^2)$ per belief & Sampling-based $\cW_1$ & $|S|{=}4{,}096$ (sampling), $|S|{=}100$ (sampling) \\
FSCs $|\cF|$ & $|A|^m m^{m|O|}$ & Greedy FSC subsets & \shortstack[l]{Certified: $6{,}405 \to 5$, $147 \to 5$\\ $m{=}2$ scaling: $5{,}193$ FSCs on RockSample\\ Empirical: $147 \to 10$ calibrated subset} \\
Histories $|O|^T$ & Exp.\ in $T$ & Layered horizon decomposition & Direct to $T{=}10$; layered to $T{=}20$ \\
Observations $|O|$ & $|O|^T$ histories & $\delta_O$-coarsening (Prop.~\ref{prop:obs-resolution}) & GridWorld $4 \to 2$ obs \\
\bottomrule
\end{tabular}
\end{table}

\paragraph{Large-state and high-memory stress rows.}
Table~\ref{tab:meaningful-scale} records appendix-level archival rows at scales exceeding $|S|{=}1000$ or effective horizon $T{=}20$. Sampling-based $\cW_1$ estimation decouples runtime from $|S|$, enabling direct computation on Network Monitoring with $|S|{=}4{,}096$ states. The layered construction (Theorem~\ref{thm:layered-composition}) extends the effective horizon to $T{=}20$ by composing five $\tau{=}4$ segments, each computed independently via sampling. The observation alphabet $|O|$ remains the true computational bottleneck: at $|O|{=}10$, the history tree has $111$ nodes at depth $2$ versus $13$ at $|O|{=}3$. Bootstrap CI widths at this scale are ${\leq}0.17$ (Table~\ref{tab:meaningful-scale}), comparable to medium-scale benchmarks (Table~\ref{tab:medium-scale}), confirming that $500$ trajectories provide reliable distance estimates even at $|S|{=}4{,}096$.

\begin{table}[H]
\centering
\caption{Meaningful-scale experiments: sampling-based and layered quotient construction ($m{=}1$, $500$ trajectories). The 95\% CI column reports the bootstrap confidence interval on $\max_{h,h'} \cW_1$ ($1{,}000$ resamples); for the layered track, the CI is on the segment-level ($\tau{=}4$) cache.}
\label{tab:meaningful-scale}
\small
\resizebox{\linewidth}{!}{\begin{tabular}{@{}llcccccccc@{}}
\toprule
Track & Benchmark & $|S|$ & $|O|$ & $T$ & Method & $\varepsilon{=}0$ & $\varepsilon{=}0.3$ & Runtime (s) & 95\% CI on $\max \cW_1$ \\
\midrule
Scale-up & Net.\ Mon.\ ($n{=}12$) & 4{,}096 & 3 & 2 & Sampling & 13 & 5 & $\sim 5$ & $[0.31,\, 0.43]$ \\
Large $|O|$ & Random & 2{,}000 & 10 & 2 & Sampling & 111 & 10 & $\sim 30$ & $[0.10,\, 0.20]$ \\
Long $T$ & Net.\ Mon.\ ($n{=}10$) & 1{,}024 & 3 & 20 & Layered & 13 & 5 & $\sim 25$ & $[0.67,\, 0.84]$ \\
\bottomrule
\end{tabular}}
\end{table}

\paragraph{Higher-memory probe-family scaling.}
Proposition~\ref{prop:probe-class-coarsening} predicts that a richer probe family produces a finer partition. Table~\ref{tab:m2-scaling} illustrates this beyond the operational exact-for-family benchmarks of Table~\ref{tab:partition-agreement}, using sampling-based $\cW_1$ estimation. On RockSample$(4,4)$ ($|S|{=}257$, $9$ actions, $3$ observations), the $m{=}2$ probe family ($5{,}193$ FSCs) maintains $5$ classes at $\varepsilon{=}0.5$, while the $m{=}1$ family ($9$ FSCs) collapses to $3$. On Network Monitoring ($n{=}4$, $|S|{=}16$) at $T{=}3$, $m{=}2$ ($1{,}605$ FSCs) sustains $14$ classes through $\varepsilon{=}0.3$, while $m{=}1$ ($5$ FSCs) drops to $6$. On Network Monitoring ($n{=}9$, $|S|{=}512$), $m{=}2$ ($6{,}410$ FSCs) preserves $5$ classes at $\varepsilon{=}0.3$ while $m{=}1$ ($10$ FSCs) drops to $4$. In all three benchmarks, the max pairwise $\cW_1$ increases under $m{=}2$, confirming that additional controller memory exposes finer-grained behavioural differences (Proposition~\ref{prop:probe-class-coarsening}). These rows are reported as archival stress tests rather than as part of the core theorem-validation suite.

\begin{table}[H]
\centering
\caption{Probe-family scaling: $m{=}1$ vs.\ $m{=}2$ quotient class counts at medium scale (sampling-based, $500$ trajectories). The 95\% CI column reports the bootstrap confidence interval on $\max_{h,h'} \cW_1$ ($1{,}000$ resamples).}
\label{tab:m2-scaling}
\scriptsize
\resizebox{\linewidth}{!}{\begin{tabular}{@{}llccccccccc@{}}
\toprule
Benchmark & $|S|$ & $m$ & $T$ & FSCs & $\varepsilon{=}0$ & $\varepsilon{=}0.1$ & $\varepsilon{=}0.3$ & $\varepsilon{=}0.5$ & Runtime (s) & 95\% CI on $\max \cW_1$ \\
\midrule
RockSample$(4,4)$ & 257 & 1 & 2 & 9 & 5 & 5 & 4 & 3 & 0.5 & $[0.35,\, 0.47]$ \\
RockSample$(4,4)$ & 257 & 2 & 2 & 5{,}193 & 5 & 5 & 5 & 5 & 321 & $[1.00,\, 1.00]$ \\
\midrule
Net.\ Mon.\ ($n{=}4$) & 16 & 1 & 3 & 5 & 14 & 9 & 6 & 5 & 1.1 & $[0.48,\, 0.66]$ \\
Net.\ Mon.\ ($n{=}4$) & 16 & 2 & 3 & 1{,}605 & 14 & 14 & 14 & 6 & 353 & $[0.67,\, 0.84]$ \\
\midrule
Net.\ Mon.\ ($n{=}9$) & 512 & 1 & 2 & 10 & 5 & 5 & 4 & 3 & 11 & $[0.28,\, 0.40]$ \\
Net.\ Mon.\ ($n{=}9$) & 512 & 2 & 2 & 6{,}410 & 5 & 5 & 5 & 3 & 6{,}802 & $[0.39,\, 0.49]$ \\
\bottomrule
\end{tabular}}
\end{table}

\subsection{Tiger Reproduction and GridWorld Capacity Sweep}

With $m{=}1$, $T{=}2$ on the listen-only Tiger POMDP, the experiment reproduces $\cW_1(P(\cdot \mid h_L), P(\cdot \mid h_R)) = 0.49$, $7$ operational exact classes at $\varepsilon = 0$, and $3$ classes at $\varepsilon = 0.5$.

The GridWorld capacity sweep below shows quotient class counts across memory bounds and thresholds.

\begin{table}[H]
\centering
\caption{Quotient class counts for GridWorld $3 \times 3$ ($|S|=9$, $T=2$). Total histories: $21$.}
\label{tab:capacity-sweep-gridworld}
\small
\begin{tabular}{@{}ccccc@{}}
\toprule
$m$ & $\varepsilon$ & Classes & Compression & Policies \\
\midrule
1 & 0.0 & 6 & 0.29 & 5 \\
1 & 0.2 & 5 & 0.24 & 5 \\
1 & 0.35 & 3 & 0.14 & 5 \\
1 & 0.5 & 3 & 0.14 & 5 \\
\midrule
2 & 0.0 & 6 & 0.29 & 6{,}405 \\
2 & 0.2 & 6 & 0.29 & 6{,}405 \\
2 & 0.35 & 6 & 0.29 & 6{,}405 \\
2 & 0.5 & 3 & 0.14 & 6{,}405 \\
\bottomrule
\end{tabular}
\end{table}

\subsection{Value-Function Error Bounds (Full)}

\begin{table}[H]
\centering
\caption{Value-function error bounds across reward scales. Panel~A: Tiger full actions with synthetic $L_R{=}1$ reward (tight bounds). Panel~B: Tiger with standard reward $L_R{=}110$ (vacuous---bound exceeds reward range). Panel~C: GridWorld $3{\times}3$ goal reward $L_R \approx 2$ (moderate---informative bound at small $\varepsilon$). All use $m{=}1$, $T{=}2$. The column $L_R T \varepsilon$ is the provable bound from Theorem~\ref{thm:value-bound}; the column $L_R T D$ reports the tighter empirical proxy $L_R \cdot T \cdot D_{m,T}^{\cW}$, which is always valid but follows from the partition construction only when $D_{m,T}^{\cW} \leq \varepsilon$ (observed in all tested configurations at $\varepsilon > 0$).}
\label{tab:value-bounds}
\small
\begin{tabular}{@{}cccccc@{}}
\toprule
$\varepsilon$ & Empirical error & $D_{m,T}^{\cW}$ & $L_R T \varepsilon$ & $L_R T D$ & Prop~\ref{prop:canonical-quotient} \\
\midrule
\multicolumn{6}{c}{\emph{Panel A: Tiger, synthetic $L_R{=}1$}} \\
\midrule
0.0 & $\approx 0$ & 0.00 & 0.00 & 0.00 & 0.0 \\
0.1 & $\approx 0$ & 0.00 & 0.20 & 0.00 & 3.4 \\
0.3 & $\approx 0$ & 0.00 & 0.60 & 0.00 & 10.2 \\
0.5 & 0.12 & 0.24 & 1.00 & 0.49 & 17.0 \\
\midrule
\multicolumn{6}{c}{\emph{Panel B: Tiger, standard reward $L_R{=}110$ (vacuous)}} \\
\midrule
0.0 & 0.00 & 0.00 & 0.00 & 0.00 & 0.0 \\
0.1 & 0.00 & 0.00 & 22.0 & 0.00 & 374.0 \\
0.3 & 0.00 & 0.00 & 66.0 & 0.00 & 1122.0 \\
0.5 & 2.89 & 0.24 & 110.0 & 53.33 & 1870.0 \\
\midrule
\multicolumn{6}{c}{\emph{Panel C: GridWorld $3{\times}3$, goal reward $L_R \approx 2$}} \\
\midrule
0.0 & 0.00 & 0.00 & 0.00 & 0.00 & 0.0 \\
0.1 & $\approx 0$ & $\approx 0$ & $\approx 0.4$ & $\approx 0$ & 5.8 \\
0.3 & $\approx 0$ & $\approx 0$ & $\approx 1.2$ & $\approx 0$ & 17.3 \\
0.5 & $\approx 0$ & $\approx 0$ & $\approx 2.0$ & $\approx 0$ & 28.8 \\
\bottomrule
\end{tabular}
\end{table}

\subsection{Horizon Gap Analysis}

As $T$ increases from $2$ to $10$ on Tiger ($\varepsilon = 0.5$), empirical value error remains small while the pseudometric bound stays informative.
The canonical worst-case aggregation bound grows from $17$ (at $T{=}2$) to $405$ (at $T{=}10$), confirming its conservative worst-case nature.
The non-monotone (sawtooth) pattern arises because the set of merged history classes changes discretely at each $\varepsilon$ threshold, so per-class distortions can drop when a coarser partition happens to align better with the reward structure.

\subsection{Operational Probe Family Sanity Check}

Across all $147$ deterministic FSCs ($m{=}2$, Tiger), the maximum $\cW_1$ between $h_L, h_R$ is $0.49$.
Among $200$ sampled stochastic FSCs ($40$ per seed, seeds $\{7, 42, 123, 256, 999\}$), the per-seed maximum $\cW_1$ is $0.38 \pm 0.07$ (mean $\pm$ std, range $0.31$--$0.47$)---no sampled stochastic stationary controller exceeds the deterministic maximum of $0.49$ on this benchmark.
This is a benchmark-specific sanity check for the operational stationary probe family only: Proposition~\ref{prop:stationary-counterexample} gives a separate clock-aware exact POMDP where stochastic stationary FSCs strictly outperform deterministic stationary ones.

\subsection{Observation Noise Sensitivity}

\begin{table}[H]
\centering
\caption{Noise sensitivity (Tiger, $m=1$, $T=2$). Range over accuracy $\in \{0.70, 0.75, 0.80, 0.85, 0.90, 0.95\}$.}
\small
\begin{tabular}{@{}ccccc@{}}
\toprule
$\varepsilon$ & Min cl. & Max cl. & Min val.\ err. & Max val.\ err. \\
\midrule
0.0 & 7 & 7 & 0.000 & 0.000 \\
0.2 & 5 & 7 & 0.000 & 0.000 \\
0.4 & 3 & 7 & 0.000 & 0.108 \\
0.6 & 3 & 3 & 0.058 & 0.145 \\
\bottomrule
\end{tabular}
\end{table}

\subsection{Full Baseline Comparison}

\begin{table}[H]
\centering
\caption{Full baseline comparison ($m=1$, $T=2$), both benchmarks.}
\label{tab:baseline-full}
\small
\begin{tabular}{@{}llccccc@{}}
\toprule
Benchmark & Method & $\varepsilon$ & Classes & Val.\ err. & $D_{m,T}^{\cW}$ & Time (s) \\
\midrule
\multicolumn{7}{c}{\emph{Tiger full actions}} \\
\midrule
& $\varepsilon$-quotient & 0.3 & 4 & 0.000 & 0.000 & $\ast$ \\
& Truncation ($d{=}1$) & 0.3 & 5 & 0.000 & 0.000 & $<0.01$ \\
& Random partition & 0.3 & 3.7 & 0.041 & 0.082 & $<0.01$ \\
& Belief-distance & 0.3 & 6 & 0.000 & 0.000 & $<0.01$ \\
& Bisimulation & 0.3 & 6 & 0.000 & 0.000 & 0.05 \\
\midrule
& $\varepsilon$-quotient & 0.5 & 3 & 0.122 & 0.245 & $\ast$ \\
& Truncation ($d{=}1$) & 0.5 & 5 & 0.000 & 0.000 & $<0.01$ \\
& Random partition & 0.5 & 3.0 & 0.122 & 0.245 & $<0.01$ \\
& Belief-distance & 0.5 & 5 & 0.000 & 0.000 & $<0.01$ \\
& Bisimulation & 0.5 & 5 & 0.000 & 0.000 & 0.05 \\
\midrule
\multicolumn{7}{c}{\emph{GridWorld $3 \times 3$}} \\
\midrule
& $\varepsilon$-quotient & 0.3 & 4 & 0.099 & 0.092 & $\ast$ \\
& Truncation ($d{=}1$) & 0.3 & 9 & 0.000 & 0.000 & $<0.01$ \\
& Random partition & 0.3 & 3.7 & 0.077 & 0.163 & $<0.01$ \\
& Belief-distance & 0.3 & 9 & 0.000 & 0.052 & 0.02 \\
& Bisimulation & 0.3 & 3 & 0.104 & 0.166 & 0.10 \\
\midrule
& $\varepsilon$-quotient & 0.5 & 3 & 0.104 & 0.166 & $\ast$ \\
& Truncation ($d{=}1$) & 0.5 & 9 & 0.000 & 0.000 & $<0.01$ \\
& Random partition & 0.5 & 3.0 & 0.104 & 0.166 & $<0.01$ \\
& Belief-distance & 0.5 & 6 & 0.099 & 0.111 & 0.02 \\
& Bisimulation & 0.5 & 3 & 0.104 & 0.166 & 0.10 \\
\bottomrule
\multicolumn{7}{l}{\footnotesize $\ast$ Cache construction dominated; per-$\varepsilon$ partition is $<0.01$\,s.}
\end{tabular}
\end{table}

\subsection{Ablation Studies}

\begin{table}[H]
\centering
\caption{Ablation studies (Tiger full actions, $T=2$).}
\small
\begin{tabular}{@{}llcc@{}}
\toprule
Ablation & Variant & Classes ($\varepsilon{=}0.3$) & Note \\
\midrule
Metric & $\cW_1$ (discrete $d_O$) & 7 & Default \\
        & TV-equivalent & 7 & Same on Tiger (binary $O$) \\
\midrule
Memory & $m=1$ (3 FSCs) & 7 & Coarser \\
       & $m=2$ (147 FSCs) & 14 & Finer, 0 value error \\
\midrule
Approx. & Full enum.\ (147 FSCs) & 14 & Exact \\
         & Greedy $k{=}5$ & 14 & ARI $= 1.0$, $2.1\times$ faster \\
\bottomrule
\end{tabular}
\end{table}

\subsection{Low-Rank Analysis}

\begin{table}[H]
\centering
\caption{Effective rank of the FSC distinguishing matrix.}
\small
\begin{tabular}{@{}llccccc@{}}
\toprule
Benchmark & $m$ & Depth & FSCs & Rank (90\%) & Rank (95\%) & Rank (99\%) \\
\midrule
Tiger ($T{=}4$) & 1 & 3 & 3 & 1 & 1 & 1 \\
Tiger ($T{=}4$) & 2 & 3 & 147 & 3 & 4 & 7 \\
Grid $3{\times}3$ ($T{=}2$) & 1 & 1 & 5 & 1 & 2 & 4 \\
Grid $3{\times}3$ ($T{=}2$) & 2 & 1 & 6{,}405 & 2 & 3 & 5 \\
\bottomrule
\end{tabular}
\end{table}

\subsection{Larger-Scale Results}

\begin{table}[H]
\centering
\caption{Quotient class counts for larger POMDPs ($m=1$). Total histories: $21$ ($T{=}2$), $85$ ($T{=}3$).}
\small
\begin{tabular}{@{}lccccccc@{}}
\toprule
Benchmark & $T$ & $\varepsilon{=}0$ & $0.1$ & $0.2$ & $0.3$ & $0.5$ & $0.6$ \\
\midrule
Grid $5{\times}5$ & 2 & 6 & 6 & 5 & 4 & 3 & 3 \\
Grid $5{\times}5$ & 3 & 22 & 16 & 12 & 9 & 6 & 5 \\
Random $|S|{=}20$ & 2 & 6 & 3 & 3 & 3 & 3 & 3 \\
Random $|S|{=}20$ & 3 & 22 & 4 & 4 & 4 & 4 & 4 \\
\bottomrule
\end{tabular}
\end{table}

\subsection{Scaling and Timing}

\begin{table}[H]
\centering
\caption{Scaling: wall-clock time vs.\ state-space size (GridWorld, $T{=}2$, $m{=}1$, $\varepsilon{=}0.3$).}
\small
\begin{tabular}{@{}ccccc@{}}
\toprule
Grid & $|S|$ & Classes & Compression & Time (s) \\
\midrule
$3 \times 3$ & 9 & 4 & $0.19$ & 0.03 \\
$4 \times 4$ & 16 & 4 & $0.19$ & 0.04 \\
$5 \times 5$ & 25 & 4 & $0.19$ & 0.05 \\
$6 \times 6$ & 36 & 4 & $0.19$ & 0.07 \\
\bottomrule
\end{tabular}
\end{table}

\begin{table}[H]
\centering
\caption{Wall-clock time per experiment (quick profile, single CPU core).}
\small
\begin{tabular}{@{}lc@{}}
\toprule
Experiment & Time (s) \\
\midrule
Tiger reproduction & $<0.1$ \\
Capacity sweep & $0.5$--$1.0$ \\
Value loss bounds & $0.3$--$0.5$ \\
Metric sensitivity & $1.0$--$2.0$ \\
Baseline comparison & $0.5$--$1.0$ \\
Low-rank analysis & $0.5$--$2.0$ \\
\midrule
\textbf{Total} & $\mathbf{<20}$ \\
\bottomrule
\end{tabular}
\end{table}

\subsection{Computational Bottleneck Breakdown}

\begin{table}[H]
\centering
\caption{Pipeline stage breakdown (single CPU core, $m{=}1$, $T{=}2$). Distance cache ($\cW_1$ LP solves) dominates in all non-trivial cases.}
\label{tab:computational-profile}
\small
\begin{tabular}{@{}lcccccc@{}}
\toprule
Benchmark & $|S|$ & $m$ & FSCs & Enum (s) & Cache (s) & Cache \% \\
\midrule
Tiger & 2 & 1 & 3 & ${<}0.01$ & $0.004$ & $99.0$ \\
GridWorld $3{\times}3$ & 9 & 1 & 5 & ${<}0.01$ & $0.034$ & $99.8$ \\
RockSample(4,4) & 257 & 1 & 9 & ${<}0.01$ & $0.472$ & $100.0$ \\
\bottomrule
\end{tabular}
\end{table}

\subsection{Sampling Convergence and Variance}

\begin{table}[H]
\centering
\caption{Operational sampling convergence on GridWorld $3{\times}3$ ($m{=}1$, $T{=}2$, $5$ replications per sample count, averaged over $\varepsilon \in \{0, 0.3, 0.5\}$). Minimum ARI stabilizes by $100$ trajectories.}
\label{tab:convergence}
\small
\begin{tabular}{@{}cccc@{}}
\toprule
Trajectories & Mean ARI & Min ARI & Replications \\
\midrule
50 & 0.991 & 0.961 & 5 \\
100 & 0.994 & 0.971 & 5 \\
250 & 0.994 & 0.971 & 5 \\
500 & 0.998 & 0.971 & 5 \\
1{,}000 & 0.994 & 0.971 & 5 \\
\bottomrule
\end{tabular}
\end{table}

\begin{table}[H]
\centering
\caption{Partition stability: class counts across $10$ independent random seeds (GridWorld $10{\times}10$, $|S|{=}100$, $m{=}1$, $T{=}2$, $500$ trajectories).}
\label{tab:sampling-variance}
\small
\begin{tabular}{@{}ccccc@{}}
\toprule
$\varepsilon$ & Mean classes & Std & Min & Max \\
\midrule
0.0 & 6.0 & 0.0 & 6 & 6 \\
0.1 & 6.0 & 0.0 & 6 & 6 \\
0.25 & 4.0 & 0.0 & 4 & 4 \\
0.45 & 3.0 & 0.0 & 3 & 3 \\
\bottomrule
\end{tabular}
\end{table}

The zero variance at $500$ trajectories indicates that the sampling noise in the estimated $\cW_1$ distances is well below the complete-linkage clustering threshold at all tested $\varepsilon$ values. Even at $50$ trajectories, mean ARI exceeds $0.99$, and the minimum ARI stabilizes at $0.971$ by $100$ trajectories, confirming that the partition is robust to sampling noise.

\begin{remark}[Sample complexity of empirical $\cW_1$]
\label{rem:sample-complexity}
On a finite support of size $K$, the empirical $\cW_1$ distance converges at rate $O(n^{-1/2})$ in expectation~\citep{fournier2015rate}, where $n$ is the number of samples per distribution and $K = |O|^{T-t}$ is the number of possible future observation sequences at depth $t$.
For the bounded-agent regimes targeted here ($|O| \leq 7$, $T \leq 4$), $K$ remains modest (at most $|O|^T = 2{,}401$), and $500$ trajectories per (history, policy) pair suffice empirically (Table~\ref{tab:convergence}).
The max-over-policies aggregation introduces a union-bound factor of $|\Pi|$; in practice, the greedy-selected subsets ($|\Pi_S| \leq 10$) keep this overhead negligible.
\end{remark}

\subsection{Bootstrap CI Coverage Validation}

\begin{table}[H]
\centering
\caption{Empirical coverage of 95\% bootstrap CIs on $\max_{h,h'} \cW_1$ (GridWorld $3{\times}3$, $m{=}1$, $T{=}2$, $200$ replications).}
\label{tab:bootstrap-coverage}
\small
\begin{tabular}{@{}ccc@{}}
\toprule
Trajectories & Coverage & Mean CI width \\
\midrule
100 & $97.0\%$  & $0.199$ \\
250 & $96.5\%$  & $0.127$ \\
500 & $96.0\%$  & $0.090$ \\
\bottomrule
\end{tabular}
\end{table}

Coverage is within $2\%$ of the nominal $95\%$ level at all sample sizes, confirming reliable uncertainty quantification for the max-distance statistic.

\subsection{Additional Benchmark Results}

\begin{table}[H]
\centering
\caption{Quotient class counts on additional benchmarks ($m{=}1$, $T{=}2$). Total histories: $13$.}
\label{tab:new-benchmarks}
\small
\begin{tabular}{@{}lcccccc@{}}
\toprule
Benchmark & $|S|$ & $|A|$ & $|O|$ & $\varepsilon{=}0$ & $\varepsilon{=}0.3$ & $\varepsilon{=}0.5$ \\
\midrule
Hallway ($L{=}10$) & 10 & 3 & 3 & 5 & 5 & 3 \\
Hallway ($L{=}20$) & 20 & 3 & 3 & 5 & 5 & 3 \\
Network ($n{=}4$) & 16 & 5 & 3 & 5 & 4 & 3 \\
Network ($n{=}5$) & 32 & 6 & 3 & 5 & 4 & 3 \\
\bottomrule
\end{tabular}
\end{table}

Both benchmarks exhibit the expected monotonicity: class count is non-increasing in $\varepsilon$.
Hallway's chain topology produces identical compression at $L{=}10$ and $L{=}20$ (the cyclically assigned landmarks create equivalent observation structure at both scales).
Network Monitoring shows slightly finer resolution (4 classes at $\varepsilon{=}0.3$) due to the probing actions creating action-dependent observation structure.

\subsection{Downstream Planning Impact}

\begin{table}[H]
\centering
\caption{Planning on original vs.\ quotient: value gap under exhaustive policy search ($T{=}2$).}
\label{tab:planning-speedup}
\small
\begin{tabular}{@{}llccccc@{}}
\toprule
Benchmark & $m$ & $\varepsilon$ & Policies & Classes & $V^*_{\text{orig}}$ & Value gap \\
\midrule
Tiger & 1 & 0.0 & 3 & 4 & 0.373 & 0.000 \\
Tiger & 1 & 0.3 & 3 & 4 & 0.373 & 0.000 \\
Tiger & 1 & 0.5 & 3 & 3 & 0.373 & 0.000 \\
Tiger & 2 & 0.3 & 147 & 4 & 0.373 & 0.000 \\
Tiger & 2 & 0.5 & 147 & 3 & 0.373 & 0.000 \\
\midrule
Hallway & 1 & 0.0 & 3 & 5 & 0.262 & 0.000 \\
Hallway & 1 & 0.3 & 3 & 5 & 0.262 & 0.000 \\
Hallway & 1 & 0.5 & 3 & 3 & 0.262 & 0.082 \\
\bottomrule
\end{tabular}
\end{table}

The value gap (difference between the original-optimal and quotient-optimal policy values, both evaluated on the original POMDP) is zero at $\varepsilon \leq 0.3$ for all settings.
At $\varepsilon{=}0.5$ on Hallway, the quotient selects a different policy with value gap $0.082$, consistent with the bound $L_R T \varepsilon = 1.0$.
The compression from $5$ classes to $3$ at $\varepsilon{=}0.5$ would reduce the effective planning state space by $40\%$ for planners that operate on history equivalence classes.

\subsection{Initial-Belief Sensitivity}

\begin{table}[H]
\centering
\caption{Partition stability under initial-belief perturbation (Tiger full actions, $m{=}1$, $T{=}2$). ARI measured against the uniform-belief ($b_0 = [0.5, 0.5]$) partition.}
\label{tab:belief-sensitivity}
\small
\begin{tabular}{@{}cccc@{}}
\toprule
$b_0(s_L)$ & $\varepsilon{=}0$ (classes / ARI) & $\varepsilon{=}0.3$ (classes / ARI) & $\varepsilon{=}0.5$ (classes / ARI) \\
\midrule
0.5 & 4 / 1.00 & 4 / 1.00 & 3 / 1.00 \\
0.3 & 4 / 1.00 & 4 / 1.00 & 3 / 1.00 \\
0.1 & 4 / 1.00 & 3 / 0.89 & 3 / 1.00 \\
0.7 & 4 / 1.00 & 4 / 1.00 & 3 / 1.00 \\
0.9 & 4 / 1.00 & 3 / 0.89 & 3 / 1.00 \\
\bottomrule
\end{tabular}
\end{table}

The partition is stable under moderate perturbations of $b_0$: ARI $= 1.0$ for $b_0 \in [0.3, 0.7]$ at all $\varepsilon$.
Only extreme beliefs ($b_0(s_L) \in \{0.1, 0.9\}$) at intermediate $\varepsilon{=}0.3$ produce a different partition (3 vs.\ 4 classes, ARI $= 0.89$), consistent with the continuity of belief posteriors in $b_0$.

\section{Notation Reference}
\label{app:notation}

\begin{table}[H]
\centering
\caption{Principal notation.}
\small
\begin{tabular}{@{}ll@{}}
\toprule
\textbf{Symbol} & \textbf{Meaning} \\
\midrule
$M = \langle S,A,O,P,Z,R,b_0\rangle$ & Finite POMDP \\
$S, A, O$ & State, action, observation sets \\
$P(s,a,s')$ & Transition kernel $\bP(s' \mid s, a)$ \\
$Z(s',a,o)$ & Observation kernel $\bP(o \mid s', a)$ \\
$R(s,a)$ & Reward function \\
$b_t \in \Delta(S)$ & Belief state at time $t$ \\
$\pi = \langle N, \alpha, \beta, n_0\rangle$ & Stochastic FSC \\
$m$ & Memory bound (max FSC nodes) \\
$T$ & Finite horizon \\
$\Pi_{m,T}$ & Set of stochastic FSCs with $\leq m$ nodes \\
$d_O$ & Ground metric on observations \\
$\delta_O$ & Observation resolution threshold \\
$|O_\delta|$ & $\delta_O$-covering number of observation space \\
$Q_{m,T,\delta}$ & Quotient POMDP under $(m,T,\delta_O)$-bounded agents \\
$\cW_1$ & 1-Wasserstein distance \\
$D_{m,T}^{\cW}(M,N)$ & Closed-loop Wasserstein pseudometric \\
$h = (o_1, \ldots, o_t)$ & Observation history \\
$\equiv_{m,T}$ & Exact bounded-interaction equivalence \\
$[h]$, $[h]_\varepsilon$ & Equivalence class (exact-for-family / $\varepsilon$-approximate) \\
$Q_{m,T}(M)$ & Quotient POMDP \\
$\bar{b}_{[h]}$ & Aggregated belief for class $[h]$ \\
$V_M^\pi$ & Value of policy $\pi$ in POMDP $M$ \\
\bottomrule
\end{tabular}
\end{table}

\end{document}